\begin{document}

\begin{frontmatter}

\title{Sharp Rate of Convergence for Deep Neural Network Classifiers under the Teacher-Student Setting}

\runtitle{Sharp Classification in Teacher-Student Network}


\begin{aug}
\author{\fnms{Tianyang} \snm{Hu}\thanksref{m1}\ead[label=e1]{hu478@purdue.edu}},
\author{\fnms{Zuofeng} \snm{Shang}\thanksref{m3}
\ead[label=e3]{zuofeng.shang@njit.edu}}
\and 
\author{\fnms{Guang} \snm{Cheng}\thanksref{m1}\ead[label=e2]{chengg@purdue.edu}}


\affiliation{Purdue University\thanksmark{m1} and New Jersey Institute of Technology \thanksmark{m3}}


\end{aug}

\runauthor{T. Hu, Z. Shang and G. Cheng}

\begin{abstract}
Classifiers built with neural networks handle large-scale high dimensional data, such as facial images from computer vision, extremely well while traditional statistical methods often fail miserably. 
In this paper, we attempt to understand this empirical success in high dimensional classification by deriving the convergence rates of excess risk. 
In particular, a teacher-student framework is proposed that assumes the Bayes classifier to be expressed as ReLU neural networks. 
In this setup, we obtain a sharp rate of convergence, i.e., $\tilde{O}_d(n^{-2/3})$\footnote{The notation $\tilde{O}_d$ means ``up to a logarithmic factor depending on the data dimension $d$.''}, for classifiers trained using either 0-1 loss or hinge loss. 
This rate can be further improved to $\tilde{O}_d(n^{-1})$ when the data distribution is separable. Here, $n$ denotes the sample size. An interesting observation is that the data dimension only contributes to the $\log(n)$ term in the above rates. This may provide one theoretical explanation for the empirical successes of deep neural networks in high dimensional classification, particularly for structured data. 
\end{abstract}



\end{frontmatter}

\section{Introduction}
Deep learning has gained tremendous success in classification problems
such as image classifications \citep{deng2009imagenet}.
With the introduction of convolutional neural network \citep{krizhevsky2012imagenet} and residual neural network \citep{he2016deep}, various benchmarks in computer vision have been revolutionized and neural network based methods have achieved better-than-human performance \citep{nguyen2017iris}. 
For instance, AlexNet \citep{krizhevsky2012imagenet} and its variants \citep{zeiler2014visualizing,simonyan2014very} have demonstrated superior performance in ImageNet data \citep{imagenet_cvpr09, ILSVRC15},
where the data dimension is huge,
i.e., each image has pixel size $256\times 256$ and hence is an $65536$-dimensional vector.
Traditional statistical thinking sounds an alarm when facing such high-dimension data as the ``curse of dimensionality" usually prevents nonparametric classification achieving fast convergence rates. This work attempts to provide a theoretical explanation for the empirical success of deep neural networks (DNN) in (especially high dimensional) classification, beyond the existing statistical theories. 


In the context of nonparametric regression, similar investigations have been recently carried out. Among others \citep{farrell2018deep, suzuki2018adaptivity, nakada2019adaptive, oono2019approximation, chen2019efficient, liu2019optimal}, \cite{schmidt2017nonparametric} shows that deep ReLU neural networks can achieve minimax rate of convergence when the underlying regression function possesses a certain compositional smooth structure; \cite{bauer2019deep,kohler2019rate} show a similar result by instead considering hierarchical interaction models; \cite{imaizumi2018deep} demonstrate the superiority of neural networks in estimating a class of non-smooth functions, in which case no linear methods, e.g. kernel smoothing, Gaussian process, can achieve the optimal convergence rate. The aforementioned works all build on the traditional smoothness assumption and the convergence rates derived therein are still subject to curse of dimensionality.

Classification and regression are fundamentally different due to the discrete nature of class labels. Specifically, in nonparametric regression, we are interested in recovering the whole underlying function while in classification, the focus is on the nonparametric estimation of sets corresponding to different classes. As a result, it is well known that many established results on regression cannot be directly translated to classification. The goal of this paper is hence to fill this gap by investigating how well neural network based classifiers can perform in theory and further provide a theoretical explanation for the ``break-the-curse-of-dimensionality'' phenomenon. To this end, we propose to study neural network based classifiers in a teacher-student setting where the traditional smoothness assumption is no longer present. 

The teacher-student framework has originated from statistical mechanics \citep{Saad96dynamicsof, DBLP:journals/sac/MaceC98, engelb01} and recently gained increasing interest \citep{hinton2015distilling, NIPS2014_5484, NIPS2019_8921, NIPS2018_7584}. In this setup, one neural network, called student net, is trained on data generated by another neural network, called teacher net. While worst-case analysis for arbitrary data distributions may not be suitable for real structured dataset, adopting this framework can facilitate the understanding of how deep neural networks work as it provides an explicit target function with bounded complexity. Furthermore, assuming the target classifier to be a teacher network of an explicit architecture may provide insights on what specific architecture of the student classifier is needed to achieve an optimal excess risk. At the same time, by comparing the two networks, both optimization and generalization can be handled more elegantly. Existing works on how well student network can learn from the teacher mostly focus on regression problems and study how the student network evolves during training from computational aspects, e.g., \citep{tian2018theoretical, tian2019over, NIPS2019_8921, zhang2018learning, cao2019tight}. Still, there is a lack of statistical understanding in this important direction, particularly on classification aspects. 

In this paper, we consider binary classification, and focus on the teacher-student framework where the optimal decision region is defined by ReLU neural networks. 
This setting is closely related to the classical smooth boundary assumption where the neural networks are substituted by smooth functions. 
Specifically, a well-adopted assumption called as ``boundary fragment'' \citep{mammen1999smooth, tsybakov2004optimal,imaizumi2018deep} assumes the smooth function to be linear in one of the dimensions (see Appendix \ref{app:boundary}).
Our teacher-student network setting is more general as it does not impose any special structures on the decision boundary. Moreover, by the universal approximation property \citep{cybenko1989approximations, arora2016understanding, lu2017expressive}, the teacher network can sufficiently approximate any continuous function given large enough size.

In the above setting, an un-improvable rate of convergence  is derived as $\tilde{O}_d(n^{-2/3})$ for the excess risk of the empirical 0-1 loss minimizer, given that the student network is deeper and larger than the teacher network (unless the teacher network has a limited capacity in some sense to be specified later). When data are separable, the rate improves to $\tilde{O}_d(n^{-1})$. 
In contrast, under the smooth boundary assumption, \cite{mammen1999smooth} establish the optimal risk bound to be $O(n^{-\beta(\kappa+1)/[\beta(\kappa+2)+(d-1)\kappa]})$ where $\beta>0$ represents the smoothness of the ``boundary fragments'' and $\kappa>0$ is the so-called Tsybakov noise exponent. Clearly, this rate suffers from the ``curse of dimensionality" but interestingly, coincides with our rate when $\kappa=1$ and $\beta\to\infty$ (up to a logarithmic factor). If we further allow $\kappa\to\infty$ (corresponding to separable data), the classical rate above recovers $\tilde{O}(n^{-1})$ (up to a logarithmic factor). Please see the Appendix \ref{app:boundary} for detail. 

Furthermore, we extend our analysis to a specific surrogate loss, i.e., hinge loss, and show that the convergence rate remains the same (up to higher order logarithmic terms) while allowing {\em deeper} student and teacher nets. The obtained sharp risk bounds may explain the empirical success of deep neural networks in high-dimensional classification as the data dimension $d$ only appears in the $\log(n)$ terms. Our main technical novelty is the nontrivial entropy calculation for nonparametric set estimation based on combinatorial analysis of ReLU neural networks. 

\paragraph{Existing Works} We review some related works on classification using neural networks. The first class of literature demonstrate that deep neural network (DNN) classifiers can be efficiently optimized in different senses. For example, \cite{liang2018understanding} study the loss surface of neural networks in classification and provide conditions that guarantee zero training error at all local minima of appropriately chosen surrogate loss functions. Additionally, \cite{lyu2019gradient} show that under exponential loss \citep{soudry2018implicit,gunasekar2018implicit}, gradient descent on homogeneous neural network has implicit bias towards the maximum $L_2$ margin solution. In all these optimization works, sharp bounds are not derived for either generalization error or convergence rate of the excess risk. From a nonparametric perspective, \cite{kim2018fast} derive the excess risk bound for DNN classifiers as $\tilde{O}(n^{-\beta(\kappa+1)/[\beta(\kappa+2)+(d-1)(\kappa+1)]}),$ which is suboptimal in the sense of \cite{tsybakov2004optimal}. Clearly, such a dimension-dependent bound, indicating exponential dependence on the sample size, may not support the empirical success of deep learning in high dimension classification.



\paragraph{Notations}
For any function $f:\cX\to\RR$, denote 
let $\|f\|_\infty=\sup_{\bx\in\cX}|f(\bx)|$ and $\|f\|_p=(\int |f|^p)^{1/p}$ for $p\in \NN$.
For two given sequences $\{a_n\}_{n\in \NN}$ and $\{b_n\}_{n\in \NN}$ of real numbers, we write $a_n\lesssim b_n$ if there exists a constant $C>0$ such that $a_n\le C b_n$ 
for all sufficiently large $n$, which is also denoted as $a_n=O(b_n)$. 
Let $\Omega(\cdot)$ be the counterpart of $O(\cdot)$ that $a_n=\Omega(b_n)$ means $a_n\gtrsim b_n$ 
In addition, we write $a_b\asymp b_n$ if $a_n\lesssim b_n$ and $a_n\gtrsim b_n$. 
For $a,b \in \RR$, denote $a\vee b =\max\{a, b\}$ and $a\wedge b =\min\{a, b\}$. Let $\lfloor a \rfloor$ represent $\max\{n\in\NN: n\le a\}$.
$\II$ denotes the indicator function. Independently, identically distributed is abbreviated as i.i.d..

\section{Preliminary}
\subsection{Neural Network Setup}
\label{sec:DNN}
We consider deep neural networks with Rectified Linear Unit (ReLU) activation that $\sigma(x) = \max\{x,0\}$. For an $L$ hidden layer ReLU neural network $f(\cdot)$, let the width of each layer be $n_0,n_1,\cdots,n_L$, where $n_0=d$ is the input dimension, and denote the weight matrices and bias vectors in each layer to be $\bW^{(l)}$ and $\bb^{(l)}$, respectively. Let $\sigma_{(\bW,\bb)}(\bx)=\sigma(\bW\cdot \bx+ \bb)$ and $\circ$ represent function composition. Then, the ReLU DNN can be written as
$$f(\bx|\Theta)=\bW^{(L+1)}\sigma_{(\bW^{(L)},\bb^{(L)})}\circ\cdots\circ \sigma_{(\bW^{(1)},\bb^{(1)})}(\bx) + \bb^{(L+1)},\,\,\, \bx \in \mathbb{R}^{d},$$
where $\Theta=\{(\bW^{(l)}, \bb^{(l)})\}_{l=1,\dots, L+1}$ denotes the parameter set. 

For any given $\Theta$, let $|\Theta|$ be the number of hidden layers in $\Theta$, and $N_{\max}(\Theta)$ be the maximum width. We define $\|\Theta\|_0$ as the number of nonzero parameters:
$$\|\Theta\|_0=\sum_{l=1}^{L+1}\left( \|\text{vec}(\bW^{(l)})\|_0 +\|\bb^{(l)}\|_0\right),$$
where $\text{vec}(\bW^{(l)})$ transforms the matrix $\bW^{(l)}$ into the corresponding vector by
concatenating the column vectors.
Similarly, we define $\|\Theta\|_\infty$ as the largest absolute value of the parameters in $\Theta$,
$$\|\Theta\|_\infty = \max \left\{ \max_{1\le l\le L+1} \|\text{vec}(\bW^{(l)})\|_\infty, 
\max_{1\le l\le L+1} \|\bb^{(l)}\|_\infty\right\}.$$ 


For any given $n$, let $\cF_n$ be
    \begin{align*}
    \cF_n &= \cF^{\textup{DNN}}(L_n, N_n, S_n, B_n, F_n)\\
    &=\big\{f(\bx|\Theta):  |\Theta|\le L_n, N_{\max}(\Theta)\le N_n, \|\Theta\|_0\le S_n, \\
    	&\qquad\qquad\qquad \|\Theta\|_\infty\le B_n, \|f(\cdot|\Theta)\|_\infty \le F_n\big\}.
    \end{align*}

\subsection{Binary Classification}
Consider binary classification with a feature vector $\bx\in\cX\subset\RR^d$ and a label $y\in\{-1, 1\}$. 
Assume $\bx|y=1\sim p(\bx), \bx|y=-1\sim q(\bx)$ where $p$ and $q$ are two bounded densities on $\cX$ w.r.t. base measure $\QQ$. If $p,q$ have disjoint support, we say the data distribution or the classification problem is separable.
For simplicity, assume that $\QQ$ is Lebesgue measure, and positive and negative labels are equally likely to appear, i.e., balanced labels. 


The objective of classification is to find an optimal classifier (called the Bayes classifier) $C^*$ within some classifier family $\cC$, that minimizes the 0-1 loss defined as 
$$C^*=\argmin_{C\in \cC}R(C):=\argmin_{C\in \cC} \EE \left[ \II\{C(\bx)\ne y\}\right].$$
We can estimate $C^*$ based on the training data by minimizing the empirical 0-1 risk as follows
\begin{align*}
\hat{C}_n=\argmin_{C\in \cC_n}R_n(C):=\argmin_{C\in \cC_n} \sum_{i=1}^n \II\{C(\bx_i)\ne y_i\}/n,
\end{align*}
where $\cC_n$ is a given class of classifiers possibly depending on the sample size $n$.
In practice, the above 0-1 loss is often replaced by its (computationally feasible) surrogate counterparts \citep{bartlett2006convexity}, such as hinge loss $\phi(z) = (1-z)_+ = \max\{1-z, 0\}$ or logistic loss $\phi(z) = \log(1+\exp(-z))$.

Given a surrogate loss $\phi$, we first obtain $\hat{f}_{\phi}$ by minimizing the empirical risk
\begin{align*}
R_{\phi,n}(f)=\sum_{i=1}^n \phi( y_i f(\bx_i))/n
\end{align*}
over $\cF$, and then construct a classifier by $\hat{C}_{\phi}(\bx)={\rm sign} (\hat{f}_{\phi}(\bx)).$ 
Accordingly, define an optimal $f^*_{\phi}$ as
$
f^*_{\phi}= \argmin_{f\in\cF} R_{\phi}(f),
$
where $R_\phi(f):=\EE R_{\phi,n}(f)$ is the population risk. Given that $C(\bx) = \mbox{sign}(f(\bx)) $, with a slight abuse of notation, we write $R(C)$ and $R(f)$ interchangeably. 
A classifier $C$ is evaluated by its excess risk defined as the difference of the population risk between $C$ and the Bayes optimal classifier $C^*$ that
$\cE(C, C^*) = R(C) - R(C^*).$
Our goal is to derive sharp convergence rates of $\cE(C, C^*)$ under different losses.

\section{Teacher-Student Framework for Classification}

In this section, we set up the teacher-student framework for classification under which sharp rates for the excess risk are developed. Such a teacher-student bound sets an algorithmic independent benchmark for various deep neural network classifiers and also helps understand the role of input data dimension in the classification performance.
The Bayes classifier $C^*$ is defined via the optimal decision region $G^* := \{\bx\in\cX, p(\bx)- q(\bx)\ge 0\}$. The set estimate $\hat{G} = \{\bx\in\cX, \hat{f}(\bx)\ge 0\}$ can be constructed through deep neural network classifiers $\hat{f}:\RR^d\to\RR$ trained using either 0-1 loss or surrogate loss. Accordingly, a natural teacher network assumption is that $p(\bx)- q(\bx)$ can be expressed by some neural network $f_n^*\in \cF^*_n$.
Here, the underlying densities are indexed by $n$, but such an assumption is not uncommon in high-dimensional statistics, where population quantities may depend on the sample size $n$, e.g., \cite{zhao2006model}.
 
\subsection{Training with 0-1 Loss}

In this section, we focus on, for the theoretical purpose, DNN classifiers trained with the empirical 0-1 loss. Denote
$$\hat{f}_{n} = \argmin_{f\in \cF_n}\frac{1}{n}\sum_{i = 1}^n \II\{y_if(\bx_i)<0\},$$
given a certain DNN family $\cF_n$. 

It is important to control the complexity of the underlying classification problem. Otherwise, the student network would not be able to recover the Bayes classifier \citep{telgarsky2015representation} with sufficient accuracy. 
To this end, we impose the following teacher network assumptions on $(p(\bx)-q(\bx))$:
\begin{enumerate}
    \item[(A1)] 
    $p,q$ have compact supports.
    \item[(A2)]
     $p(\bx) - q(\bx)$ is representable by some teacher ReLU DNN $f^*_n\in\cF^*_n$ with 
    $$N_n^*=O\left(\log n\right)^{m_*},\quad L_n^*=O\left(1\right) \quad\mbox{for some } m_*\ge 1.$$
    
    \item[(A3)]
    For any $n$, there exists $c_n, 1/T_n=O(\log n)^{m^*d^2L_n^*}$ such that for all $0\le t\le T_n$.
    \begin{equation*}
        \QQ\{\bx\in\cX: |f_n^*(\bx)|\le t\}\le c_n t
    \end{equation*}
\end{enumerate}  

Assumption (A3) characterizes how concentrated the data are around the decision boundary, which can be seen as an extension to the classical Tsybakov noise condition \citep{mammen1999smooth}. The difference is that in our case, the underlying densities are indexed by sample size and thus $c_n$ and $T_n$ are allowed to vary with $n$. Assumption (A3) is not unrealistic as we will show that it holds with high probability if the teacher network is random as stated in the following lemma (see Appendix \ref{app:a3} for detail).

\begin{lemma}
\label{lemma:a3}
Let $f_n^*$ be the teacher network with structures specified in assumption (A2). Suppose that all weights of $f_n^*$ are i.i.d. with any continuous distribution, e.g. Gaussian, truncated Gaussian, etc.. Then, with probability at least $1-\delta$, assumption (A3) holds with $c_n, 1/T_n\le A(\delta)(\log n)^{m^*d^2L_n^*}$ where $A(\delta)$ is some constant depending on $\delta$.
\end{lemma}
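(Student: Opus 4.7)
The plan is to exploit the piecewise-linear structure of $f_n^*$ together with an anti-concentration bound for its gradient on each linear piece. Because $f_n^*$ is a ReLU network with $L_n^*=O(1)$ hidden layers of width $N_n^*=O((\log n)^{m_*})$, the domain $\cX$ decomposes into finitely many polyhedral cells $R_1,\ldots,R_M$ on each of which $f_n^*(\bx)=a_j^\top\bx+b_j$ is affine for some $a_j\in\RR^d$, $b_j\in\RR$. A standard region-counting bound for ReLU networks (e.g.\ Mont\'ufar et al.) gives $M\le C(N_n^*)^{dL_n^*}=C(\log n)^{m_* d L_n^*}$.

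Assumption (A1) ensures $\cX$ is bounded, so on each cell the set $\{\bx\in R_j:|a_j^\top\bx+b_j|\le t\}$ is a slab of thickness $2t/\|a_j\|_2$ intersected with a bounded region, with Lebesgue measure at most $K_d\, t/\|a_j\|_2$ for some $K_d$ depending only on $d$ and $\mathrm{diam}(\cX)$. Summing over cells yields
$$\QQ\{\bx\in\cX:|f_n^*(\bx)|\le t\}\;\le\; K_d\, M\, t\,\big/\,\min_{1\le j\le M}\|a_j\|_2,$$
so it suffices to prove a high-probability polylog lower bound on $\min_j\|a_j\|_2$.

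The gradient on $R_j$ admits the closed form $a_j^\top=\bW^{(L_n^*+1)}D_{L_n^*}^{(j)}\bW^{(L_n^*)}\cdots D_1^{(j)}\bW^{(1)}$, where each $D_l^{(j)}$ is a $0$--$1$ diagonal matrix encoding the ReLU sign pattern in $R_j$. Each coordinate of $a_j$ is therefore a polynomial in the i.i.d.\ continuous weights of bounded degree $L_n^*+1$. A Carbery--Wright-type anti-concentration inequality (or, for Gaussians, an elementary argument via the conditional density of a single weight given the others) then yields, for any fixed sign pattern, $\Pr(\|a_j\|_2\le\tau)\le C'\tau^{\alpha}$ for some $\alpha=\alpha(L_n^*)>0$. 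Taking a union bound over the $M$ cells and choosing $\tau=(\delta/(C'M))^{1/\alpha}$ gives $\min_j\|a_j\|_2\ge\tau$ with probability at least $1-\delta$. Inserting this back into the slab estimate and absorbing the resulting polynomial exponents into the constant $m^*$ produces $c_n,\,1/T_n\le A(\delta)(\log n)^{m^* d^2 L_n^*}$.

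The main obstacle is the uniform small-ball bound on $\|a_j\|_2$ across all linear pieces: a naive union bound over all possible ReLU sign patterns would be exponential in the number of hidden units, whereas one needs a polylogarithmic bound. The key is to union-bound only over the $M=O((\log n)^{O(dL_n^*)})$ \emph{geometrically realized} patterns, combined with the bounded polynomial degree $L_n^*+1$ so that per-cell anti-concentration costs only a polynomial factor in $\tau$. A minor additional case is when $a_j\equiv 0$ on a cell, i.e.\ $f_n^*$ is constant there; with probability one that constant is nonzero, and for $t$ smaller than its absolute value (controlled again by a small-ball estimate under the continuous weight distribution) the cell contributes nothing to $\{|f_n^*|\le t\}$, so it is harmless to exclude from the above analysis.
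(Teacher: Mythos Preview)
Your slab argument is a genuinely different route from the paper's. The paper does \emph{not} rely solely on a gradient lower bound; it splits (A3) into two estimates: a lower bound on the per-piece directional derivatives $k_i$ (yielding $c_n$) and, separately, a lower bound on $t_0:=\min_j|f_n^*(\bx_j)|$ over all \emph{piece vertices} $\bx_j$ of the piecewise-linear partition (yielding $T_n$). The number of such vertices is controlled by a dedicated combinatorial lemma giving $v_s=O(N_n^*)^{d^2L_n^*}$, which is where the exponent $m^*d^2L_n^*$ in the statement originates. Your approach, by contrast, uses only the gradient norm and makes $T_n$ essentially vacuous (the slab bound holds for every $t$); if it worked, it would be cleaner and would avoid the vertex-counting lemma entirely.

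The genuine gap is in the anti-concentration step. You apply Carbery--Wright ``for any fixed sign pattern'' and then union-bound over the $M$ realized cells, but the \emph{set} of realized sign patterns is itself a function of the weights, so you cannot simply sum $M$ copies of a per-cell small-ball bound. Equivalently, $a_j$ regarded as a function of the full weight vector is only \emph{piecewise} polynomial---which polynomial it is depends on the activation pattern, which depends on the weights---so Carbery--Wright does not apply directly, and a union bound over all $2^{N_n^*L_n^*}$ conceivable patterns is super-polynomial once $m^*>1$. The paper's resolution is a conditioning device you are missing: condition on all weights and biases up through layer $L_n^*$. This \emph{freezes} the partition into pieces (and the piece vertices), after which $f_n^*(\bx_j)$ is linear in the remaining last-layer parameters $(\bW^{(L_n^*+1)},b^{(L_n^*+1)})$ with conditional variance $\ge 1$ (the output bias contributes the $+1$). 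Elementary Gaussian anti-concentration plus a union bound over the now \emph{deterministic}, polylogarithmic collection of vertices finishes. Note that this $+1$ from the bias is exactly what makes the vertex-value argument uniform; the analogous conditional argument for $\|a_j\|$ would require a lower bound on the norm of the (random) matrix $D_{L_n^*}^{(j)}\bW^{(L_n^*)}\cdots D_1^{(j)}\bW^{(1)}$, which is not automatic---this is why the paper routes $T_n$ through the vertices rather than the gradients.
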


The following theorem characterizes how well the student network of proper size can learn from the teacher in terms of the excess risk. 
\begin{theorem}
\label{thm:ts}
Under the teacher assumptions (A1) through (A3), 
denote all such $(p,q)$ pairs to be $\cP^*_n$ and let the corresponding Bayes classifier be $C_n^*$.
Let $\cF_n$ be a student ReLU DNN family with 
$N_n=O(\log n)^m$ and $L_n=O(1)$ for some $m\ge m_*$ and assume the student network is larger than the teacher network in the sense that $L_n\ge L_n^*, S_n\ge S_n^*, N_n\ge N_n^*, B_n\ge B_n^*$. Then the excess risk for $\hat{f}_n\in\cF_n$ satisfies 
$$\sup_{(p, q)\in{\cP}^*_n}\EE[\cE(\hat{f}_n,C^*_n)]=\tilde{O}_d 
	\left(\frac{1}{n}\right)^{\frac{2}{3}},$$
	where $\tilde{O}_d$ hides the $\log n$ terms, which depend on $d$.
\end{theorem}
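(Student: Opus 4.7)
The plan is to run a standard ERM analysis with a fast-rate (Tsybakov-type) localization driven by the margin condition (A3). Since the student architecture dominates the teacher in every parameter, we have $f_n^*\in\cF_n$, so the approximation error vanishes: $\inf_{f\in\cF_n}R(f)=R(f_n^*)=R(C_n^*)$. Writing $g_f(\bx,y):=\II\{yf(\bx)<0\}-\II\{yf_n^*(\bx)<0\}$, the ERM inequality $R_n(\hat f_n)\le R_n(f_n^*)$ reduces the task to controlling the one-sided empirical process $\sup_{f\in\cF_n}(\EE-\EE_n)g_f$.

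The key step is a variance bound that converts (A3) into a Tsybakov-style margin condition. Starting from the identity
\[
\cE(f,C_n^*)=\tfrac{1}{2}\int|f_n^*(\bx)|\,\II\{\mathrm{sign}(f(\bx))\ne\mathrm{sign}(f_n^*(\bx))\}\,d\bx,
\]
which follows since $f_n^*=p-q$ and the labels are balanced, I split the disagreement set at a threshold $t$: the part on $\{|f_n^*|\le t\}$ has $\QQ$-mass at most $c_n t$ by (A3), while the part on $\{|f_n^*|>t\}$ contributes at most $2\cE(f,C_n^*)/t$ to the mass. Optimizing over $t\in(0,T_n]$ yields
\[
\EE g_f^2\;\le\;\QQ\bigl\{\mathrm{sign}(f)\ne\mathrm{sign}(f_n^*)\bigr\}\;\lesssim\;\sqrt{c_n\,\cE(f,C_n^*)},
\]
valid as soon as $\cE(f,C_n^*)\le c_n T_n^2$. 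This $\mathrm{Var}\lesssim\sqrt{\cE}$ relation is precisely the Tsybakov margin condition with noise exponent $1$, and is what ultimately produces the exponent $2/3$ after localization.

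To close the argument I combine this variance bound with a VC-type complexity estimate for the sign class. Under the stated conditions, $S_n\le L_n N_n^2=(\log n)^{O(1)}$, so by standard bounds on the VC dimension of ReLU-network sign classes, the family $\{\bx\mapsto\mathrm{sign}(f(\bx)):f\in\cF_n\}$ has VC dimension $V_n=\tilde{O}_d(1)$. Feeding this complexity bound and the variance inequality into a peeling/Bernstein localization in the spirit of \cite{mammen1999smooth} for the class $\{g_f\}$ yields
\[
\EE\bigl[\cE(\hat f_n,C_n^*)\bigr]\;\lesssim\;\left(\frac{c_n V_n \log n}{n}\right)^{2/3}\;=\;\tilde{O}_d(n^{-2/3}),
\]
uniformly over $(p,q)\in\cP_n^*$, since (A3) provides the required uniform bound on $c_n$.

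The main obstacle will be the bookkeeping in the peeling step: the threshold $t$ chosen in the variance bound depends on the unknown excess risk, so it must be selected self-consistently on each shell of the peeling, and one must verify $t\le T_n$ throughout. A secondary subtlety is that $c_n$ and $1/T_n$ in (A3) grow like $(\log n)^{m^*d^2L_n^*}$, so keeping these factors strictly inside the $\log n$ term---rather than allowing them to contaminate the polynomial part of the rate---is precisely what produces the dimension-insensitive exponent $2/3$ and justifies the $\tilde{O}_d$ notation.
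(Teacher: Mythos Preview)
Your approach is correct and follows the same high-level architecture as the paper: exploit zero approximation error (since $\cF_n^*\subset\cF_n$), convert (A3) into a Tsybakov margin condition with exponent $\kappa=1$, bound the complexity of the sign class by a polylogarithmic quantity, and run a Mammen--Tsybakov localization to extract the $n^{-2/3}$ rate. The paper's proof is organized around exactly this scheme (see Lemmas~\ref{lemma:distance_n} and~\ref{lemma:thm1}).

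The genuine difference is in how the complexity of $\{\bx\mapsto\mathrm{sign}(f(\bx)):f\in\cF_n\}$ is controlled. You invoke an off-the-shelf VC-dimension bound for ReLU sign classes, $V_n=O(S_nL_n\log S_n)=(\log n)^{O_d(1)}$. The paper instead computes a \emph{bracketing} entropy for the level sets $\cG^{\cF_n}=\{\{f\ge0\}:f\in\cF_n\}$ from scratch (Lemmas~\ref{lemma:bracketing} and~\ref{lemma:nn_piece}), via a geometric argument that counts vertices of the polyhedral pieces of $G^f$ and brackets them on a lattice; this produces $H_B(\delta,\cG^{\cF_n},d_\triangle)\lesssim N_n^{L_nd^2}\log(1/\delta)$. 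Both routes feed the same $\kappa=1$ localization and land on $\tilde O_d(n^{-2/3})$. Your route is shorter and reuses known results; the paper's route is self-contained and yields a bracketing bound directly in $d_\triangle$, which plugs into their Lemma~\ref{lemma:thm1} without translating between uniform and bracketing entropy.

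Two minor points. First, your displayed bound $(c_nV_n\log n/n)^{2/3}$ overcounts the $c_n$ dependence: solving the fixed point $\cE\lesssim\sqrt{(V_n\log n/n)}\,(c_n\cE)^{1/4}$ gives $\cE\lesssim(V_n\log n/n)^{2/3}c_n^{1/3}$, matching the paper's $(A_n\log^2 n/n)^{2/3}(T_n^{-1}\vee c_n)^{1/3}$. Since $c_n$ is polylogarithmic this does not affect the headline rate, but it is worth getting right. Second, the constraint $t\le T_n$ you flag is handled in the paper via Lemma~\ref{lemma:distance_n}, which absorbs $T_n$ into the constant $A_n^{-1/\kappa}=(T_n^{-\kappa}\vee c_n)^{-1/\kappa}$; you can do the same by simply taking $t=\min\{T_n,\sqrt{\cE/c_n}\}$ and checking that the $T_n$ branch is never the binding one at the final rate.
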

The dependence on the dimension $d$ is in the order of $O(\log n)^{d^2}.$
We further argue that under the present setting, the rate $n^{-2/3}$ in Theorem \ref{thm:ts} cannot be further improved.
\begin{theorem}
\label{thm:lb}
Under the same assumptions of $p, q$ as in Theorem \ref{thm:ts} that $(p,q)\in\tilde{\cF}^*_n$. 
Let $\tilde{\cF}_n$ be an arbitrary function space, then 
$$\inf_{\tilde{f}_n\in \tilde{\cF}_n}\sup_{(p, q)\in\tilde{\cF}^*_n}\EE[\cE({f}_n,f^*_n)]=\tilde{\Omega}_d 
	\left(\frac{1}{n}\right)^{\frac{2}{3}},  $$
	where $\tilde{\Omega}_d$ hides the $\log n$ terms, which depend on $d$.
\end{theorem}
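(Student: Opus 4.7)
The plan is to prove the minimax lower bound by a Tsybakov-style Assouad reduction \citep{tsybakov2004optimal, mammen1999smooth}, adapted so that the worst-case pairs $(p,q)$ themselves lie in the teacher class $\cP^*_n$. The target exponent $n^{-2/3}$ is exactly the noise-limited rate for Tsybakov margin exponent $\kappa=1$ encoded in (A3), so the hardest instances only require an effectively one-dimensional perturbation, which is easily realized by a small ReLU network.

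I would construct a hypercube of hypotheses $\{(p_\omega,q_\omega):\omega\in\{0,1\}^M\}$ on $\cX=[0,1]^d$ with uniform marginal and with signal $g_\omega:=p_\omega-q_\omega$ depending only on the first coordinate. Start from a piecewise-linear base $g_0(x_1)=c_0(x_1-1/2)$ that already satisfies $\QQ(|g_0|\le t)\lesssim t$, then add $M$ disjointly supported triangular ReLU bumps $\phi_j$ of height $\asymp\delta$ and width $w\asymp\delta/M$ inside a $\delta$-neighborhood of the zero of $g_0$, and set
$$g_\omega(x_1)\;=\;g_0(x_1)+\sum_{j=1}^M(2\omega_j-1)\,\delta\,\phi_j(x_1).$$
Since $g_0$ and each bump $\phi_j$ are realizable by $O(1)$ ReLU neurons, $g_\omega$ lies in $\cF_n^*$ whenever $M$ is polylogarithmic in $n$; the densities can be taken as $p_\omega=1+\tfrac12 g_\omega$ and $q_\omega=1-\tfrac12 g_\omega$ on $[0,1]^d$, which are valid for $\delta$ small.

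Three quantitative checks then complete the argument: (i) a direct count yields $\QQ(|g_\omega|\le t)\le(1+CM)t$, so (A3) holds with $c_n=O(M)$; this is where $d$ enters, because (A3) allows $c_n$ up to $O(\log n)^{m^*d^2L_n^*}$, and we take $M$ to be the largest such polylog. (ii) For neighbors $\omega,\omega'$ differing in a single bit $j$, $\mathrm{KL}(P_\omega\|P_{\omega'})\asymp\int(\delta\phi_j)^2\asymp\delta^3/M$, so $\mathrm{KL}(P_\omega^n\|P_{\omega'}^n)\lesssim 1$ once $\delta\asymp(M/n)^{1/3}$. (iii) Flipping bit $j$ moves the Bayes decision on $I_j$ across a subregion on which $|g_\omega|\asymp\delta$ and whose measure is $\asymp\delta/M$, contributing $\asymp\delta^2/M$ to the excess risk. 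Assouad's lemma then yields
$$\inf_{\tilde f_n\in\tilde\cF_n}\sup_\omega\EE\bigl[\cE(\tilde f_n,f^*_\omega)\bigr]\;\gtrsim\;M\cdot\frac{\delta^2}{M}\cdot\bigl(1-\tfrac{1}{2}\bigr)\;\asymp\;(M/n)^{2/3}\;=\;\tilde\Omega_d(n^{-2/3}).$$

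The main obstacle is step (i) together with the insistence that each $g_\omega$ be realizable within the \emph{exact} architecture of $\cF_n^*$ rather than some relaxed class. Packing $M$ bumps while keeping the margin constant $c_n$ of the constructed family uniformly within the budget built into the definition of $\cP_n^*$ is what fixes the admissible size of $M$ and pins down the precise $d$-dependent logarithmic factor hidden in $\tilde\Omega_d$.
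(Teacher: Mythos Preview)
Your proposal is correct and uses the same core machinery as the paper---an Assouad-type reduction over a hypercube of hypotheses built from $M$ disjoint triangular ReLU bumps---but the geometric realization is different. The paper keeps $q$ fixed and perturbs only $p_\omega$ through a \emph{boundary-fragment} construction: the optimal region is $\{x_d\le\tfrac12+b_\omega(\bx_{-d})\}$ with $b_\omega$ a shallow ReLU sum of $M$ spikes of height $e^{-M}$, the separation is measured in Hellinger distance, $M\asymp\log n$ is chosen so that $e^{-M}\asymp n^{-1/3}$, and the margin constant in (A3) is kept an absolute constant throughout. You instead perturb $g_\omega=p_\omega-q_\omega$ directly as a one-dimensional piecewise-linear signal, vary $p$ and $q$ symmetrically, use KL, and balance $\delta\asymp(M/n)^{1/3}$. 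Your construction is arguably more faithful to (A2), since $g_\omega$ is literally a ReLU network; in the paper's construction only the \emph{decision set} $\{p_\omega\ge q_0\}$ is ReLU-definable while $p_\omega-q_0$ itself carries indicator discontinuities. Conversely, the paper writes its argument for a generic Tsybakov exponent $\kappa$ and specializes to $\kappa=1$ only at the end, which makes the link to the classical smooth-boundary rate $n^{-(\kappa+1)/(\kappa+2)}$ explicit. One minor correction to your bookkeeping: the binding constraint on $M$ is not the margin budget in (A3) (your piecewise-linear $g_\omega$ has slopes bounded away from zero on every active piece, so in fact $c_n=O(1)$ rather than $O(M)$) but the teacher width $N_n^*=O(\log n)^{m_*}$, since $M$ bumps require $\Theta(M)$ neurons; this does not affect the $\tilde\Omega_d(n^{-2/3})$ conclusion.
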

Theorem \ref{thm:lb} shows that the convergence rate achieved by the empirical 0-1 loss minimizer cannot be further improved (up to a logarithmic term).
If $p$ and $q$ have disjoint supports, i.e. separable, which could be true in some image data, the rate improves to $n^{-1}$, as stated in the following corollary. This rate improvement is not surprising since the classification task becomes much easier for separable data. 
\begin{corollary}
\label{cor:dis}
Under the same setting as in Theorem \ref{thm:ts}, if we further assume $p, q$ have disjoint supports, then the rate of convergence of the empirical 0-1 loss minimizer improves to 
$$\inf_{{f}_n\in \cF_n}\sup_{(p, q)\in\tilde{\cF}^*_n}\EE[\cE({f}_n,f^*_n)]\asymp
	\tilde{O}\left(\frac{1}{n}\right). $$
\end{corollary}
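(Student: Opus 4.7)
The plan is to exploit the fact that separability collapses the binary classification problem to a realizable (PAC) learning setting, in which empirical risk minimization attains the accelerated rate $\log\mathcal{N}/n$ rather than the root-rate $\sqrt{\log\mathcal{N}/n}$ that produced the $n^{-2/3}$ exponent in Theorem \ref{thm:ts}. First I would observe two immediate consequences of disjoint supports: $R(f_n^*)=0$, so $\cE(\hat{f}_n,f_n^*)=R(\hat{f}_n)$, and by the student-larger-than-teacher assumption $f_n^*$ itself lies in $\cF_n$, so almost surely $R_n(f_n^*)=0$ and hence $R_n(\hat{f}_n)=0$. In particular, assumption (A3) is automatic with $T_n$ bounded below by the inter-class gap, so it never enters the quantitative analysis.

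For the upper bound I would apply a standard realizable-case uniform deviation argument. Any $f\in\cF_n$ with $R(f)>\varepsilon$ satisfies $\mathbb{P}(R_n(f)=0)\le(1-\varepsilon)^n\le e^{-n\varepsilon}$, and a union bound over an $\varepsilon$-net of $\cF_n$ in the sign-disagreement pseudo-metric gives
$$\mathbb{P}\bigl(R(\hat{f}_n)>\varepsilon\bigr)\le\mathcal{N}(\varepsilon,\cF_n)\,e^{-n\varepsilon/2}.$$
Integrating this tail bound and inserting the entropy estimate already developed in the proof of Theorem \ref{thm:ts}, where $\log\mathcal{N}(\varepsilon,\cF_n)$ is of order $\mathrm{poly}(d,\log n)\cdot\log(1/\varepsilon)$ by virtue of the polylogarithmic bounds on $L_n,N_n,S_n,B_n$, yields $\EE[R(\hat{f}_n)]\lesssim\mathrm{poly}(d,\log n)/n=\tilde{O}_d(1/n)$.

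For the matching lower bound I would mimic the construction used in Theorem \ref{thm:lb} but restrict to a separable sub-family: choose two teacher networks whose decision regions differ only on a region of Lebesgue measure of order $1/n$ that is kept strictly bounded away from the opposite class's support, so that both induced distributions lie in $\tilde{\cF}^*_n$ and remain separable. A two-point Le Cam argument then forces any estimator to incur expected excess risk $\gtrsim 1/n$ on at least one of the two alternatives, giving the lower bound $\tilde{\Omega}(1/n)$.

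The main technical obstacle is verifying that the speed-up is genuinely driven by realizability rather than by a mere retuning of the earlier bias--variance trade-off. In Theorem \ref{thm:ts} the exponent $2/3$ emerges from balancing a stochastic term of order $\sqrt{\log\mathcal{N}/n}$ against a Tsybakov-type bound of order $c_n\varepsilon$; under separability the square root disappears because the 0-1 loss vanishes identically at the Bayes classifier, so a one-sided deviation inequality suffices and the localized Bernstein argument used before is no longer needed. Some care is also required to check that the sign-disagreement $\varepsilon$-cover of $\cF_n$ is controlled by the $L_\infty$ cover used in the proof of Theorem \ref{thm:ts}; this reduction goes through because two networks that are $\varepsilon$-close in $L_\infty$ can disagree in sign only on the set $\{|f_n^*|\le\varepsilon\}$, which has negligible measure in the separable regime.
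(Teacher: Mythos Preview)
Your argument is correct in spirit, but the paper takes a much shorter route: it simply observes that disjoint supports force the Tsybakov noise exponent $\kappa=\infty$ in condition (N$_n$), and then sends $\kappa\to\infty$ in the rate $(A_n\log^2 n/n)^{(\kappa+1)/(\kappa+2)}$ already established in Lemma~\ref{lemma:thm1}. Since $(\kappa+1)/(\kappa+2)\to 1$, the rate collapses to $\tilde O_d(1/n)$ with no new work. Your realizable-PAC argument is a genuinely different approach: you bypass the noise-condition machinery altogether and exploit directly that $R(f_n^*)=0$ and $R_n(\hat f_n)=0$, then invoke a one-sided deviation bound. This is more self-contained and makes the statistical mechanism (realizability $\Rightarrow$ fast rate) transparent, whereas the paper's approach is more economical because it reuses the existing localized empirical-process lemma verbatim.

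One technical point worth tightening: the entropy object you need is not the $L_\infty$ cover of $\cF_n$ but the bracketing entropy of the induced set class $\cG^{\cF_n}=\{\{f\ge 0\}:f\in\cF_n\}$ in the symmetric-difference metric $d_\triangle$, which is precisely what Lemma~\ref{lemma:nn_piece} supplies. Your last paragraph tries to pass from an $L_\infty$ cover of the functions to a sign-disagreement cover via the set $\{|f_n^*|\le\varepsilon\}$; this detour is unnecessary and the reasoning there is slightly off (sign disagreement between $f$ and $g$ is controlled by $\{|f|\le\varepsilon\}$, not $\{|f_n^*|\le\varepsilon\}$). Working directly with $d_\triangle$-brackets avoids this issue entirely and feeds straight into your union bound. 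The lower-bound sketch via a two-point Le Cam argument is reasonable; the paper does not spell out a separate lower bound for the separable case.
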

\begin{remark}[Disjoint Support]
\label{remark:thm1} 
Given that data are separable, \cite{srebro2010optimistic} derived the excess risk bound as $O(D\log n/n)$ (under a smooth loss) where $D$ is the VC-subgraph-dimension of the estimation family. Additionally, separability implies that the noise exponent $\kappa$ in Tsybakov noise condition \citep{mammen1999smooth,tsybakov2004optimal} can be arbitrarily large, which also gives $O(1/n)$ rate under the ``boundary fragments" assumption.
\end{remark} 

The imposed relation between teacher and student nets in Theorem \ref{thm:ts} is referred to as ``over-realization'' in \cite{NIPS2019_8921, tian2019over, bai19}: at each layer, the number of student nodes is larger than that of teacher nodes given the same depth. In other words, the student network is larger than the teacher in order to obtain zero approximation error. 

On the other hand, such a requirement is not necessary as long as the corresponding Bayes classifier is not too complicated. A ReLU neural network is a continuous piecewise linear function, i.e. its domain can be divided into connected regions (pieces) within where the function is linear. If the ReLU neural network crosses 0 on one piece, we call that piece as being {\em active} (see Figure \ref{fig:active} for an illustration). One way to measure the complexity of the teacher network is the number of active pieces. The following Corollary says that the teacher network can be much larger and deeper as long as the number of active pieces are in a logarithmic order with respect to $n$. 
\begin{figure}
    \centering
    \includegraphics[scale=0.5]{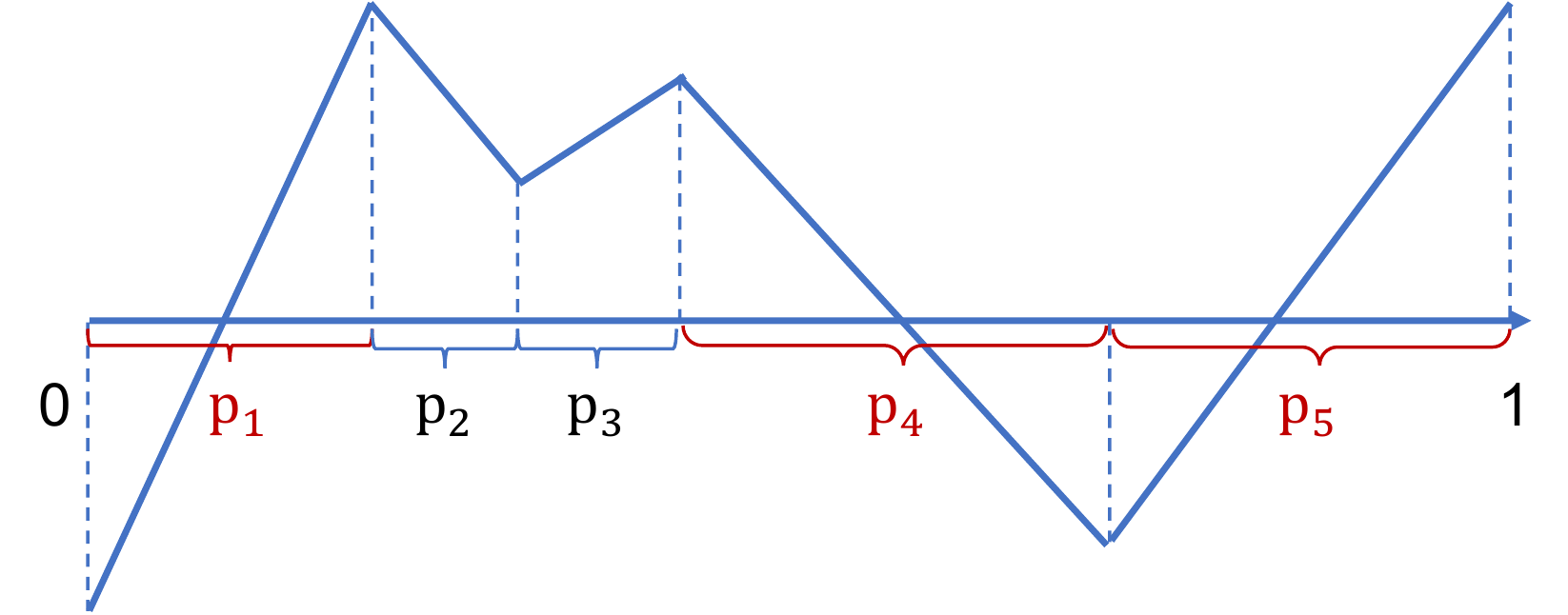}
    \caption{Example of a ReLU DNN function in $[0,1]$. There are 5 pieces $p_1,p_2,\ldots,p_5$ and among them, only $p_1,p_4,p_5$ cross value 0 (horizontal line). There are 3 active pieces in this example and they are colored red.}
    \label{fig:active}
\end{figure}
\begin{corollary}
\label{cor:active}
The same result in Theorem \ref{thm:ts} holds when the teacher network is larger than the student network, i.e. $L_n\le L_n^*, S_n\le S_n^*, N_n\le N^*_n, B_n\le B_n^*$ , given that the total number of active pieces in the teacher network is of the following order 
 \begin{equation}
 \label{eqn:active}
     o\left(\left( \prod_{l=1}^{L_n-1} \left\lfloor \frac{n_l}{d}
            \right\rfloor^{d} \right)
            \sum_{j=0}^{d} { n_{L_n} \choose j }\right),
 \end{equation}
where $n_1,\cdots,n_{L_n}$ are the width of each hidden layer of the student network.
\end{corollary}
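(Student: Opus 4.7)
The plan is to reduce Corollary \ref{cor:active} to Theorem \ref{thm:ts} by showing that, under the active-pieces bound, the student family $\cF_n$ still contains a classifier achieving zero approximation error against $C^*_n$. Since any ReLU network is a continuous piecewise linear function, $f^*_n$ induces a partition of $\cX$ into linear pieces, and only the active pieces contribute to the decision boundary $\{f^*_n = 0\}$; on every inactive piece the sign of $f^*_n$ is constant. The Bayes set $G^*_n = \{\bx : f^*_n(\bx) \ge 0\}$ is therefore a union of polytopes whose combinatorial complexity is governed entirely by the number of active pieces rather than by the total size of the teacher.

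Next I would invoke the standard maximum-linear-region bound for ReLU networks, which states that a network of depth $L_n$ and hidden widths $n_1,\ldots,n_{L_n}$ on $\RR^d$ can realize at most $\prod_{l=1}^{L_n-1}\lfloor n_l/d\rfloor^{d}\cdot\sum_{j=0}^{d}\binom{n_{L_n}}{j}$ distinct linear regions, and that there are explicit constructions (e.g., via folding maps) attaining this bound up to constants. The $o(\cdot)$ hypothesis in (\ref{eqn:active}) says precisely that the teacher has strictly fewer active pieces than the student's region-producing capacity. Using such a construction, I would build a student network $g_n \in \cF_n$ whose induced linear-region partition refines the partition of $\cX$ generated by the hyperplanes defining the teacher's active pieces, and whose sign on each refined region is chosen to agree with that of $f^*_n$. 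This produces $\{g_n \ge 0\} = G^*_n$ up to a set of Lebesgue measure zero, so ${\rm sign}(g_n) = C^*_n$ almost everywhere and the approximation error vanishes.

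With the approximation error eliminated, the rest of the argument parallels Theorem \ref{thm:ts}. The estimation error $R(\hat f_n) - \inf_{f \in \cF_n} R(f)$ is controlled via the covering-number (or VC) entropy of $\cF_n$, which depends only on the student architecture $(L_n, N_n, S_n, B_n, F_n)$ and is identical to what appears in Theorem \ref{thm:ts}. The margin-type condition (A3), being phrased only in terms of $f^*_n$, continues to hold verbatim. Combining the vanishing approximation error with the estimation-error bound yields the same rate $\tilde{O}_d(n^{-2/3})$.

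The main obstacle lies in step two: the generic maximum-region constructions tile $\cX$ in a prescribed, generic fashion rather than following the hyperplane arrangement dictated by $f^*_n$, so matching the student's regions to the teacher's active pieces is nontrivial. I expect to need a refinement-and-alignment argument that introduces auxiliary units to ``snap'' the student partition onto the teacher's active pieces while keeping the parameter counts $(L_n, N_n, S_n, B_n)$ within the budget defining $\cF_n$. A fallback I would keep in reserve is to appeal to the representability of arbitrary continuous piecewise linear functions by ReLU networks and to bound the required student size directly in terms of the number of active polytopes, bypassing the generic region-count formula altogether.
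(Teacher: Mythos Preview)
Your overall strategy matches the paper's: show that the active-pieces bound guarantees zero approximation error for the student family, and then reuse the estimation-error analysis from Theorem~\ref{thm:ts} verbatim. The paper gives no detailed construction either; it simply asserts that when the teacher's active pieces fall within the student's region-producing capacity, the optimal set $G^*_n$ can be recovered by the student, and then invokes Lemma~\ref{lemma:lb}.

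However, you have the direction of the Montufar et al.\ bound backwards. The quantity in~(\ref{eqn:active}) is not an \emph{upper} bound on the number of linear regions a student network can realize; it is the \emph{lower} bound from Lemma~\ref{lemma:lb} (Theorem~4 of \cite{montufar2014number}) on the \emph{maximal} number of linear regions achievable by a network with that architecture. The paper explicitly says the expression ``comes from the lower bound developed by \cite{montufar2014number}.'' The logic you need is: the student can achieve \emph{at least} this many linear regions (via the folding constructions you mention), and the $o(\cdot)$ hypothesis ensures the teacher's active pieces fall strictly below this guaranteed capacity. If it really were an upper bound, the argument would collapse, since knowing the student has at most $K$ regions and the teacher has fewer than $K$ active pieces says nothing about whether the student can reproduce the teacher's partition. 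Your appeal to ``constructions attaining this bound'' partially rescues the argument, but the framing should be corrected so the inequality points the right way.

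The obstacle you flag in the final paragraph---that the Montufar folding construction tiles $\cX$ in a fixed grid-like fashion rather than aligning with the teacher's hyperplane arrangement---is genuine and is not addressed in the paper either; the paper's discussion of Corollary~\ref{cor:active} is informal on exactly this point. Your fallback (direct representability of piecewise linear functions by ReLU networks, with size bounded in terms of the number of active polytopes) is likely the cleaner route.
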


The number of active pieces is the key quantity in controlling the complexity of the optimal set $G^*$. The expression in (\ref{eqn:active}) comes from the lower bound developed by \cite{montufar2014number} on the maximum number of linear pieces for a ReLU neural network (Lemma \ref{lemma:lb}). This lower bound is determined by the structure of the student network. If the number of active pieces of the teacher network is on this order, i.e. within the capacity of the student, then the corresponding optimal set can still be recovered by an even smaller student network, which ensures zero approximation error. Since the student network in consideration satisfies $N_n=O(\log n)^m$, the required order for the number of active pieces is in the order of $o(\log n)^{mdL_n}$.


\subsection{Proof of Theorem \ref{thm:ts}}
We first present some preliminary lemmas. 
As we mentioned before, classification can be thought of as nonparametric estimation of sets. 
For this, we define two distances over sets.
The first one is the usual symmetric difference of sets: for any $G_1,G_2\subset\RR^d$,
$$
d_\triangle(G_1, G_2)= \QQ(G_1\triangle G_2)= \QQ\left((G_1\backslash G_2)\cup (G_2\backslash G_1)\right),
$$
where $\QQ$ denotes the Lebesgue measure. The second one is induced by densities $p,q$:
for any $G_1,G_2\subset\RR^d$,
$$
d_{p, q}(G_1,G_2) =\int_{G_1\triangle G_2}|p(\bx) - q(\bx)|\QQ(d\bx).
$$

There are two key factors governing the rate of convergence in classification:
\begin{itemize}
    \item How concentrated the data are around the decision boundary;
    \item
    The complexity of the set $\cG^*$ where the optimal $G^*$ resides. 
\end{itemize}
For the first factor, the following Tsybakov noise condition \citep{mammen1999smooth} quantifies how close $p$ and $q$ are:
\begin{itemize}
    \item[(N)] There exists constant $c>0$ and $\kappa\in[0,\infty]$ such that for any  $0\le t\le T$
    $$\PP\left(\{\bx:|p(\bx)-q(\bx)|\le t\}\right)\le c t^\kappa.$$
\end{itemize}
The parameter $\kappa>0$ is referred to as the \textit{noise exponent}. The bigger the $\kappa$, the less concentrated the data are around the decision boundary and hence the easier the classification. 
In the extreme case that $p, q$ have different supports, $\kappa$ can be arbitrarily large and the classification is easy. To another extreme where $\QQ\{\bx\in\cX: p(\bx)=q(\bx)\}>0$, there exists a region where different classes are indistinguishable. In this case, $\kappa = 0$ and the classification is hard in that region.


For the second factor, we use bracketing entropy to measure the complexity of a collection of subsets $\cG$ in $\RR^d$. 
For any $\delta>0$, the bracketing number $\cN_B(\delta, \cG, d_\triangle)$ is the minimal number of set pairs $(U_j, V_j)$ 
such that, (a) for each $j$, $U_j\subset V_j$ and $d_\triangle(U_j, V_j)\le \delta$;
(b) for any $G\in \cG$, there exists a pair $(U_j, V_j)$ such that $U_j\subset G\subset V_j$.
Simply denote $\cN_B(\delta)=\cN_B(\delta, \cG, d_\triangle)$ if no confusion arises.
The bracketing entropy is defined as $H_B(\delta)=\log{\cN_B(\delta, \cG, d_\triangle)}$. 

Lemma~\ref{lemma:bracketing} characterizes the complexity of a special collection of sets.


\begin{lemma}
\label{lemma:bracketing}
Let $\cX=[0,1]^d$ and $\cG$ be a collection of polyhedrons with at most $S$ vertices in $\RR^d$. Then the bracketing entropy of $\bar{\cG}=\cG\cap \cX$ satisfies
$$H_B(\delta, \bar{\cG}, d_\triangle)=\log \cN_B(\delta,\bar{\cG}, d_\triangle)\lesssim d^2 S\log(d^{3/2}S/\delta)$$
\end{lemma}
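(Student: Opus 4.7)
The plan is to construct $\delta$-brackets by discretizing vertex positions on an $\eta$-grid in $[0,1]^d$ and converting a small Hausdorff perturbation of a polyhedron into a small symmetric-difference bracket. This reduces the problem to an elementary volumetric estimate on the thin shell around the boundary of a polytope in the unit cube.

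First I would parametrize $\bar\cG$ by vertex locations. A polyhedron $G\in\cG$ has at most $S$ vertices; since we only care about $G\cap[0,1]^d$, we may assume the vertices lie in $[0,1]^d$ (intersecting with the cube introduces only $O(Sd)$ extra vertices on its facets, which are absorbable into the constant). Fix an $\eta$-grid $\Lambda_\eta\subset[0,1]^d$ with $|\Lambda_\eta|\le\eta^{-d}$, round every vertex of $G$ to its nearest grid point, and let $\tilde G$ be the resulting rounded polyhedron. Each vertex moves by at most $\sqrt d\,\eta/2$, and since the convex-hull map is $1$-Lipschitz in the Hausdorff distance, $d_H(G,\tilde G)\le \sqrt d\,\eta=:r$. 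Define the inner and outer parallel sets $U_{\tilde G}=(\tilde G)^{-r}\cap[0,1]^d\subset\bar G\subset V_{\tilde G}=(\tilde G)^{+r}\cap[0,1]^d$, giving a bracket indexed by the combinatorial type of $\tilde G$.

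The key geometric estimate, which I expect to be the main obstacle, is controlling the bracket width $d_\triangle(U_{\tilde G},V_{\tilde G})$. Since $V_{\tilde G}\setminus U_{\tilde G}$ lies in the shell of width $2r$ around $\partial\tilde G$, its Lebesgue measure is at most $2r$ times the surface area of $\tilde G$ (up to lower-order terms). For a convex body contained in $[0,1]^d$, surface area is monotone under inclusion and hence bounded by the surface area of $[0,1]^d$, namely $2d$, so $d_\triangle(U_{\tilde G},V_{\tilde G})\lesssim dr=d^{3/2}\eta$. If the polyhedra are not assumed convex, one instead bounds the number of facets by an $S^{\lfloor d/2\rfloor}$-type count and each facet area by $d^{(d-1)/2}$; the resulting polynomial-in-$(S,d)$ factors are harmlessly absorbed into the log term of the stated bound.

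Finally I would set $\eta\asymp\delta/d^{3/2}$ so that the bracket width is at most $\delta$. The number of distinct rounded configurations (subsets of $\Lambda_\eta$ of size at most $S$) is at most $\binom{\eta^{-d}}{S}\le\eta^{-dS}$, yielding
\begin{equation*}
H_B(\delta,\bar\cG,d_\triangle)\le dS\log(1/\eta)\lesssim dS\log(d^{3/2}/\delta)\le d^2 S\log(d^{3/2}S/\delta),
\end{equation*}
which matches the claim (the factor $d$ of slack accommodates the non-convex case and the extra $\log S$ absorbs vertex-count multiplicities).
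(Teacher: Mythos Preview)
Your approach is correct and takes a genuinely different route from the paper, though both start by discretizing vertices on an $\eta$-grid. The paper builds grid \emph{polyhedra} as brackets: for each vertex $\bx_i$ of $G$ it selects $d$ neighboring lattice points so that the resulting grid polytope contains $G$ (outer cover), repeats this after first applying an explicit $\sqrt d\,\delta$-contraction of $G$ to get an inner cover, and bounds the bracket width crudely by (number of facets)$\times 2\sqrt d\,\delta$; the $d$ auxiliary lattice points per vertex, applied to the at most $dS$ vertices of $G\cap[0,1]^d$, are what produce the exponent $d^2S$ in the count. You instead round each vertex once, use the $1$-Lipschitz property of the convex-hull map in Hausdorff distance, and take the inner/outer \emph{parallel sets} $(\tilde G)^{\pm r}$ as brackets, bounding the width via surface-area monotonicity of convex bodies inside the cube. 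Your route is cleaner and in fact yields the sharper exponent $dS$ before you relax it; the paper's construction has the advantage that the brackets themselves remain polytopes.

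Two places in your argument deserve a bit more care. First, ``intersecting with the cube introduces $O(Sd)$ extra vertices, absorbable into the constant'' is not quite right if you are tracking $d$-dependence: you should carry $dS$ through the count, which directly gives the $d^2S$ exponent. Second, the shell-volume bound $\lesssim(\text{surface area})\times 2r$ hides the higher Steiner terms $\kappa_j V_{d-j}(\tilde G)r^j$; these are indeed dominated by the $j=1$ term, but only once $r\lesssim 1/d$ (equivalently $\delta\lesssim 1$), which you should state explicitly. Your non-convex sketch via an $S^{\lfloor d/2\rfloor}$ facet count is not needed: both the paper's proof and the downstream application to basic simplices in Lemma~\ref{lemma:nn_piece} use only convex polyhedra.
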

\begin{proof}[Proof of Lemma \ref{lemma:bracketing}]
Let's first introduce some notations and terminologies. 
For any $\delta>0$, let $M_\delta$ denote the smallest integer such that $M_\delta>1/\delta$.
Consider the set of lattice points $\bX_\delta^d=\{(i_1/M_\delta,\ldots,i_d/M_\delta):
i_1,\ldots,i_d=0,1,\ldots,M_\delta\}$ which has cardinality $(M_{\delta}+1)^d$. Let $G(\bx_1, \cdots, \bx_s)$ denote a polyhedron with vertices $\bx_1, \cdots, \bx_s\in[0,1]^d$ where $s\le S$. (the $\bx_i$'s are not necessarily distinct). Any convex polyhedron $G$ in $\RR^d$ is the intersection of multiple $(d-1)$-dimensional hyperplanes. If we move all such hyperplanes inwards (to the direction perpendicular to the hyperplanes) by a small distance $\delta$, they produce another polyhedron, denoted $G_{-\delta}$, called as the $\delta$-contraction of $G$. Note that $G_{-\delta}$ can be empty if $\delta$ is not small enough. 

We prove the result for $d=1$, in which $\bar{\cG}$ is a collection of subintervals in $[0,1]$.  For any subinterval $[a,b]\subset [0,1]$, there exist $x_i, x_j\in \bX_\delta^1$ such that $$x_i\le a\le x_{i+1}, \quad x_j\le b\le x_{j+1}.$$
(By convention, $[x_{i}, x_j]$ is empty if $x_i>x_j$.) Then $([x_i, x_{j+1}], [x_{i+1}, x_{j}])$ is a $2\delta$-bracket of $[a,b]$ since obviously 
\begin{equation}\label{delta:bracket:def}
[x_{i+1}, x_{j}]\subset[a,b]\subset[x_i, x_{j+1}],\,\,\,\,
d_{\triangle}([x_i, x_{j+1}], [x_{i+1}, x_{j}])\le 2\delta.
\end{equation}
There are $\binom{M_\delta+1}{2}$ different choices
of $[x_i, x_j]$, hence, $\binom{M_\delta+1}{2}$ different choices of the pairs $([x_i, x_{j+1}], [x_{i+1}, x_{j}])$. Any $[a,b]\subset [0,1]$ can be $2\delta$ bracketed by one of such pairs in the sense of (\ref{delta:bracket:def}).
This shows that $H_B(2\delta)\le \log\binom{M_\delta+1}{2}\le  2\log(1/\delta)$.
\begin{figure}
    \centering
    \includegraphics[scale=0.35]{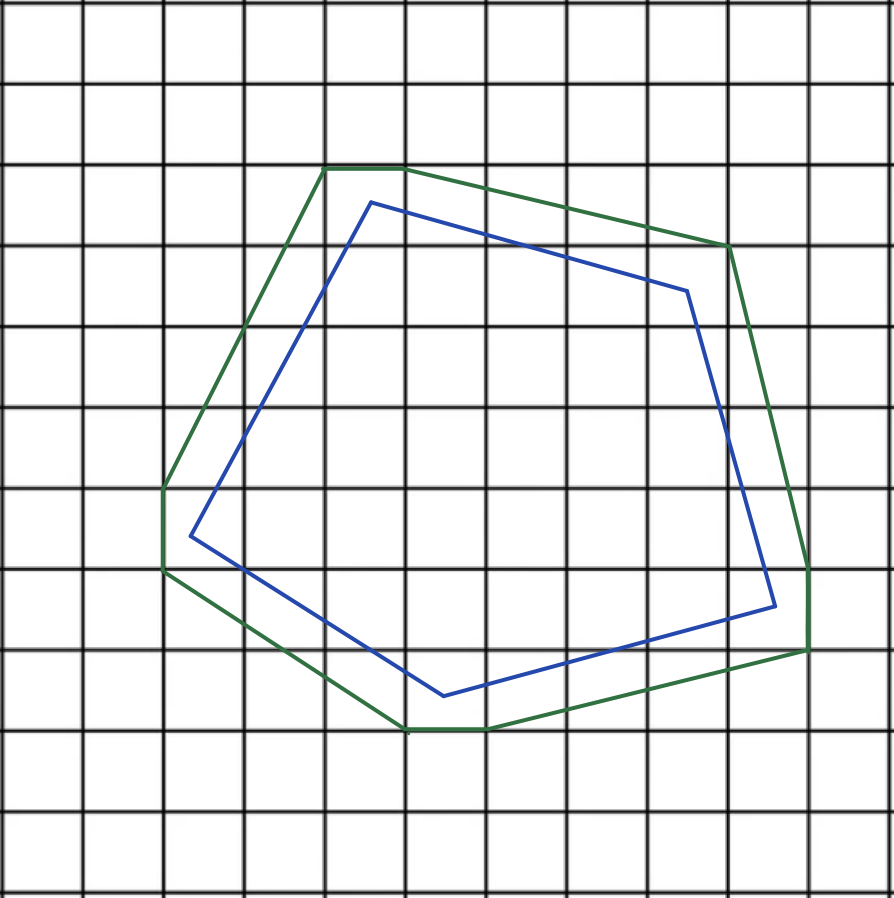}
    \caption{Grid in 2D and the outer cover (green) constructed for with grid points for a polygon (blue).}
    \label{fig:2d}
\end{figure}

When $d\ge 2$, any $G\in\bar{\cG}$ has at most $S$ vertices, so $\bar{G}:= G\cap [0,1]^d$ has at most $dS$ vertices where the factor $d$ is due to the fact that each edge of $G$ intersects at most $d$ edges of $[0,1]^d$ therefore creates at most $dS$ vertices
for $\bar{G}$.
For any polygon $G(\bx_1,\cdots,\bx_s)$ where $s\le dS$, denote $G_{-\sqrt{d}\delta}(\bx_1,\cdots,\bx_s)=G(\bx^{-}_1,\cdots,\bx^-_s)$. 
Each vertex must be in one of the grids in $\bX_\delta^d$. It is easy to see that there exist $\bv_1^1,\ldots,\bv_1^d, \bv_2^1,\ldots,\bv_2^d,\cdots\cdots,\bv_s^1,\ldots,\bv_s^d\in\bX_\delta^d$, where $\bv_i^1,\ldots,\bv_i^d$ are in the same grid,
 such that 
\begin{itemize}
    \item 
    $G(\bx_1,\cdots,\bx_s)\subset G(\bv_1^1,\ldots,\bv_1^d, \bv_2^1,\ldots,\bv_2^d,\cdots\cdots,\bv_s^1,\ldots,\bv_s^d)$;
    \item 
    $\|\bv_i^j-\bx_i\|_2\le \sqrt{d}\delta$ for $i=1,2\cdots, s$ and $j=1,2,\cdots, d$.
\end{itemize}
See Figure \ref{fig:2d} for an illustration when $d=2$.
Similarly for $G(\bx^-_1,\cdots,\bx^-_s)$, there exist
$\bu_1^1,\ldots,\bu_1^d,\bu_2^1,\ldots,\bu_2^d, \cdots\cdots, \bu_s^1,\ldots,\bu_s^d \in\bX_\delta^d$ such that 
\begin{itemize}
    \item 
    $G(\bx^-_1,\cdots,\bx^-_s)\subset G(\bu_1^1,\ldots,\bu_1^d,\bu_2^1,\ldots,\bu_2^d, \cdots\cdots, \bu_s^1,\ldots,\bu_s^d)$;
    \item 
    $\|\bu_i^j-\bx_i^{-}\|_2\le \sqrt{d}\delta$ for $i=1,2\cdots, s$ and $j=1,2,\cdots, d$.
\end{itemize}

By the definition of $G_{-\sqrt{d}\delta}$, we have $\|\bx_i-\bx^-_i\|_2\ge \sqrt{d}\delta$. 
Thus $\|\bu_i^j-\bx^{-}_i\|_2\le \sqrt{d}\delta$ implies $G(\bu_1^1,\ldots,\bu_1^d, \cdots\cdots, \bu_s^1,\ldots,\bu_s^d)\subset G(\bx_1,\cdots,\bx_s)$. 
On the other hand,  
\begin{align*}
&d_\triangle(G(\bu_1^1,\ldots,\bu_1^d, \cdots, \bu_s^1,\ldots,\bu_s^d), G(\bv_1^1,\ldots,\bv_1^d,\cdots,\bv_s^1,\ldots,\bv_s^d))\\
&\le d_\triangle(G_{+\sqrt{d}\delta}(\bx_1,\cdots,\bx_s), G_{-\sqrt{d}\delta}(\bx_1,\cdots,\bx_s))\\
&\le s\cdot 2\sqrt{d}\delta,
\end{align*}
where the term $s$ is due to the fact that $G(\bx_1,\cdots,\bx_s)$ has at most $O(s)$ faces. 
Notice that $$G(\bu_1^1,\ldots,\bu_1^d, \cdots, \bu_s^1,\ldots,\bu_s^d), G(\bv_1^1,\ldots,\bv_1^d,\cdots,\bv_s^1,\ldots,\bv_s^d)\in \bar{\cG}, $$
and $s\le dS$. Thus, with at most $(M_\delta+1)^{d^2S}$ pairs of subsets in $\bar{\cG}$, we can $2d^{3/2}S\delta$-bracket any $\bar{G}\in\bar{\cG}$. 
Therefore,
$$\log \cN_B((2d^{3/2}S\delta),\bar{\cG}, d_\triangle)\lesssim \log\left( (M_\delta+1)^{d^2S}\right),$$
which implies
$$\log \cN_B(\delta,\bar{\cG}, d_\triangle)\lesssim d^2 S\log(d^{3/2}S/\delta).$$



\end{proof}

\begin{lemma}[Theorem 1 in \citep{serra2017bounding}]
\label{lemma:upper_bound_improved}
Consider a deep ReLU network with $L$ layers, $n_l$ ReLU nodes at each layer $l$, and an input of dimension $n_0$. The maximal number of linear pieces of this neural network is at most
\begin{align*}
\sum_{(j_1,\ldots,j_L) \in J} \prod_{l=1}^L \binom{n_l}{j_l},
\end{align*}
where $J = \{(j_1, \ldots, j_L) \in \mathbb{Z}^L: 0 \leq j_l \leq \min\{n_0, n_1 - j_1, \ldots, n_{l-1} - j_{l-1}, n_l\}\ \forall l = 1, \ldots, L\}$. This bound is tight when $L = 1$.
When $n_0 = O(1)$ and all layers have the same width $N$, 
we have 
the same best known asymptotic bound $O(N^{Ln_0})$ first presented in \citep{raghu2017expressive}. 
\end{lemma}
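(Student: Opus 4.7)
The plan is to induct on the number of layers $L$, invoking Zaslavsky's classical hyperplane-arrangement bound at each layer while carefully tracking the effective dimension of the affine image of each linear region through the ReLU map. Recall Zaslavsky: an arrangement of $m$ hyperplanes in $\mathbb{R}^{d}$ partitions it into at most $\sum_{k=0}^{d}\binom{m}{k}$ regions, with equality in general position.

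For the base case $L=1$, the pre-activation of each unit defines one hyperplane in the input space $\mathbb{R}^{n_{0}}$, so Zaslavsky gives at most $\sum_{j_{1}=0}^{\min(n_{0},n_{1})}\binom{n_{1}}{j_{1}}$ linear regions. In general position this bound is attained, which verifies the tightness claim for $L=1$.

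For the inductive step, I would fix a linear region $R$ of the first $L-1$ layers and argue locally. On $R$, the map $\varphi_{L-1}$ from input to the output of layer $L-1$ is affine, so $\varphi_{L-1}(R)$ is contained in an affine subspace of dimension $d_{L-1}\le n_{0}$. The $n_{L}$ pre-activation hyperplanes of layer $L$ pull back to hyperplanes that, once restricted to $R$, effectively live in this $d_{L-1}$-dimensional hull. Zaslavsky then bounds the number of sub-regions of $R$ produced by layer $L$ by $\sum_{j_{L}=0}^{d_{L-1}}\binom{n_{L}}{j_{L}}$. The heart of the argument is to track $d_{l}$ recursively: if, inside a region, the activation pattern of layer $l$ has $n_{l}-j_{l}$ active units, the output of layer $l$ lies in the coordinate subspace spanned by those active units, so
\begin{equation*}
d_{l}\;\le\;\min\bigl(d_{l-1},\,n_{l}-j_{l}\bigr).
\end{equation*}
Unrolling this recursion gives $d_{l}\le \min\{n_{0},\,n_{1}-j_{1},\ldots,n_{l-1}-j_{l-1},\,n_{l}\}$, which together with the Zaslavsky constraint $j_{l}\le d_{l-1}$ is exactly the description of the admissible set $J$. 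Multiplying the per-layer Zaslavsky bounds $\binom{n_{l}}{j_{l}}$ and summing over all tuples $(j_{1},\ldots,j_{L})\in J$ then yields the claimed estimate.

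For the asymptotic corollary, when $n_{0}=O(1)$ and all widths equal $N$ every $j_{l}\le n_{0}$, so $|J|\le (n_{0}+1)^{L}$ and each product $\prod_{l=1}^{L}\binom{N}{j_{l}}\le N^{L n_{0}}$, recovering $O(N^{L n_{0}})$. The main obstacle I anticipate is the bookkeeping that ties the Zaslavsky summation index $j_{l}$ to the activation-count parameter governing $d_{l}$: one must verify that the single symbol $j_{l}$ can legitimately play both roles simultaneously (as the Zaslavsky index producing $\binom{n_{l}}{j_{l}}$ and as the rank-deficit parameter $n_{l}-j_{l}$ controlling the next-layer dimension) without overcounting, which amounts to a careful sign/side accounting of which units are turned off in each Zaslavsky-indexed sub-region and matches Serra et al.'s combinatorial refinement of the classical region-count argument.
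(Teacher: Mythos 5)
This lemma is stated in the paper purely as a citation to Serra et al.\ (2017); the paper gives no proof of it, so there is no in-paper argument to compare your sketch against. Evaluated on its own terms, your architecture --- Zaslavsky per layer combined with an inductive bound on the rank of the affine map restricted to each region --- is exactly the strategy of Serra et al.\ The base case, the recursion $d_l \le \min(d_{l-1}, n_l - j_l)$, and the asymptotic corollary $O(N^{Ln_0})$ are all handled correctly.

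The gap you flag at the end, however, is not a bookkeeping nuisance: it is the entire content of the theorem, and as written your step ``multiplying the per-layer Zaslavsky bounds and summing over $J$'' does not follow. Zaslavsky's formula $\sum_{j=0}^{d}\binom{m}{j}$ is an aggregate count whose $j$-th term does not enumerate ``regions with $j$ inactive units.'' A concrete counterexample: take $d=2$, $m=3$ generic lines with suitable orientations; one obtains $1$ region with $3$ inactive units, $2$ with $2$ inactive, $3$ with $1$ inactive, and $1$ with $0$ inactive, whereas a literal per-index reading of $\binom{3}{0},\binom{3}{1},\binom{3}{2}$ would assert $1,3,3$ regions with $0,1,2$ inactive units and \emph{none} with $3$. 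The regionwise identification you propose is therefore false. What does hold --- and is the lemma Serra et al.\ actually prove --- is a cumulative dominance statement: for every $0\le k\le d$, the number of regions of an arrangement of $m$ hyperplanes in $\mathbb{R}^d$ in which at least $k$ of the $m$ units are active is at most $\sum_{j=0}^{\min(m-k,\,d)}\binom{m}{j}$. Only this inequality lets one charge each sub-region to a budget slot $\binom{n_l}{j_l}$ whose associated rank $\min(d_{l-1}, n_l-j_l)$ is at least the region's true output rank, so that the per-layer products genuinely dominate the recursion and yield the bound over $J$. Your proposal needs to state and prove (or explicitly cite) that counting lemma to be complete; without it the induction does not close.
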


Consider a deep ReLU network with $n_0=d$ inputs and $L$ hidden layers of widths $n_i\geq n_0$ for all $i\in\left[L\right]$. The following lemma establishes a lower bound for the maximal number of linear pieces of deep ReLU networks: 
\begin{lemma}[Theorem 4 in \citep{montufar2014number}] 
    \label{lemma:lb}
    The maximal number of linear pieces of a ReLU network with $n_0$ input units, $L$ hidden layers, and $n_i\geq
    n_0$ rectifiers on the $i$-th layer, is lower bounded by 
    \[
    \left( \prod_{i=1}^{L-1} \left\lfloor \frac{n_i}{n_0}
            \right\rfloor^{n_0} \right)
            \sum_{j=0}^{n_0} { n_L \choose j }. 
    \]
\end{lemma}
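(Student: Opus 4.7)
The plan is to prove the bound by explicitly constructing a ReLU network whose number of linear regions meets the claimed lower bound. The construction has two phases: the first $L-1$ hidden layers implement a coordinate-wise \emph{folding} that collapses the input cube $[0,1]^{n_0}$ onto itself multiple times, and the final hidden layer realizes a generic hyperplane arrangement in the folded space. The two contributions then multiply to give the stated product.

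First I would set up the folding primitive. For any integer $k\ge 1$, the one-dimensional sawtooth function $s_k:[0,1]\to[0,1]$ with $k$ teeth is a continuous piecewise-linear function that is $k$-to-one on $[0,1]$; a standard exercise shows that $s_k$ can be expressed as a weighted sum of $k$ ReLU units applied to a single affine input. To fold all $n_0$ coordinates at layer $i$, I would partition the $n_i$ hidden units into $n_0$ groups of size $k_i := \lfloor n_i/n_0\rfloor$ (discarding the leftovers), and let group $j$ compute $s_{k_i}$ on coordinate $j$ of the incoming activations. Using an auxiliary linear bookkeeping step (absorbed into the next layer's weights), the composite map $F_i:[0,1]^{n_0}\to[0,1]^{n_0}$ implemented by layer $i$ has the key property that the preimage $F_i^{-1}(R)$ of any set $R\subset[0,1]^{n_0}$ with nonempty interior consists of $k_i^{n_0}$ disjoint, affinely congruent copies of $R$.

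Second, for the final hidden layer I would choose $n_L$ affine forms whose zero sets form a hyperplane arrangement in \emph{general position} in $\mathbb{R}^{n_0}$ (using that $n_L \geq n_0$ is allowed; the argument goes through verbatim when $n_L < n_0$ via the convention $\binom{n_L}{j}=0$ for $j>n_L$). By Zaslavsky's theorem, such an arrangement partitions $[0,1]^{n_0}$ into exactly $\sum_{j=0}^{n_0}\binom{n_L}{j}$ open regions on each of which every ReLU unit has a fixed activation pattern, so the output function is affine on each region and those affine pieces are pairwise distinct for generic weights in the output layer. Composing with the folding map $F := F_{L-1}\circ\cdots\circ F_1$, each of those regions pulls back to $\prod_{i=1}^{L-1}k_i^{n_0} = \prod_{i=1}^{L-1}\lfloor n_i/n_0\rfloor^{n_0}$ disjoint regions in $[0,1]^{n_0}$, giving the claimed product as a lower bound on the total number of linear pieces.

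The main obstacle, and the point that requires the most care, is verifying that the lifted pieces are \emph{genuinely distinct linear pieces of the whole network} rather than neighboring pieces that happen to carry the same affine form. This is where the folding construction pays off: because the restriction of $F$ to each preimage is an isometry up to coordinate-wise reflections, distinct preimages of a given region get mapped to the same point by $F$ but with different linear parts of $F$, so post-composing with a generic output layer yields distinct affine functions on each preimage. I would make this rigorous by tracking the sign patterns of the ReLU units in each folding layer and showing that distinct preimages correspond to distinct activation patterns, hence to distinct linear pieces of the overall network. The remaining bookkeeping---handling the $\lfloor\cdot\rfloor$ losses and checking that the general-position choice of hyperplanes is compatible with the bounded weight magnitudes---is routine.
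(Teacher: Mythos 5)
Your proposal is correct and is essentially the proof of Montúfar et al.\ (2014) that the paper cites for this lemma (the paper itself states Lemma~\ref{lemma:lb} as an imported result and gives no independent proof). The coordinate-wise sawtooth/folding layers followed by a generic hyperplane arrangement in the final hidden layer, with the pieces counted via Zaslavsky's formula and distinguished by their ReLU activation patterns, is precisely their construction.
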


\begin{lemma}
\label{lemma:nn_piece}
Let $\cF$ be a class of ReLU neural networks, defined on $\cX=[0,1]^d$, with at most $L$ layers and $N$ neurons per layer. Let $G^f=\{\bx\in\cX: f(\bx)\ge 0\}$ and
$\cG^{\cF} = \{G^f: f\in\cF\}$. Then the bracketing number of $\cG^{\cF}$ satisfies
$$\log \cN_B(\delta,\cG^{\cF}, d_\triangle)\lesssim N^{Ld^2}d^3\left(Ld^2\log( N)\vee \log(1/\delta)\right).$$

\end{lemma}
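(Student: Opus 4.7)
The plan is to exploit the piecewise-linear geometry of ReLU networks and reduce the bracketing-entropy calculation for $\cG^{\cF}$ to the polyhedral bound of Lemma \ref{lemma:bracketing}. For any $f\in\cF$, the cube $[0,1]^d$ is partitioned into linear pieces, each carved out by the at most $NL$ preactivation hyperplanes of the network together with the $2d$ facets of the cube. By Lemma \ref{lemma:upper_bound_improved} the number of such pieces is at most $M=O(N^{Ld})$, and since every piece is a convex polyhedron defined by at most $NL+2d$ half-spaces, a routine facet-vertex count gives at most $T=O((NL)^d)$ vertices per piece. On each piece $P$ the restriction of $f$ is affine, so $G^f\cap P$ is the intersection of $P$ with a single half-space and hence is itself a polyhedron with at most $O(T)$ vertices. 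Therefore every $G^f\in\cG^{\cF}$ admits a decomposition $G^f=\bigcup_{i=1}^M Q_i^{(f)}$ as a union of at most $M$ polyhedra in $[0,1]^d$, each with at most $T$ vertices.

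Given this decomposition, I would invoke Lemma \ref{lemma:bracketing} at precision $\delta/M$ to obtain a single family of at most $N_0\lesssim\exp\bigl(d^2 T\log(d^{3/2}TM/\delta)\bigr)$ bracket pairs $(U_k,V_k)$ that simultaneously $(\delta/M)$-brackets every polyhedron in $[0,1]^d$ with at most $T$ vertices; the crucial point is that the grid-based construction in the proof of Lemma \ref{lemma:bracketing} depends only on the lattice $\bX_{\delta/M}^d$ and on $T$, not on any specific polyhedron. For each $G^f$ I then select one pair $(U_{k_i},V_{k_i})$ bracketing each $Q_i^{(f)}$ and form $\bigl(\bigcup_i U_{k_i},\bigcup_i V_{k_i}\bigr)$. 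This is a valid $\delta$-bracket for $G^f$ because $d_\triangle\bigl(\bigcup_i U_{k_i},\bigcup_i V_{k_i}\bigr)\le\sum_i d_\triangle(U_{k_i},V_{k_i})\le M\cdot(\delta/M)=\delta$. Since the number of ordered $M$-tuples of bracket pairs is at most $N_0^M$, we obtain
\[
\log\cN_B(\delta,\cG^{\cF},d_\triangle)\;\lesssim\;M\,d^2 T\,\log\bigl(d^{3/2}TM/\delta\bigr),
\]
and substituting $M=O(N^{Ld})$ and $T=O((NL)^d)$, absorbing the $L$-factors via the mild regime $L\le N$, and splitting the resulting logarithm into the two regimes $\log N$ versus $\log(1/\delta)$ delivers the stated bound $N^{Ld^2}d^3\bigl(Ld^2\log N\vee\log(1/\delta)\bigr)$.

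The main obstacle is justifying the universality of the bracket family in the union step. Lemma \ref{lemma:bracketing} is stated for the class of single polyhedra, so one has to inspect its proof to confirm that the bracket pairs it produces form one fixed collection that simultaneously covers every element of the class, ensuring the $M$-fold selection contributes only $N_0^M$ to the total count rather than a much larger quantity obtained by re-choosing a new family for each cell. A secondary, easier point is the uniform-in-$f$ bound on the per-cell vertex count $T$: this follows from the observation that each cell is defined by at most $NL+2d$ half-space inequalities regardless of the weights of $f$, so both $M$ and $T$ depend only on the architecture $(L,N,d)$ and not on the realised network.
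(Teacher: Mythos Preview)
Your proposal is correct and shares the paper's high-level strategy---decompose $G^f$ into convex polyhedra, apply Lemma~\ref{lemma:bracketing} to each, and union the resulting brackets---but the decomposition and vertex accounting differ. The paper does not work cell-by-cell: it observes that the boundary of $G^f$ lies in at most $s=O(N^{Ld})$ hyperplane segments (one zero-crossing per linear region), bounds the \emph{total} vertex count of $G^f$ by $\binom{s}{d}=O(N^{Ld^2})$, and then triangulates the whole of $G^f$ into roughly that many \emph{simplices}, invoking Lemma~\ref{lemma:bracketing} with $S=d+1$ per simplex. You instead keep the network's own cells $Q_i^{(f)}$ and bound each one's vertex count via the facet count $NL+2d$, which gives $T=O((NL)^d)$ and a leading factor $MT=O\bigl(N^{(L+1)d}L^d\bigr)$; this is in fact tighter than $N^{Ld^2}$ for $d\ge 2$, so your last sentence undersells what you have actually shown. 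On the other hand, the paper's count depends only on the number of \emph{active} pieces (those on which $f$ crosses zero), which is precisely what makes Corollary~\ref{cor:active} go through; your decomposition uses all $M$ cells regardless of whether $f$ changes sign on them. Your worry about the universality of the bracket family is legitimate but resolves exactly as you anticipate: the brackets in the proof of Lemma~\ref{lemma:bracketing} are built from a single lattice $\bX_\delta^d$ and depend only on $(\delta,S)$, so one fixed collection covers all polyhedra of the given vertex bound. One minor wording issue: the phrase ``the at most $NL$ preactivation hyperplanes'' is misleading for $L\ge 2$, since deeper preactivation boundaries are only piecewise linear globally; but your subsequent \emph{local} claim---that each cell, once the activation pattern is fixed, is cut out by at most $NL+2d$ half-spaces---is correct and is what your argument actually uses.
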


\begin{proof} 
The proof relies on Lemma \ref{lemma:bracketing}
for which we need to control the number of vertexes of $G^f$ based on the number of pieces (linear regions) of the ReLU neural network. Since ReLU neural networks are piecewise linear, $G^f$ is a collection of sets of polyhedrons. 
Define the subgraph of a function $f:\RR^d\to \RR$ to be the set of points in $\RR^{d+1}$:
$$\textup{sub}(f)=\{(\bx, t): f(\bx)\ge t\}.$$ 
In this sense, 
$\textup{sub}(f)\cap \{(\bx, 0):\bx\in\cX\}=\{(\bx,0): \bx\in G^f\}$, a slice of the subgraph. 
Denote all the pieces to be $p_1,p_2,\cdots,p_s$. Each piece is a $d$-dimensional polyhedron on which $f(x)$ is linear. To control the complexity of $G^f$, consider the most extreme case that 
the function crosses zero on each piece, i.e. for any $i=1,\ldots,s$,
$\{(\bx,f(\bx)):\bx\in p_i\}\cap\{(\bx,0):\bx\in \cX\}\neq\emptyset$.
Each intersection resides in a $(d-1)$-dimensional hyperplane, e.g. dot for $d=1$, line segment for $d=2$ and so on. So the number of such $(d-1)$-dimensional hyperplanes in $G^f$ is at most $s$. 

A vertex of a polyhedron in $[0,1]^d$ can be thought of as the intersection of at least $d$ hyperplanes of dimension $d-1$. Thus, with at most $s$ hyperplanes there are at most $\binom{s}{d}<s^d$
vertices in $G^f$. 
In order to apply Lemma \ref{lemma:bracketing}, we break the collection of polyhedrons into the so-called {\em basic polyhedrons} each with $d+1$ vertices. For instance, the basic polyhedrons are intervals when $d=2$, are triangles when $d=3$, and so on. 

\begin{figure}
    \centering
    \includegraphics[scale=0.28]{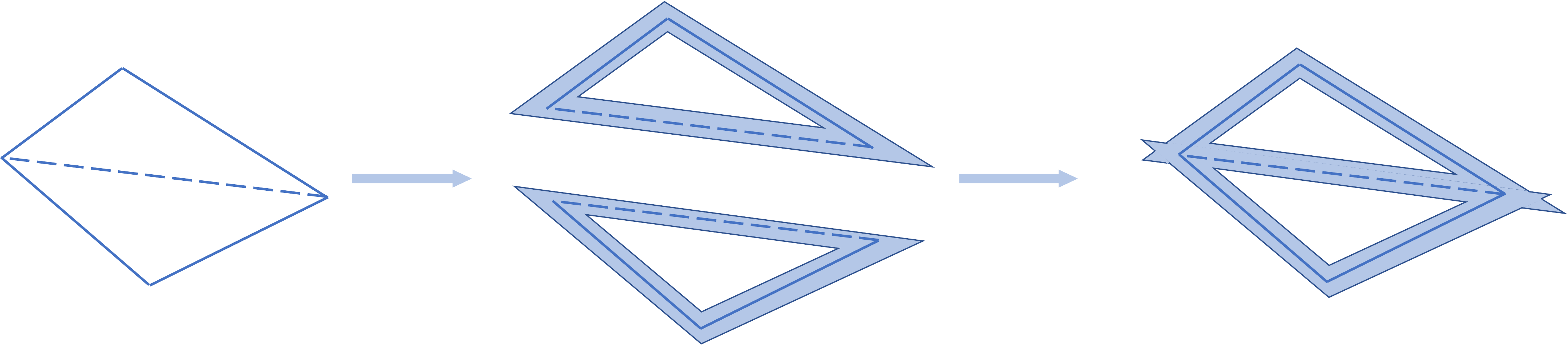}
    \caption{Demonstration of how a polygon in $d=2$ case can be divided into basic triangles. The union of the two brackets form a bracket of the original polygon. The blue shade is the symmetric difference.}
    \label{fig:divide}
\end{figure}

A polyhedron $G$ with at most $s$ vertices can be divided into at most $s$ disjoint basic polyhedrons $B_1,\ldots,B_s$. For instance, Figure \ref{fig:divide} demonstrates the $d=2$ case. Therefore, the bracketing number of the polyhedrons can be derived by bracketing the basic polyhedrons.
For a basic polyhedron $B$, denote its $\delta$-bracketing pair to be $(U_{B,\delta},V_{B,\delta})$, i.e., $U_{B,\delta}\subset B\subset V_{B,\delta}$.
Then $ (U_{G, \delta}, V_{G, \delta})$, defined as below
\begin{align*}
    U_{G,\delta} =& U_{B_1,\delta}\cup U_{B_2,\delta}\cup\cdots\cup U_{B_s,\delta}\\
    V_{G, \delta} =& V_{B_1,\delta}\cup V_{B_2,\delta}\cup\cdots\cup V_{B_s,\delta},
\end{align*}
form a $(s\delta)$-bracket of $G$. 
Hence, the bracketing number of all polyhedrons is controlled by
the $s$-th power of the bracketing number of all basic polyhedrons.
Applying Lemma \ref{lemma:upper_bound_improved} we know 
$s = O(N^{Ld})$ and the number of vertices is at most $S =O(N^{Ld^2})$. Together with 
Lemma \ref{lemma:bracketing},
we therefore get that
\[
\log \cN_B(S\delta,\cG^{\cF}, d_\triangle)\lesssim S(d+1)d^2\log((d+1)d^{3/2}/\delta),
\]
which implies
\begin{eqnarray*}
    \log \cN_B(\delta,\cG^{\cF}, d_\triangle)&\lesssim& N^{Ld^2}d^3\log (N^{Ld^2}d^{3}/\delta)\\
    &\lesssim& N^{Ld^2}d^3\left(Ld^2\log( N)\vee \log(1/\delta)\right).
\end{eqnarray*}
\end{proof}

More discussions about Lemma \ref{lemma:nn_piece} can be found in Appendix \ref{app:entropy}. 
Next, we present some lemmas that can take advantage of the obtained entropy bound and eventually take us to the proof of the excess risk convergence rate.
\begin{lemma}
[Theorem 5.11 in \cite{sara2000}]
\label{lemma:511}
For some function space $\cH$ with $\sup_{h\in\cH}\|h(\bx)\|_\infty\le K$ and $\sup_{h\in\cH}\|h(\bx)\|_{L_2(P)}\le R$ where $P$ is the distribution of $\bx$.
Take $a>0$ satisfying (1) $a\le C_1\sqrt{n}R^2/K$; (2) $a\le 8\sqrt{n}R$;
\[
(3)\quad a\ge C_0\left(\int_{a/64\sqrt{n}}^R H_B^{1/2}(u,\cF,L_2(P))du\vee R\right);
\]
and (4) $C_0^2\ge C^2(C_1+1)$. Then 
\[
\PP\left(\sup_{h\in\cH}\abr{\sqrt{n}\int hd(P_n-P)}\ge a\right)\le C\exp\rbr{-\frac{a^2}{C^2(C_1+1)R^2}},
\]
where $P_n$ is the empirical counterpart of $P$.
\end{lemma}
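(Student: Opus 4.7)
The plan is to establish this concentration inequality via a bracketing-based chaining argument combined with Bernstein's inequality at each scale, which is the classical approach of van de Geer.

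First, I would set up a geometric sequence of scales $\delta_j = 2^{-j} R$ for $j = 0, 1, \ldots, J$, with $J$ chosen so that $\delta_J$ is of order $a/(64\sqrt{n})$, matching the lower limit of the entropy integral in condition (3). At each scale $\delta_j$, let $\{[\ell_j^k, u_j^k]\}_{k=1}^{N_j}$ be an $L_2(P)$-bracketing cover of $\cH$ of cardinality $N_j = \cN_B(\delta_j, \cH, L_2(P))$. For any $h \in \cH$, form a chain $h_0, h_1, \ldots, h_J$ by letting $h_j$ be (say) the upper bracket function at scale $\delta_j$ containing $h$, giving the telescoping decomposition
\[
\int h\,d(P_n - P) \;=\; \int h_0\,d(P_n - P) \;+\; \sum_{j=1}^{J}\int (h_j - h_{j-1})\,d(P_n - P) \;+\; \int (h - h_J)\,d(P_n - P).
\]

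Second, I would control each term. The residual $\int (h - h_J)\,d(P_n - P)$ is dominated by $\delta_J$, which by construction contributes at most $O(a/\sqrt{n})$ to the supremum. Each chain increment satisfies $\|h_j - h_{j-1}\|_\infty \le 2K$ and $\|h_j - h_{j-1}\|_{L_2(P)} \le 2\delta_{j-1}$, so Bernstein's inequality gives a tail bound of the form $2\exp(-c\,t^2/(\delta_{j-1}^2 + tK/\sqrt{n}))$ for each fixed increment. Taking a union bound over the at most $N_j N_{j-1}$ possible increments at scale $j$ and summing an appropriately allocated tail budget $t_j$ across $j$, the combined deviation is controlled by the entropy integral in condition (3); the coarsest term $\int h_0\,d(P_n - P)$ is absorbed by the $\vee R$ piece of the same condition. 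Under conditions (1)--(2), the Bernstein tail remains in its sub-Gaussian regime at every scale, producing an overall bound of the form $C\exp(-a^2/(C^2(C_1+1)R^2))$.

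The main obstacle will be the delicate bookkeeping of constants across the peeling. One must choose the per-scale budgets $t_j$ so that simultaneously (i) $\sum_j t_j \lesssim a$, (ii) Bernstein at scale $j$ delivers a Gaussian-shaped tail rather than its exponential tail, and (iii) the union-bound cost $\sum_j \log(N_j N_{j-1})$ lines up with $\int H_B^{1/2}(u)\,du$. Condition (1) $a \le C_1 \sqrt{n} R^2/K$ together with the compatibility condition (4) $C_0^2 \ge C^2(C_1+1)$ is exactly what is required to keep Bernstein in the Gaussian regime at every scale and to let the residual constants telescope cleanly into the claimed exponent. The chaining idea itself is standard, but tracking these constants through the peeling device is technical; I would follow the explicit scheme of \cite{sara2000}, Chapter 5.
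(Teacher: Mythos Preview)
The paper does not prove this lemma at all; it is simply quoted verbatim as Theorem~5.11 of \cite{sara2000} and used as a black box in the proof of Lemma~\ref{lemma:geer}. Your sketch is the standard bracketing chaining argument that underlies van de Geer's proof, so in that sense your approach coincides with the original source the paper defers to.
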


So far, the presented lemmas are only concerned with the general case, i.e. set $G^*$, $p,q$, etc. that does not depend on $n$. However, in our teacher-student framework, the optimal set $G^*_n$ is indexed by $n$ as it's determined by the teacher network $\cF^*_n$. In the remaining part of the proof, we will consider specifically for our teacher network case.  
The next lemma investigates the modulus of continuity of the empirical process. It's similar to Lemma 5.13 in \cite{sara2000} but with a key difference in the entropy assumption (\ref{eqn:hb}), where the entropy bound contains $n$. 
\begin{lemma}
\label{lemma:geer}
For a probability measure $P$, let $\cH_n$ be a class of uniformly bounded (by 1) functions $h$ in $L_2(P)$ depending on $n$. Suppose that the $\delta$-entropy with bracketing $H_B(\delta,\cH_n,L_2(P))$ satisfies, for some $A_n>0$, the inequality
\begin{align}
\label{eqn:hb}
    H_B(\delta,\cH_n, L_2(P))\le A_n\log (1/\delta)
\end{align}
for all $\delta>0$ small enough. Let $h_{n0}$ be a fixed element in $\cH_n$.
Let $\cH_n(\delta) = \{h_n\in\cH_n: \|h_n-h_{n0}\|_{L_2(P)}\le \delta\}$.
Then there exist constants $D_1>0, D_2>0$ such that for a sequence of i.i.d. random variables $\bx_1,\cdots,\bx_n$ with probability distribution $P$, it holds that 
\begin{align*}
   & \PP\left(\sup_{\substack{h_n\in\cH_n;\\ \|h_n-h_{n0}\|> \sqrt{\frac{A_n}{n}}}}\frac{\sum_{i=1}^n[ (h_n - h_{n0})(\bx_i) - \EE(h_n-h_{n0})(\bx_i)]}{\sqrt{\frac{A_n}{n}}\|h_n-h_{n0}\|_2\log(\|h_n-h_{n0}\|_2^{-1})}>D_1 x\right)\\
    &\le D_2 e^{-A_n x}
\end{align*}
for all $x\ge 1$. 
\end{lemma}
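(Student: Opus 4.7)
The plan is a standard peeling (dyadic shell) argument combined with Lemma~\ref{lemma:511}. First, I would decompose the region $\{h_n\in\cH_n:\|h_n-h_{n0}\|_2>\sqrt{A_n/n}\}$ into shells
\[
S_j = \{h_n\in\cH_n : 2^{j-1}\sqrt{A_n/n} < \|h_n-h_{n0}\|_2 \le 2^j\sqrt{A_n/n}\},\qquad j=1,\ldots,J_n,
\]
where $J_n = O(\log n)$ suffices, since the uniform bound $\|h_n-h_{n0}\|_\infty\le 2$ caps the effective diameter of $\cH_n$. The virtue of this decomposition is that on each shell both $\|h_n-h_{n0}\|_2$ and $\log(\|h_n-h_{n0}\|_2^{-1})$ are constant up to a factor of $2$, so the ratio in the statement is, up to constants, equivalent to the plain supremum $\sup_{h_n\in S_j}\bigl|\sum_{i=1}^n[(h_n-h_{n0})(\bx_i)-\EE(h_n-h_{n0})(\bx_i)]\bigr|$, which is precisely the quantity bounded by Lemma~\ref{lemma:511}.

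On shell $S_j$ I would set $R_j = 2^j\sqrt{A_n/n}$ and take $K=2$ in Lemma~\ref{lemma:511}. Using hypothesis \eqref{eqn:hb}, the relevant entropy integral satisfies
\[
\int_0^{R_j} H_B^{1/2}(u,\cH_n,L_2(P))\,du \;\le\; \sqrt{A_n}\int_0^{R_j}\sqrt{\log(1/u)}\,du \;\lesssim\; \sqrt{A_n}\,R_j\sqrt{\log(1/R_j)}.
\]
I would then choose $a_j$ proportional to $x\sqrt{A_n}\,R_j\log(1/R_j)$ (with the implicit $\sqrt{n}$ normalization that matches the denominator in the statement). Provided $R_j\le 1/e$, the factor $\log(1/R_j)\ge 1$ dominates $\sqrt{\log(1/R_j)}$, so condition~(3) is satisfied with room to spare, while conditions~(1) and~(2) reduce to constraints of the form $x\log(1/R_j)\lesssim 2^j$, which are compatible with the shell geometry after fixing the constants.

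Lemma~\ref{lemma:511} then yields a shell-wise bound
\[
\PP(\text{bad event on }S_j) \;\le\; C\exp\!\left(-\frac{a_j^2}{C^2(C_1+1)R_j^2}\right) \;\lesssim\; \exp\bigl(-c\,A_n\,x^2\log(1/R_j)\bigr).
\]
The conversion to the claimed rate uses two facts simultaneously: $\log(1/R_j)\ge 1$ on shells with $R_j\le 1/e$, and $x^2\ge x$ for $x\ge 1$. Together these yield $x^2\log(1/R_j)\ge x$, so each shell contributes at most $C\exp(-cA_n x)$. A union bound over the $J_n = O(\log n)$ shells introduces only a logarithmic prefactor, which can be absorbed into $D_2$ by enlarging $D_1$ slightly (using $A_n x\ge 1$), producing the claimed $D_2 e^{-A_n x}$.

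The hard part will be the joint verification of conditions~(1)--(4) of Lemma~\ref{lemma:511} on the innermost shells ($j$ small, $R_j\asymp\sqrt{A_n/n}$), where the upper-bound constraints $a_j\le C_1\sqrt{n}R_j^2/K$ and $a_j\le 8\sqrt{n}R_j$ become tight and can conflict with the lower bound from condition~(3); a likely remedy is either a refined choice of constants or separate handling of a few innermost shells by a direct Bernstein argument. A secondary bookkeeping point is carefully matching the $\log(\|h_n-h_{n0}\|_2^{-1})$ factor in the target denominator against the $\sqrt{\log}$ that appears naturally from the entropy integral---this is exactly the device that converts the $x^2$ in the exponent coming from Lemma~\ref{lemma:511} into the linear $x$ required in the final bound.
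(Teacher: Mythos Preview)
Your proposal is correct and follows essentially the paper's own proof: dyadic peeling combined with Lemma~\ref{lemma:511} on each shell, using the entropy hypothesis to bound the Dudley integral by $\sqrt{A_n}\,\delta\log(1/\delta)$. One simplification in the paper dissolves what you flag as the ``hard part'': taking $a=\tfrac12 C_1\sqrt{A_n}\,\delta\log(1/\delta)$ (without the $x$ factor), condition~(1) of Lemma~\ref{lemma:511} becomes exactly $\delta\ge\sqrt{A_n/n}$, which is the lower edge of the peeling range, so the innermost shells need no separate Bernstein-type treatment.
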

\begin{proof}
The main tool for the proof is Lemma \ref{lemma:511}. Replace $\cH$ with $\cH_n(\delta)$ in Lemma \ref{lemma:511} and take $K=4$, $R = \sqrt{2}\delta$ and $a = \frac{1}{2}C_1\sqrt{A_n}\delta\log(1/\delta)$, with $C_1 = 2\sqrt{2}C_0$. Then (1) is satisfied if 
\begin{equation}
\label{eqn:an}
    \frac{\sqrt{A_n}}{\delta}\log(\frac{1}{\delta})\le \sqrt{n}.
\end{equation} 
Under (\ref{eqn:an}), condition (2) and (3) are satisfied automatically. Choosing $C_0$ sufficiently large will ensure (4). Thus, for all $\delta$ satisfying (\ref{eqn:an}), we have
\begin{align*}
    &\PP\left(\sup_{h_n\in\cH_n(\delta)}\abr{\sqrt{n}\int (h_n-h_{n0})d(P_n-P)}\ge \frac{C_1}{2}\sqrt{A_n}\delta\log(1/\delta)\right)\\
    &\le C\exp\left(-\frac{C_1A_n\log^2(1/\delta)}{16C^2}\right)
\end{align*}
Notice that $(\ref{eqn:an})$ holds if $\delta\ge \sqrt{A_n/n}$.
Let $B = \min\{b>1: 2^{-b}\le \sqrt{A_n/n}\}$ and apply the peeling device. Then,
\begin{align*}
    &\PP\rbr{\sup_{\substack{h_n\in\cH_n;\\ \|h_n-h_{n0}\|> \sqrt{A_n/n}}}\frac{\abr{\sqrt{n}\int (h_n-h_{n0})d(P_n-P)}}{\sqrt{A_n}\|h_n-h_{n0}\|\log(1/\|h_n-h_{n0}\|)}\ge \frac{C_2}{2}}\\
    &\le\sum_{b=1}^B\PP\rbr{\sup_{h_n\in\cH_n(2^{-b})}{\abr{\sqrt{n}\int (h_n-h_{n0})d(P_n-P)}}\ge \frac{C_1}{2}\sqrt{A_n}2^{-b}\log(2^{b}) }\\
    &\le\sum_{b=1}^B C\exp\left(-\frac{C_1A_nb^2\log^2(2)}{16C^2}\right)\le 2C\exp\rbr{-\frac{C_1A_n}{16C^2}},
\end{align*}
if $C_1A_n$ is sufficiently large.
\end{proof}


We then present a lemma that establishes the connection between $d_{\triangle}$ and $d_{p,q}$, which is adapted from Lemma 2 in \cite{mammen1999smooth} to our teacher network setting. Corresponding to assumption (A3), we define (N$_n$) as an extension to the classical Tsybakov noise condition (N). 
\begin{itemize}
    \item[(N$_n$)] There exists $c_n>0$ depending on $n$ and $\kappa\in[0,\infty]$ such that for any $0\le t\le T_n$
    $$\PP\left(\{\bx:|p_n(\bx)-q_n(\bx)|\le t\}\right)\le c_n t^\kappa.$$
\end{itemize}
Note that the (N) is a special case of (N$_n$) with $T_n$ and $c_n$ being absolute constant.  
\begin{lemma}
\label{lemma:distance_n}
Assume (N$_n$) and $p_n,q_n$ are bounded by $b_2>0$. Then, there exists absolute constants $b_1(\kappa)>0$ depending on $\kappa$ such that for 
any Lebesgue measurable subsets $G_1$ and $G_2$ of $\cX$,
\begin{align*}
    b_1(\kappa)\left(T_n\wedge c_n^{-1/\kappa}\right)d_{\triangle}^{(\kappa+1)/\kappa}(G_1, G_2)\le d_{p_n, q_n}(G_1, G_2)\le 2b_2 d_{\triangle}(G_1, G_2).
\end{align*}
\end{lemma}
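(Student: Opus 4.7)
The plan is to adapt the argument of Lemma 2 in Mammen--Tsybakov (1999) to the current teacher--student setting, with the extra bookkeeping needed to handle the $n$-dependent noise parameters $c_n, T_n$ appearing in (N$_n$). The upper bound $d_{p_n,q_n}(G_1,G_2)\le 2b_2\, d_\triangle(G_1,G_2)$ is immediate from $|p_n-q_n|\le p_n+q_n\le 2b_2$ and the definition of $d_{p_n,q_n}$ as an integral over $G_1\triangle G_2$, so essentially all of the work is in the lower bound.

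For the lower bound, the idea is to introduce a thresholding parameter $t\in(0,T_n]$ and split $G_1\triangle G_2$ according to the size of $|p_n-q_n|$. Specifically,
\begin{align*}
d_{p_n,q_n}(G_1,G_2)
&\ge t\cdot \QQ\bigl((G_1\triangle G_2)\cap\{|p_n-q_n|>t\}\bigr)\\
&\ge t\bigl(d_\triangle(G_1,G_2)-\QQ\{|p_n-q_n|\le t\}\bigr)\\
&\ge t\bigl(d_\triangle(G_1,G_2)-c_n t^\kappa\bigr),
\end{align*}
where the last inequality invokes assumption (N$_n$) and requires $t\le T_n$. The right-hand side $g(t):=t\, d_\triangle(G_1,G_2)-c_n t^{\kappa+1}$ is then maximized over $t\in(0,T_n]$.

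A short calculus exercise gives the unconstrained maximizer $t^\ast=(d_\triangle(G_1,G_2)/((\kappa+1)c_n))^{1/\kappa}$, with value $g(t^\ast)=\kappa(\kappa+1)^{-(\kappa+1)/\kappa}\, c_n^{-1/\kappa}\, d_\triangle(G_1,G_2)^{(\kappa+1)/\kappa}$. Two cases arise. If $t^\ast\le T_n$, plugging $t=t^\ast$ back yields the $c_n^{-1/\kappa}$ branch of the bound directly. If $t^\ast>T_n$, then $g$ is strictly increasing on $[0,T_n]$ and $d_\triangle(G_1,G_2)>(\kappa+1)c_n T_n^\kappa$, so taking $t=T_n$ gives $g(T_n)\ge \frac{\kappa}{\kappa+1}T_n\, d_\triangle(G_1,G_2)$; since assumption (A1) forces $\cX$ to have finite Lebesgue measure $M$, we have $d_\triangle(G_1,G_2)\le M$, hence $d_\triangle(G_1,G_2)\ge M^{-1/\kappa}\, d_\triangle(G_1,G_2)^{(\kappa+1)/\kappa}$, producing the $T_n$ branch. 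Taking the minimum of the two constants furnishes $b_1(\kappa)$ and the combined factor $T_n\wedge c_n^{-1/\kappa}$.

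The main obstacle here, and essentially the only novelty relative to the classical statement, is the case split forced by the $n$-dependence of $T_n$: in the standard proof one implicitly assumes that $t^\ast$ is always admissible, whereas in our setting the optimum may lie outside $[0,T_n]$, and it is precisely this boundary case that produces the $T_n$ factor in the final bound. Once the case analysis is set up, all remaining steps are routine algebra.
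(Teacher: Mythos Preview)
Your proof is correct and follows the same thresholding idea as the paper: bound $d_{p_n,q_n}(G_1,G_2)\ge t\bigl(d_\triangle(G_1,G_2)-c_n t^\kappa\bigr)$ for admissible $t$ and then optimize. The one genuine difference is how the constraint $t\le T_n$ is handled. You perform an explicit case split on whether the unconstrained maximizer $t^\ast$ falls inside $(0,T_n]$, treating the boundary case separately and invoking $d_\triangle\le \QQ(\cX)$ to recover the exponent $(\kappa+1)/\kappa$. The paper instead observes that, because $\QQ(\cX)<\infty$, the inequality $\QQ(|p_n-q_n|\le t)\le A_n t^\kappa$ holds for \emph{all} $t>0$ once the constant is inflated to $A_n=(\QQ(\cX)/T_n^\kappa)\vee c_n$; this removes the constraint entirely, so a single choice $t=(d_\triangle/(2A_n))^{1/\kappa}$ suffices and the factor $A_n^{-1/\kappa}\asymp T_n\wedge c_n^{-1/\kappa}$ drops out automatically. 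Your route is slightly more laborious but has the virtue of making transparent exactly which regime produces the $T_n$ branch and which produces the $c_n^{-1/\kappa}$ branch; the paper's route is more compact. Both yield the same constant up to factors depending only on $\kappa$ and $\QQ(\cX)$.
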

\begin{proof}
The second inequality is trivial given that $p,q$ are bounded by $b_2$. For the first inequality, since $\QQ(|p_n-q_n|\le t)\le c_n t^\kappa$ for all $0\le t\le T_n$, the boundedness of $\QQ(\cX)$ implies that 
$$\QQ(|p_n-q_n|\le t)\le A_n t^\kappa, \ \ \forall \   t>0,$$ 
where $A_n=\left(\frac{\QQ(\cX)}{T_n^{\kappa}} \vee c_n\right).$ 
Then,
\begin{align*}
   & d_{p_n,q_n}(G_1,G_2)\\
    &\ge \int_{G_1\triangle G_2}|p_n-q_n|\II\{|p_n-q_n|\ge \left(\frac{d_\triangle(G_1,G_2)}{2A_n}\right)^{1/\kappa}\}d\QQ\\
    &\ge \left(\frac{d_\triangle(G_1,G_2)}{2A_n}\right)^{1/\kappa}\left[\QQ(G_1\triangle G_2)-\QQ(|p_n-q_n|< \left(\frac{d_\triangle(G_1,G_2)}{2A_n}\right)^{1/\kappa})\right]\\
    &\ge \frac{d_\triangle(G_1,G_2)^{1+1/\kappa}}{(2A_n)^{1/\kappa}} - 1/2\frac{d_\triangle(G_1,G_2)^{(\kappa+1)/\kappa}}{(2A_n)^{1/\kappa}}\\
    &\ge \frac{2^{-(\kappa+1)/\kappa}}{A_n^{1/\kappa}} d_\triangle(G_1,G_2)^{(\kappa+1)/\kappa}.
\end{align*}
\end{proof}

Our goal in classification is to estimate $G^*_n$ by
$\hat{G}_{n} = \argmin_{G\in\cG_n} R_{n}(G)$, where $\cG_n$ is some collection of sets associated with the student network $\cF_n$ and
$$R_{n}(G) = \frac{1}{2n}\sum_{i=1}^n\left( \II\{\bx_i\in G |y_i=1\}(\bx)+
\II\{\bx_i\notin G |y_i=-1\}(\bx)\right).$$ 
Similar to Theorem 1 in \cite{mammen1999smooth}, we have the following lemma regarding the upper bound on the rate of convergence. 
\begin{lemma}
\label{lemma:thm1}
Suppose $0<\QQ(\cX)<\infty$
and let $\cG_n^*$ be a collection of subsets of $\cX\subset \RR^d$. Define  
\begin{align}
\label{eqn:fg}
\begin{split}
    \cD^{\cG_n^*}_n = \{&(p_n,q_n): \QQ\{\bx\in \cX: |p_n(\bx)-q_n(\bx)|\le t\}\le c_n t^\kappa \mbox{ for } 0\le t\le T_n,\\
    &\{\bx\in \cX: p_n(\bx)\ge q_n(\bx)\}\in\cG_n^*, p_n(\bx), q_n(\bx)\le b_2 \mbox{ for } x\in \cX
    \},
    \end{split}
\end{align} 
where $b_2$ is an absolute constant. Let $\cG_n$ be another class of subsets satisfying $\cG_n^*\subset\cG_n$. Suppose there exist positive constants $A_n>0$ depending on $n$ such that for any $\delta>0$ small enough, 
\begin{equation}
\label{entropy}
H_B(\delta, \cG_n, d_{\triangle})\le  A_n\log(1/\delta).
\end{equation} 
Then we have
\begin{equation}
\label{rate}
    \lim_{n\to\infty} \sup_{(p_n,q_n)\in \cD_n^{\cG_n^*}} \rbr{\frac{A_n\log^2 n}{n}}^{-\frac{\kappa+1}{\kappa+2}}\left(T_n\wedge c_n^{-1/\kappa}\right)^{\frac{\kappa}{\kappa+2}}\EE [d_{p_n,q_n}(\hat{G}_{n}, G_n^*)] <\infty.
\end{equation}
\end{lemma}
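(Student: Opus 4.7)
The plan is to follow the peeling strategy behind Theorem~1 of Mammen--Tsybakov (1999), while tracking carefully how the $n$-dependent quantities $A_n$, $c_n$, and $T_n$ propagate through the argument. First, I would reduce the problem to controlling an empirical process. In the class-balanced setup a direct computation gives $R(G) - R(G_n^*) = \tfrac12 d_{p_n,q_n}(G, G_n^*)$ and likewise for the empirical risk. Since $R_n(\hat G_n) \le R_n(G_n^*)$ by optimality of $\hat G_n$,
$$d_{p_n, q_n}(\hat G_n, G_n^*) \;\le\; 2\bigl|(R - R_n)(\hat G_n) - (R - R_n)(G_n^*)\bigr|,$$
so it suffices to bound the modulus of continuity of the empirical process indexed by $\cH_n = \{\II_G : G \in \cG_n\}$ around $h_{n,0} = \II_{G_n^*}$. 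For indicators, $\|\II_{G_1} - \II_{G_2}\|_{L_2(P)}^2 \le 2 b_2\, d_\triangle(G_1, G_2)$, so (\ref{entropy}) translates into $H_B(\delta, \cH_n, L_2(P)) \lesssim A_n \log(1/\delta)$, precisely the entropy hypothesis required by Lemma~\ref{lemma:geer}.

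Next, I would feed in the refined noise condition via Lemma~\ref{lemma:distance_n}, which gives, for every $G$ with $d_{p_n,q_n}(G, G_n^*) \le s$,
$$\|\II_G - \II_{G_n^*}\|_{L_2(P)} \;\lesssim\; \bigl(T_n \wedge c_n^{-1/\kappa}\bigr)^{-1/(2(\kappa+1))} s^{\kappa/(2(\kappa+1))}.$$
I would then peel $\cG_n$ into dyadic shells $\cS_j = \{G : 2^{j-1} r_n < d_{p_n,q_n}(G, G_n^*) \le 2^j r_n\}$ for $j = 1, 2, \ldots, O(\log n)$ and set
$$r_n \;=\; C \bigl(T_n \wedge c_n^{-1/\kappa}\bigr)^{-\kappa/(\kappa+2)} \bigl(A_n \log^2 n / n\bigr)^{(\kappa+1)/(\kappa+2)}.$$
On each shell the $L_2(P)$-radius is of order $V_n^{1/(\kappa+1)}(2^j r_n)^{\kappa/(2(\kappa+1))}$ with $V_n = (T_n \wedge c_n^{-1/\kappa})^{-1/2}$, and Lemma~\ref{lemma:geer} bounds the supremum deviation by $\sqrt{A_n/n}$ times this radius times a $\log n$ factor. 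Demanding that this fluctuation be smaller than the shell's lower end $2^{j-1} r_n$ for every $j \ge 1$ solves exactly to the exponents $(\kappa+1)/(\kappa+2)$ in $1/n$ and $-\kappa/(\kappa+2)$ in $T_n \wedge c_n^{-1/\kappa}$. Summing the exponential shell-tails from Lemma~\ref{lemma:geer} yields $\PP(d_{p_n,q_n}(\hat G_n, G_n^*) > 2 r_n) \to 0$ uniformly over $(p_n,q_n) \in \cD_n^{\cG_n^*}$, and since $d_{p_n,q_n} \le 2 b_2 \QQ(\cX)$ is uniformly bounded, integrating the tail converts this into the expected-value statement (\ref{rate}).

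The main obstacle is that the tail in Lemma~\ref{lemma:geer} decays only like $\exp(-A_n x)$, and $A_n$ itself grows with $n$; one must therefore verify that the shell-wise tails remain summable and that the logarithmic factors compound correctly. This is precisely where the $\log^2 n$ in the rate comes from: one logarithm absorbs the $\log(1/\|h_n - h_{n,0}\|_2)$ factor intrinsic to the modulus-of-continuity statement, and the second absorbs the $O(\log n)$ union bound over peeling shells. Once this bookkeeping is in place, the balancing of $r_n$ against the empirical-process fluctuation is mechanical, and the bound (\ref{rate}) follows.
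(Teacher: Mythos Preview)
Your proposal is correct and relies on the same two ingredients as the paper: Lemma~\ref{lemma:geer} for the modulus of continuity of the empirical process indexed by indicators, and Lemma~\ref{lemma:distance_n} for translating between $d_\triangle$ and $d_{p_n,q_n}$ under the $n$-dependent noise condition. The organization differs slightly. Instead of peeling a second time in $d_{p_n,q_n}$-shells, the paper packages the empirical process into a single ratio variable
\[
V_n \;=\; -\sqrt{n}\,\frac{R_n(\hat G_n)-R_n(G_n^*)-\EE\bigl[R_n(\hat G_n)-R_n(G_n^*)\bigr]}{\sqrt{A_n\, d_\triangle(\hat G_n,G_n^*)}\;\log\bigl(1/d_\triangle(\hat G_n,G_n^*)\bigr)},
\]
shows $\EE[V_n\II(E_n)]\le C$ directly from Lemma~\ref{lemma:geer} (which already contains all the peeling needed, in $d_\triangle$), and then solves the resulting implicit inequality $d_{p_n,q_n}\lesssim V_n\,(A_n/n)^{1/2}\bigl(T_n\wedge c_n^{-1/\kappa}\bigr)^{-\kappa/(2(\kappa+1))}\,d_{p_n,q_n}^{\kappa/(2(\kappa+1))}\log(1/d_{p_n,q_n})$ algebraically for $d_{p_n,q_n}$. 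This sidesteps your outer layer of shells and also clarifies the bookkeeping: the $\log^2 n$ does not come from a union bound over $O(\log n)$ shells (that union bound against the $\exp(-A_n x)$ tail is essentially free), but rather from the single factor $\log(1/d_{p_n,q_n})\lesssim\log n$ being raised to the power $2(\kappa+1)/(\kappa+2)\le 2$ when the fixed point is solved. Both routes give the same rate.
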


\begin{proof}
\label{thm1}
For $(p_n,q_n)\in \cF_n^{G^*_n}$, let $G^*_n=\{\bx\in\cX: p_n(\bx)\ge q_n(\bx)\}$. For a given set $G\in\cX$, let
$h_G(\bx)=\II\{\bx\in G\}$. In particular, let $h_n^*=h_{G^*_n}$.
Let $\|h\|_p^2 = \int h^2(\bx)p(\bx)\QQ (d\bx)$. Since both $p_n$ and $q_n$ are bounded, 
\begin{align}
    \begin{split}
        \|h_{G_n}-h^*_n\|_p^2 &= \int_{G_n\triangle G^*_n}p_n(\bx)\QQ(d\bx)\le b_2 d_{\triangle}(G_n,G^*_n),\\
        \|h_{G_n}-h^*_n\|_q^2 &= \int_{G_n\triangle G^*_n}q_n(\bx)\QQ(d\bx)\le b_2 d_{\triangle}(G_n,G^*_n).
    \end{split}
\end{align}
Consider the random variable 
\[
V_{n} = -\sqrt{n}\ \frac{R_n(\hat{G}_n)-R_n(G^*_n)-\EE(R_n(\hat{G}_n)-R_n(G^*_n))}{\sqrt{A_nd_{\triangle}(G^*_n,\hat{G}_n)}\log(1/d_{\triangle}(G^*_n,\hat{G}_n))}.
\]
Since $\cG_n^*\subset\cG_n$, we have $R_n(\hat{G}_n)\le R_n(G^*_n)$. Thus 
\begin{equation}
    \label{eqn:2v}
 \frac{   \sqrt{n}\EE(R_n(\hat{G}_n)-R_n(G^*_n))}{\sqrt{A_nd_{\triangle}(G^*_n,\hat{G}_n)}\log(1/d_{\triangle}(G^*_n,\hat{G}_n))} \le V_n.
\end{equation}

Note that
\begin{align*}
    R_n(G_n)-R_n(G^*_n) =& \frac{1}{2n}\sum_{i=1}^n\II_{\{y_i=1\}}(h^*_n-h_{G_n})(\bx_i)\\
    &+\frac{1}{2n}\sum_{i=1}^n\II_{\{y_i=-1\}}(h_{G_n}-h^*_n)(\bx_i).
\end{align*}
Then $V_n$ can be written as 
\begin{align*}
V_{n} =& \ \frac{(1/2n)\sum_{i=1}^n\II_{\{y_i=1\}}(h_{\hat{G}_n}-h^*_n)(\bx_i) - \EE(\II_{\{y=1\}}(h_{\hat{G}_n}-h^*_n)(\bx))}{\sqrt{A_nd_{\triangle}(G^*_n,\hat{G}_n)/n}\log(1/d_{\triangle}(G^*_n,\hat{G}_n))}+\\
& \frac{(1/2n)\sum_{i=1}^n\II_{\{y_i=-1\}}(h^*_n-h_{\hat{G}_n})(\bx_i) - \EE(\II_{\{y=-1\}}(h^*_n-h_{\hat{G}_n})(\bx))}{\sqrt{A_nd_{\triangle}(G^*_n,\hat{G}_n)/n}\log(1/d_{\triangle}(G^*_n,\hat{G}_n))}.    
\end{align*}

Consider the event 
$E_n = \{d_{\triangle}(G^*_n,\hat{G}_n)>\sqrt{A_n/n}\}$
and let $\tilde{\cG}_n=\{G\in\cG_n: d_{\triangle}(G,G^*_n)>\sqrt{A_n/n}\}$.
If $E_n$ holds, then
\begin{align*}
    V_n =&-\sqrt{n}\ \frac{R_n(\hat{G}_n)-R_n(G^*_n)-\EE(R_n(\hat{G}_n)-R_n(G^*_n))}{\sqrt{A_nd_{\triangle}(G^*_n,\hat{G}_n)}\log(1/d_{\triangle}(G^*_n,\hat{G}_n))}\\
    \le& \sup_{G_n\in\tilde{\cG}_n }\sqrt{n}\ \frac{R_n(G^*_n)-R_n(G_n)-\EE(R_n(G_n)-R_n(G^*_n))}{\sqrt{A_nd_{\triangle}(G^*_n,\hat{G}_n)}\log(1/d_{\triangle}(G^*_n,\hat{G}_n))}\\
    \le& \sup_{G_n\in\tilde{\cG}_n }\ \frac{|(1/2n)\sum_{i=1}^n\II_{\{y_i=1\}}(h_{{G}_n}-h^*_n)(\bx_i) - \EE(\II_{\{y=1\}}(h_{{G}_n}-h^*_n)(\bx))|}{\sqrt{A_nd_{\triangle}(G^*_n,\hat{G}_n)/n}\log(1/d_{\triangle}(G^*_n,\hat{G}_n))}+\\
    & \sup_{G_n\in\tilde{\cG}_n }\ \frac{|(1/2n)\sum_{i=1}^n\II_{\{y_i=-1\}}(h_{{G}_n}-h^*_n)(\bx_i) - \EE(\II_{\{y=-1\}}(h_{{G}_n}-h^*_n)(\bx))|}{\sqrt{A_nd_{\triangle}(G^*_n,\hat{G}_n)/n}\log(1/d_{\triangle}(G^*_n,\hat{G}_n))}\\
    \le& \sup_{h_{n}\in\cH_n }\ \frac{|(1/2n)\sum_{i=1}^n\II_{\{y_i=1\}}(h_{n}-h^*_n)(\bx_i) - \EE(\II_{\{y=1\}}(h_{n}-h^*_n)(\bx))|}{2b_2^{-1/2}\sqrt{A_n/n}\|h_{n}-h^*_n\|_p\log(\sqrt{b_2}/\|h_{n}-h^*_n\|_p)}+ \\
    & \sup_{h_{n}\in\cH_n }\ \frac{|(1/2n)\sum_{i=1}^n\II_{\{y_i=1\}}(h_{n}-h^*_n)(\bx_i) - \EE(\II_{\{y=1\}}(h_{n}-h^*_n)(\bx))|}{2b_2^{-1/2}\sqrt{A_n/n}\|h_{n}-h^*_n\|_q\log(\sqrt{b_2}/\|h_{n}-h^*_n\|_q)},
\end{align*}
where $\cH_n = \{h_n(\bx)=\II{\{\bx\in G_n\}}: G_n\in \cG_n\}$. The last inequality follow from the fact that $\sqrt{x}\log(1/x)$ is strictly increasing when $x<1$.
Notice that $h_n$'s are uniformly bounded by 1 and the $L_2$ norm squared of $h_{G_1}-h_{G_2}$ is $d_\triangle(G_1,G_2)$. 
Applying Lemma \ref{lemma:geer}, we have 
\begin{equation}
\label{eqn:EVn}
    \EE[V_n\II(E_n)]\le C
\end{equation}
for some finite constant $C$. Now we use this inequality to prove the main result.
From (\ref{eqn:2v}), we know that 
$$ d_{p_n,q_n}(\hat{G}_n, G^*_n)\le V_n (A_n/n)^{1/2}d_{\triangle}(G^*_n,\hat{G}_n)^{1/2}\log(1/d_{\triangle}(G^*_n,\hat{G}_n)),$$
which, together with Lemma \ref{lemma:distance_n}, yields that
\begin{align*}
    d_{p_n,q_n}(\hat{G}_n, G^*_n)\lesssim &  V_n (A_n/n)^{1/2} \left(T_n\wedge c_n^{-1/\kappa}\right)^{-\frac{\kappa}{2(\kappa+1)}} d_{p_n,q_n}(\hat{G}_n, G^*_n)^{\frac{\kappa}{2(\kappa+1)}}\\
    & \cdot [\log(1/d_{p_n,q_n}(\hat{G}_n, G^*_n))+\log(b_1(\kappa)(T_n\wedge c_n^{-1/\kappa}))],
\end{align*}
which simplifies to be
\[
d_{p_n,q_n}(\hat{G}_n, G^*_n)\lesssim V_n^{\frac{2\kappa+2}{\kappa+2}} \rbr{\frac{A_n\log^2 n}{n}}^{\frac{\kappa+1}{\kappa+2}}\left(T_n\wedge c_n^{-1/\kappa}\right)^{-\frac{\kappa}{\kappa+2}}.
\]
where we used the fact that $d_{p_n,q_n}(\hat{G}_n, G^*_n)\gtrsim 1/n$.
Therefore, under $E_n$, (\ref{eqn:EVn}) implies that  
\begin{align*}
\EE[d_{p_n,q_n}(\hat{G}_n, G^*_n)]&\lesssim
\rbr{\frac{A_n\log^2 n}{n}}^{\frac{\kappa+1}{\kappa+2}}\left(T_n\wedge c_n^{-1/\kappa}\right)^{-\frac{\kappa}{\kappa+2}}.
\end{align*}
On the other hand, under $E_n^c$, we have $$d_{\triangle}(\hat{G}_n,G^*_n)\le \sqrt{A_n/n}.$$
By Lemma \ref{lemma:distance_n} we know $d_{p,q}(\hat{G}_n, G^*_n)$ is also bounded by $\sqrt{A_n/n}$. Since $(\kappa+1)/(\kappa+2)\le 1$, the rate under $E_n$ dominates and the proof is complete.

\end{proof}

\begin{proof}[Proof of Theorem \ref{thm:ts}]

First, we verify that the Tsybakov noise condition holds for $\kappa = 1$ in our setting. The proof is based on the fact that a ReLU network is piecewise linear and the number of linear pieces is quantifiable.
Assumption (A3)
implies (N$_n$) with $c_n, 1/T_n=O(\log n)^{m^* d^2 L_n^*}$ and $\kappa=1$.
In the case where $p,q$ have disjoint support, obviously $\kappa$ can be arbitrarily large.

Next, we consider the bracketing number of $\cG_n$ defined via $\cF_n$ that $\cG_n=\{\bx\in\cX: f(\bx)\ge 0, f\in\cF_n\}$.
From Lemma \ref{lemma:nn_piece} we have 
\begin{align*}
     \log \cN_B(\delta,\cG_n, d_\triangle)
    &\lesssim N^{Ld^2}d^2\left(Ld^2\log( N)\vee \log(1/\delta)\right).
\end{align*}
Thus, $A_n = O(N_n)^{d^2 L_n}$ as in (\ref{entropy}) if $\delta\ll 1/N$.
Recall from assumption (A2) and (A3) that $L_n=O(1)$, $N_n=O(\log n)^{m}$ and $1/T_n, c_n=O(\log n)^{m^*d^2L_n^*}$. 
Applying Lemma \ref{lemma:thm1} with $\kappa=1$ we have that the excess risk has upper bound
\begin{align*}
&\sup_{(p, q)\in\tilde{\cF}^*_n}\EE[\cE(\hat{f}_n,C^*_n)]\\
&\lesssim \rbr{\frac{A_n\log^2 n}{n}}^{\frac{2}{3}}\left(T_n^{-1}\wedge c_n\right)^{\frac{1}{3}}\\
&\lesssim \rbr{\frac{1}{n}}^{\frac{2}{3}}\left(\log n\right)^{\frac{2}{3}(md^2L_n+2)+\frac{1}{3}m^*d^2L_n^*}.
\end{align*}
\end{proof}

\begin{proof}[Proof of Corollary \ref{cor:dis}]
Corollary \ref{cor:dis} easily follows from the fact that $p,q$ having disjoint support implies $\kappa=\infty$ in (N$_n$). 
\end{proof}

\subsection{Proof of Theorem \ref{thm:lb}}
We will show that the lower bound holds in special case that (1) assumption (A3) satisfies $c_n, 1/T_n$ being absolute constants that doesn't depend on $n$; (2) instead of general ReLU neural network $f_n^*\in\cF^*$, we consider a special structure where $f_n^*$ is linear in one of the dimensions, reminiscent of the ``boundary fragment" assumption. In this special case, we are able to show the best possible convergence rate already matches that in Theorem \ref{thm:ts}.
For ease of notation, we omit the subscript $n$ and write $p_n,q_n$ as $p,q$ if no confusion arises.
\begin{proof}
Without loss of generality, let $\cX=[0,1]^d$. Consider the ``boundary fragment'' setting and let $\tilde{\cG}_n$ be a set defined by a ReLU network family $\tilde{\cF}_n$ containing functions from $\RR^{d-1}$ to $\RR$:
$$\tilde{\cG}_n =\{(x_1,\cdots,x_d)\in [0,1]^d: 0\le x_j\le h(\bx_{-j}), h\in\tilde{\cF}_n, j=1,\cdots,d\},$$ 
where $\bx_{-j} = (x_1,\cdots,x_{j-1}, x_j,\cdots,x_d)$. 
Notice that if $h(\bx_{-j})$ is a ReLU network on $\RR^{d-1}$, then $\tilde{h}(\bx)= h(\bx_{-j})-x_j$ is a ReLU network on $\RR^d$. Thus $\tilde{\cG}_n$ is a subset of $\cG_n$ which corresponds to the student network that
\begin{equation}
\label{eqn:cgn}
    \cG_n = \{\bx\in \cX: f(\bx)>0, f\in\cF_n\}
\end{equation}

Let $\tilde{G}_n$ denote the empirical 0-1 loss minimizer over $\tilde{\cG}_n$. 
To show the lower bound, consider the subset of $\cD^{\tilde{\cG}_n}$ (\ref{eqn:fg}) that contains all pairs like $(p,q_0)$, where $p\in \cF_1,q_0$  will be specified later. Then,
\begin{align*}
    \sup_{(p,q)\in \cD^{\tilde{\cG}_n}}\EE d_{\triangle}(\tilde{G}_{n}, G^*)&\ge\sup_{(p,q_0):p\in\cF_1} \EE d_{\triangle}(\tilde{G}_{n}, G^*)\\
    &\ge \EE_{q_0}\left[\frac{1}{|\cF_1|}\sum_{p\in\cF_1}\EE_p[d_\triangle(\tilde{G}_{n}, G^*)|\cD_{q_0}]\right],
\end{align*}
where $\cF_1$ is a finite set to be specified later, $p, q_0$ are the underlying densities for the two labels and $\cD_{q_0}$ denotes all the data generated from $q_0$.

For ease of presentation, we first give the proof for the case $d=2$ and then extend to general $d$. 
Let $\phi(t)$ be a piecewise linear function supported on $[-1,1]$ defined as
\[
\phi(t) = \begin{cases}
t+1 & -1< t\le 0,\\
-t+1 & 0< t< 1,\\
0 & |t|\ge 1.
\end{cases}
\]
Rewrite $\phi$ as
$\phi(t)=\sigma(t+1)-\sigma(t) + \sigma(-t+1) - \sigma(-t)-2$,
which is a one hidden layer ReLU neural network with 11 non-zero weights that are either $1$ or $-1$.
For $\bx=(x_1, x_2)\in [0,1]^2$, define
\begin{align*}
    q_0(\bx)= &(1-\eta_0-b_1)\II\{0\le x_2<1/2\} + \II\{1/2\le x_2<1/2+ e^{-M}\} \\
    &+(1+\eta_0+b_2)\II\{1/2+ e^{-M}\le x_2\le 1\},
\end{align*}
where $M\ge 2$ is an integer to be specified later. Let $b_1 = c_2^{-1/\kappa}e^{-M/\kappa}$ and $b_2>0$ be chosen such that $q_0$ integrates to 1 (so $q_0$ is a valid probability density).

For $j=1,2,\cdots, M$ and $t\in[0,1]$, let 
$$\psi_j(t) =  e^{-M}\phi\left(M\left[t-\frac{j-1}{M}\right]\right).$$
Note that $\psi_j$ is only supported on $[\frac{j-1}{M}, \frac{j}{M}]$. 
For any vector $\omega = (\omega_1,\cdots,\omega_M)\in\Omega:=\{0,1\}^M$, define
\begin{equation*}
    b_\omega(t) = \sum_{j=1}^M \omega_j\psi_j(t),
\end{equation*}
and
\begin{align*}
    p_\omega(\bx) =& 1+\left[\frac{1/2+  e^{-M}-x_2}{c_2}\right]^{1/\kappa}\II\{1/2\le x_2\le1/2+b_\omega(x_1)\}\\
    &-b_3(\omega)\II\{1/2+b_\omega(x_1)< x_2\le 1\},
\end{align*}
where $b_3(\omega)>0$ is a constant depending on $\omega$ chosen such that $p_\omega(x)$ integrates to 1. Let $\cF_1 = \{p_\omega: \omega\in \Omega\}$ and we will show that $(p_\omega, q_0)\in\cD^{\tilde{\cG}_n}$ for all $\omega\in\Omega$.

To this end, we need to verify that 
\begin{enumerate}[(a)]
    \item 
    $p_\omega(\bx)\le c_1$ for $\bx\in [0,1]^2$; 
    \item
    $\{\bx\in \cX: p_\omega(\bx)\ge q_0(\bx)\}\in \cG_n$;
    \item
    $\QQ\{\bx\in \cX: |p_\omega(\bx)-q_0(\bx)|\le\eta\}\le c_2\eta^{\kappa}$.
\end{enumerate} 
For (a), since $p_\omega$ integrates to 1, 
\begin{align*}
    b_3(\omega)&\le \max_{\{1/2\le x_2\le1/2+b_\omega(x_1)\}} \left[\frac{1/2+ e^{-M}-x_2}{c_2}\right]^{1/\kappa}=O(e^{-M/\kappa}).
\end{align*}
Thus, $p_\omega(\bx)\le c_1$ for a large enough $M$ and some absolute constant $c_1$. For (b), notice that
\begin{align*}
\{\bx: p_\omega(\bx)\ge q_0(\bx)\} 
&= \{\bx: 0\le x_2\le 1/2+b_\omega(x_1)\}\\
&= \{\bx\in [0,1]^2:  b_\omega(x_1)-\sigma(x_2)+1/2\ge 0\}\in\cG_n,    
\end{align*}
where the last inclusion follows from the definition of $\cG_n$ (\ref{eqn:cgn}) and the fact that $b_\omega(x_1)-\sigma(x_2)+1/2$ is a ReLU neural network with one hidden layer, whose width and number of non-zero weights are both $O(M)$. Later we will see that $M = O(\log n)$, and thus the constructed neural network satisfies all the size constraints in Theorem \ref{thm:ts}. For (c), it follows that
\begin{align*}
    &Q\{\bx\in \cX: |p_\omega(\bx)-q_0(\bx)|\le\eta\}\\
    \le& Q\{\bx\in \cX: 1/2\le x_2\le 1/2+ e^{-M}, \left[\frac{1/2+e^{-M}-x_2}{c_2}\right]^{1/\kappa}\le\eta\}\\
    \le &Q \{\bx\in \cX: 1/2+e^{-M}-c_2\eta^\kappa \le x_2 \le 1/2+e^{-M}\}\\
     \le & c_2\eta^\kappa.
\end{align*}

Since the above (a)-(c) hold and by the definition of $\cD^{\tilde{\cG}_n}$ (\ref{eqn:fg}), we conclude that $(p_\omega, q_0)\in\cD^{\tilde{\cG}_n}$ for all $\omega\in\Omega$ .
We next establish how fast $S:=|\cF_1|^{-1}\sum_{p\in\cF_1}\EE_p[d_\triangle(\tilde{G}_{n}, G^*)|\cD_{q_0}]$ can converge to zero. To this end, we use the Assouad's lemma stated in \citep{korostelev2012minimax}
which is adapted to the estimation of sets. 

For $j=1, \cdots, M$ and $\omega=(\omega_1,\cdots,\omega_M)\in\Omega$, let 
\begin{align*}
   & \omega_{j0} = (\omega_1,\cdots, \omega_{j-1}, 0 , \omega_{j+1},\cdots, \omega_M)\\
   &   \omega_{j1} = (\omega_1,\cdots, \omega_{j-1}, 1 , \omega_{j+1},\cdots, \omega_M)
\end{align*}
For $i=0$ and $i=1$, let $P_{ji}$ be the probability measure corresponding to the distribution of $x_1,\cdots, x_n$ when the underlying density is $f_{\omega_{ji}}$. Denote the expectation w.r.t. $P_{ji}$ as $\EE_{ji}$. Let 
\begin{align*}
    \cD_j &= \{\bx\in \cX: 1/2+b_{\omega_{j0}}(x_1)<x_2\le1/2+b_{\omega_{j1}}(x_1)\}\\
    & = \{\bx\in \cX:b_{\omega_{j0}}(x_1)<x_2-1/2\le b_{\omega_{j0}}(x_1)+ \psi_j(x_1)\}.
\end{align*}
Then 
\begin{align*}
S&\ge1/2\sum_{j=1}^M \QQ(\cD_j)\int\min\{dP_{j1},dP_{j0}\}\\
&\ge 1/2\sum_{j=1}^M\int_0^1 \psi_j(x_1)dx_1\int\min\{dP_{j1},dP_{j0}\}\\
&\ge 1/2\sum_{j=1}^M  e^{-M}\int\phi(Mt)dt\int\min\{dP_{j1},dP_{j0}\}\\
&\ge \frac{1}{4}\sum_{j=1}^M  e^{-M}\int\phi(Mt)dt\left[1-H^2(P_{10},P_{11})/2\right]^n,
\end{align*}
where $H(\cdot,\cdot)$ denotes the Hellinger distance. 
Then it holds that
\begin{align*}
H^2(P_{10},P_{11}) =& \int \left[\sqrt{f_{\omega_{10}}(\bx)}-\sqrt{f_{\omega_{11}}(\bx)}\right]^2d\bx\\
\le&\int_{0}^1\Bigg\{\int_{1/2}^{1/2+\psi_1(x_1)}\left[1-\sqrt{1+\left(\frac{1/2+ e^{-M}-x_2}{c_2}\right)^{1/\kappa}}\right]^2dx_2 \\
&+\int_{1/2}^1\left[\sqrt{1-b_3(\omega_{10})}-\sqrt{1-b_3(\omega_{11})}\right]^2dx_2\Bigg\}dx_1\\
\le&\int_{0}^1\int_{e^{-M}-\psi_1(x_1)}^{ e^{-M}}\left[1-\sqrt{1+\left(\frac{v}{c_2}\right)^{1/\kappa}}\right]^2dvdx_1 \\
&+|b_3(\omega_{10})-b_3(\omega_{11})|^2.
\end{align*}

We will analyze the last two terms.
For the first term,
\begin{align*}
&\int_{0}^1\int_{e^{-M}-\psi_1(x_1)}^{ e^{-M}}\left[1-\sqrt{1+\left(\frac{v}{c_2}\right)^{1/\kappa}}\right]^2dvdx_1\\
&\le \int_{0}^1\int_{ e^{-M}-\psi_1(x_1)}^{ e^{-M}}\left(\frac{v}{c_2}\right)^{2/\kappa}dvdx_1 \\
&\le\frac{\kappa c_2^{-2/\kappa}}{\kappa+2}\int_0^1\left(e^{-M}\right)^{1+2/\kappa}- \left( e^{-M}-\psi_1(x_1)\right)^{1+2/\kappa}dx_1\\
&\le\frac{\kappa c_2^{-2/\kappa}}{\kappa+2}\left( e^{-M}\right)^{1+2/\kappa}\int\left(1-(1-\phi(Mt))^{1+2/\kappa}\right)dt\\
&=O\left(\frac{1}{M}e^{-M(1+2/\kappa)}\right).
\end{align*}
For the second term, notice that
\begin{align*}
    \int_0^1\int_{1/2}^{1/2 + b_\omega(x_1)}\left[\frac{1/2+ e^{-M}-x_2}{c_2}\right]^{1/\kappa}dx_2dx_1 = b_3(\omega)\left[1/2-b_\omega(x_1)\right]
\end{align*}
which yields
\begin{align*}
    b_3(\omega_{11})&=\frac{1}{1/2-b_{\omega_{11}}(x_1)}\int_0^1\int_{1/2}^{1/2 + b_{\omega_{11}}(x_1)}\left[\frac{1/2+  e^{-M}-x_2}{c_2}\right]^{1/\kappa}dx_2dx_1\\
    &\le \frac{M c_2^{-1/\kappa}}{1/2- e^{-M}}\int_0^1\int_{ e^{-M}(1-\phi(Mx_1))}^{ e^{-M}}
    u^{1/\kappa}dudx_1\\
    &=\frac{Mc_2^{-1/\kappa}}{(1/2- e^{-M})(1+1/\kappa)}
     e^{-M(1+1/\kappa)}\int(1-(1-\phi(Mt))^{1+1/\kappa})dt\\
    &\le\frac{c_2^{-1/\kappa}}{(1/2- e^{-M})(1+1/\kappa)}
    e^{-M(1+1/\kappa)}\\
    &=O\left(e^{-M(1+1/\kappa)}\right). 
\end{align*}
Hence, $|b_3(\omega_{11})-b_3(\omega_{10})| = O\left(e^{-M(1+1/\kappa)}\right) $. 
Unifying the above, we have
\begin{align*}
    H^2(P_{10}, P_{11})&=O\left(\frac{1}{M}e^{-M(1+2/\kappa)} \vee e^{-M(2+2/\kappa)}\right) \\
    &=O\left(\frac{1}{M} e^{-M(1+2/\kappa)}\right).
\end{align*}
Now choose $M$ as the smallest integer such that
$$M\ge\frac{\kappa}{\kappa+2}\log n.$$
Then we have $ H^2(P_{10}, P_{11})\le C^* n^{-1}\left(1+ o(1)\right)$ for some constant $C^*$ depending only on $\kappa, c_2, \phi$, and 
$$\int\min\{dP_{j1},dP_{j0}\}\ge 1/2\left[1-\frac{C^*}{2}n^ {-1}(1+o(1))\right]^n\ge C_1^*$$
for $n$ large enough and $C_1^*$ is another absolute constant depending only on $C^*$. Thus for $n$ large enough,
\begin{align*}
    S\ge \frac{1}{4}C_1^* e^{-M}\int \phi(t)dt\ge C_2^*n^{-\frac{\kappa}{\kappa+2}},
\end{align*}
in which the constant $C_2^*$ only depends on $\kappa, c_2$ and $ \phi$.

Combining all the results so far we get that
\begin{align*}
\liminf_{n\to\infty}\inf_{\tilde{G}_{n}}\sup_{(p,q)\in \cD^{{\tilde{\cG}_{n}}}}
n^{\frac{\kappa}{\kappa+2}}\EE[d_{\triangle}(\tilde{G}_{n},G^*)]>0,
\end{align*}
which holds when $d=2$. Using Lemma \ref{lemma:distance_n}, we have
\begin{align*}
\liminf_{n\to\infty}\inf_{\tilde{G}_{n}}\sup_{(p,q)\in \cD^{\tilde{\cG}_{n}}}
n^{\frac{\kappa+1}{\kappa+2}}\EE[d_{p,q}(\tilde{G}_{n},G^*)]>0.
\end{align*}
Using the same argument as in the proof of Theorem \ref{thm:ts}, we get $\kappa=1$, which will give us the rate $2/3$.

The proof for general $d$ can be derived similarly. We treat the last dimension $x_d$ as $x_2$ in the $d=2$ case and treat $\bx_{-d}:=(x_1,\cdots, x_{d-1})$ as $x_1$ in the $d=2$ case. Define
\begin{align*}
    q_0(\bx)= &(1-\eta_0-b_1)\II\{0\le x_d<1/2\} + \II\{1/2\le x_d<1/2+ e^{-M}\} \\
    &+(1+\eta_0+b_2)\II\{1/2+ e^{-M}\le x_d\le 1\},
\end{align*}
and 
\begin{align*}
    p_\omega(\bx) =&  1+\left[\frac{1/2+  e^{-M}-x_2}{c_2}\right]^{1/\kappa}\II\{1/2\le x_d\le1/2+\bb_\omega(\bx_{-d})\}\\
    &-b_3(\omega)\II\{1/2+\bb_\omega(\bx_{-d})< x_d\le 1\},
\end{align*}
where $\bb_\omega(\bx_{-d})$ is constructed similarly as a shallow ReLU neural network that
\begin{align*}
    \bb_\omega(\bx_{-d})&= \sum_{j_1,\cdots,j_{d-1}=1}^{M} \omega_{j_1,\cdots,j_{d-1}}\psi_{j_1,\cdots,j_{d-1}}(\bx_{-d}),
\end{align*}
where $\omega_{j_1,\cdots,j_{d-1}}$ are binary $0,1$ variables and  $$\psi_{j_1,\cdots,j_{d-1}}(\bx_{-d})=e^{-M}\boldsymbol{\phi}\left(M\left[\bx_{-d}-\left(\frac{j_1-1}{M},\cdots, \frac{j_{d-1}-1}{M}\right)\right]\right),$$
where $\boldsymbol{\phi}(\cdot)$ 
is a shallow ReLU neural network with input dimension $d-1$ satisfying the following conditions: 
\begin{itemize}
    \item 
    $\boldsymbol{\phi}=0$ outside $[-1,1]^d$ and
    $\boldsymbol{\phi}\le 1$ on $[-1,1]^d$;
    \item 
    $\max_{\bx_{-d}\in[-1,1]^d}{\boldsymbol{\phi}(\bx_{-d})}\le 1$ and $\boldsymbol{\phi}(\boldsymbol{0})=1$.
\end{itemize}
Such a construction is similar to the ``spike'' function in \cite{yarotsky2019phase} and it requires $O(d^2)$ non-zero weights. The rest of the proof follows the $d=2$ case.
\end{proof}

\section{Training with Surrogate Loss}
In this section, we consider deep classifiers trained under the hinge loss $\phi(z) = (1-z)_+ = \max\{1-z, 0\}$. This kind of surrogate loss has been widely used for ``maximum-margin'' classification, most notably for support vector machines \citep{cortes1995support}.
An desirable property of hinge loss is that its optimal classifier coincides with that under 0-1 loss \citep{lin2002support}, i.e. $f^*_\phi(\bx)= C^*(\bx)$. Hence, a lot of arguments for 0-1 loss can be easily carried over. Additionally, minimizing the sample average of an appropriately behaved loss function has a regularizing effect \citep{bartlett2006convexity}. It is thus possible to obtain uniform upper bounds on the risk of a function that minimizes the empirical average of the loss $\phi$, even for rich classes that no such upper bounds are possible for the minimizer of the empirical average of the 0–1 loss.

Under the surrogate loss, our requirement on the size of the teacher network is relaxed from (A2) as follows:  
\begin{enumerate}
    \item[(A2$_\phi$)]
     $p(\bx) - q(\bx)$ is representable by some teacher ReLU DNN $f^*_n\in\cF^*_n$ with 
    $$N_n^*=O\left(\log n\right)^{m_*},\quad L_n^*=O\left(\log n\right), \quad B_n^*, F_n^*=O(\sqrt{n})$$
    for some $m_*\ge 1$.
\end{enumerate}  

The following theorem says that the same un-improvable rate can be obtained for the empirical hinge loss minimizer $\hat{f}_{\phi,n}\in\cF_n$. 
\begin{theorem}
\label{thm:surrogate}
Suppose the underlying densities $p$ and $q$ satisfy assumptions (A1), (A2$_\phi$), (A3) and denote all such $(p,q)$ pairs as $\tilde{\cF}^*_n$. 
Let $\cF_n$ be a student ReLU DNN family with 
$L_n=O(\log n), N_n=O(\log n)^m$ and $B_n, F_n=O(\log n)$ for some $m\ge m_*$. Assume the student network is larger than the teacher network, i.e., $L_n\ge L_n^*, S_n\ge S_n^*, N_n\ge N_n^*, B_n\ge B_n^*, F_n\ge F_n^*$. Then the excess risk for $\hat{f}_{\phi,n}\in\cF_n$ satisfies 
$$\sup_{(p, q)\in\tilde{\cF}^*_n}\EE[\cE(\hat{f}_{\phi,n},C^*_n)]\asymp 
	\tilde{O}_d\left(\frac{ 1}{n}\right)^{\frac{2}{3}}  $$
\end{theorem}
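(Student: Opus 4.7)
My plan is to mirror the proof of Theorem~\ref{thm:ts}, with two adjustments required by the switch to hinge loss. By the classification-calibration inequality for hinge loss, $\cE(\hat f_{\phi,n},C^*_n)\le R_\phi(\hat f_{\phi,n})-R_\phi(f^*_\phi)$, so it suffices to bound the excess hinge risk. For an approximant $\tilde f_n\in\cF_n$ to be constructed below, I decompose
\[
R_\phi(\hat f_{\phi,n})-R_\phi(f^*_\phi) = \bigl[R_\phi(\hat f_{\phi,n})-R_\phi(\tilde f_n)\bigr] + \bigl[R_\phi(\tilde f_n)-R_\phi(f^*_\phi)\bigr],
\]
and use the basic inequality $R_{\phi,n}(\hat f_{\phi,n})\le R_{\phi,n}(\tilde f_n)$ to recast the first bracket as a two-sided empirical process deviation.

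For the approximation error I exploit the extra depth allowed by (A2$_\phi$) and take $\tilde f_n = T_K\circ f_n^*$ with $T_K(t)=\min\{1,\max\{-1,Kt\}\}$. Although the naive implementation of $T_K$ would use a weight of size $K$, it satisfies the iteration $T_{2K}=h\circ T_K$ with $h(s)=\sigma(2s+1)-\sigma(2s-1)-1$, so composing $O(\log n)$ copies of $h$ on top of $f_n^*$ yields an $O(\log n)$-deep, constant-weight module implementing $T_K$ for any polynomial $K=n^c$. On $\{|f_n^*|\ge 1/K\}$, $\tilde f_n=\mathrm{sign}(f_n^*)=C^*_n$ is hinge-optimal; on the complementary slab the pointwise excess hinge loss is $O(b_2)$, so assumption (A3) gives
\[
R_\phi(\tilde f_n)-R_\phi(f^*_\phi)\lesssim b_2\,\QQ(|f_n^*|<1/K)\le b_2\,c_n/K = o(n^{-2/3})
\]
for $K$ a sufficiently large polynomial in $n$.

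For the estimation error I rerun the empirical-process machinery of Lemmas~\ref{lemma:511}--\ref{lemma:thm1} on the hinge loss class $\cL_n=\{(\bx,y)\mapsto\phi(yf(\bx)):f\in\cF_n\}$ instead of on indicators of sublevel sets. The required entropy input, $H_B(\delta,\cL_n,L_2(P))\lesssim A_n\log(1/\delta)$, follows from the $1$-Lipschitzness of $\phi$ together with standard parameter-counting covering bounds for ReLU networks in terms of $L_n,N_n,S_n,B_n$; at the prescribed sizes this gives $A_n$ polylogarithmic in $n$. Combined with a Bernstein-type variance bound $\EE[(\phi(yf)-\phi(yf^*_\phi))^2]\lesssim [R_\phi(f)-R_\phi(f^*_\phi)]^{1/2}$ derived from (A3) with $\kappa=1$, the peeling/local-modulus argument of Lemma~\ref{lemma:geer} then delivers estimation error of the order $\tilde O_d((A_n/n)^{2/3})$, and adding the negligible approximation bound completes the proof.

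The main obstacle will be bookkeeping two intertwined sources of $n$-dependence. First, the $n$-varying margin constants $c_n,T_n$ from (A3) must be tracked through the Bernstein constant so that they pollute only the logarithmic factors of the final rate, exactly as in Theorem~\ref{thm:ts}. Second, the weight budget gap between the student ($B_n=O(\log n)$) and the teacher ($B_n^*=O(\sqrt n)$) forces a preliminary rewriting of $f_n^*$ into an $O(\log n)$-deep, polylog-weight equivalent before composing with the iterated-clipping module; this can be done by redistributing norms across the $O(\log n)$ layers, but the combinatorics is delicate and is what ultimately produces the dimension-dependent logarithmic factor hidden in $\tilde O_d(\cdot)$.
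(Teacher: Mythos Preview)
Your approach is essentially the paper's: both bound the excess \emph{hinge} risk and feed into a localization argument the same two ingredients, namely a parameter-counting covering bound on $\cF_n$ (Lemma~\ref{covering}) and the Bernstein-type variance inequality under the margin condition with $\kappa=1$ (Lemma~\ref{lemma:hinge}). The paper simply packages the peeling step into one citation, Lemma~\ref{lemma:kim}, whereas you unwind it by hand through Lemmas~\ref{lemma:511}--\ref{lemma:geer}; the resulting rate is the same. On the approximation side you are actually more careful: the paper asserts that condition~(C1) of Lemma~\ref{lemma:kim} is ``trivial'' because the student contains the teacher, but since the hinge optimum $f^*_\phi=\mathrm{sign}(f_n^*)$ is discontinuous and not a ReLU network, a construction like your iterated clipping $T_K\circ f_n^*$ is genuinely what is needed to produce an element of $\cF_n$ with small excess $\phi$-risk.

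Your closing ``main obstacle,'' however, is a misreading of the hypotheses. The theorem explicitly assumes $B_n\ge B_n^*$ and $F_n\ge F_n^*$; combined with $B_n,F_n=O(\log n)$ this forces $B_n^*,F_n^*=O(\log n)$ as well, and the $O(\sqrt n)$ in (A2$_\phi$) is merely a loose ceiling. Hence $f_n^*\in\cF_n$ directly and no norm-redistribution rewriting of the teacher is required before composing with your clipping module---the only extra budget you need is the $O(\log K)=O(\log n)$ additional constant-weight layers for $T_K$, which the depth allowance $L_n=O(\log n)$ already covers.
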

Similarly, results in Corollary \ref{cor:dis} and \ref{cor:active} hold for the empirical hinge loss minimizer. Specifically, when $p,q$ are disjoint, the convergence rate of excess risk improves to $n^{-1}$, and all conclusions hold when the teacher network is larger but with bounded active pieces.

\begin{remark} [Network Depth]
\label{remark:deeper} Training with surrogate loss such as hinge loss, unlike 0-1 loss, doesn't involve any hard thresholding, i.e. $\II\{yf(\bx)<0\}$. As a result, to control the complexity of the student network, Lemma \ref{covering} is used instead of Lemma \ref{lemma:nn_piece}, which allows us to use deeper neural networks ($L_n=O(\log n)$) for both the student and teacher network. 
\end{remark}

\subsection{Proof of Theorem \ref{thm:surrogate}}
One important observation to be used in the proof is that the Bayes classifier under hinge loss is the same as that under 0-1 loss, i.e. $f^*_\phi(\bx)= C^*(\bx)$. To show the upper bound on excess risk convergence rate, we utilize the following lemma from \cite{kim2018fast}. Let $\eta(\bx)$ denote the conditional probability of label 1 that $\eta(\bx)=\PP(y=1|\bx)$. 

\begin{lemma}
\label{lemma:kim}
[Theorem 6 of \citep{kim2018fast}]
Let $\phi$ be the hinge loss. Assume \textup{(N)} with the noise exponent  $\kappa\in[0,\infty]$, and that following conditions (C1) through (C4) hold.
 \begin{itemize}
    \item[(C1)] For a positive sequence $a_n=O(n^{-a_0})$ as $n\to\infty$ for some $a_0>0$, there exists a sequence of function classes $\{\cF_n\}_{n\in\NN}$ such that $\cE_\phi(f_n,f^*_{\phi})\le a_n$
    for some  $f_n\in \cF_n$.
    \item[(C2)] 
    There exists a real valued sequence $\{F_n\}_{n\in \NN}$ with $F_n\gtrsim 1$ such that $\sup_{f\in\cF_n}\|f\|_\infty\le F_n$.
    \item[(C3)] There exists a constant $\nu\in (0,1]$ such that for any $f\in \cF_n$ and any $n\in \NN$,
  $$\EE\left[\left\{\phi(Yf(\bX))-\phi(Yf^*_{\phi}(\bX))\right\}^2\right]
        \le C_2F_n^{2-\nu} \{\cE_\phi(f,f^*_{\phi})\}^\nu$$
    for a constant $C_2>0$ depending only on $\phi$ and $\eta(\cdot)$.
    \item[(C4)] For a positive  constant $C_3>0$, there exists a sequence $\{\delta_n\}_{n\in\NN}$ such that 
    $$H_B(\delta_n, \cF_n, \|\cdot\|_2)\le C_3n\left(\frac{\delta_n}{F_n}\right)^{2-\nu},$$
    for $\{\cF_n\}_{n\in\NN}$ in (C1), $\{F_n\}_{n\in\NN}$ in (C2), and $\nu$ in (C3).
\end{itemize}
Let $\epsilon_n^2\asymp \max\{a_n, \delta_n\}$. Assume that  $n^{1-\iota}(\epsilon_n^2/F_n)^{(\kappa+2)/(\kappa+1)}\gtrsim 1$ for an arbitrarily small constant $\iota>0$. Then, the empirical $\phi$-risk minimizer $\hat{f}_{\phi,n}$ over $\cF_n$ satisfies 
\[
\EE\left[\cE(\hat{f}_{\phi,n}, C^*)\right]\lesssim \epsilon_n^2.
\]
\end{lemma}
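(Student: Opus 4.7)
The plan is to invoke Lemma \ref{lemma:kim} with the student class $\cF_n$ playing the role of the function class in (C1)–(C4), choosing the variance exponent $\nu$ so that, after balancing, Lemma \ref{lemma:kim} yields $\epsilon_n^2 \asymp (\log n)^{\gamma} n^{-2/3}$. A key convenience is that for hinge loss the Bayes rule $f^*_\phi$ equals $\mathrm{sign}(p-q) = \mathrm{sign}(f_n^*)$, so the teacher network already encodes the sign of the target.

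For (C1) I would realize the approximator as a scaled teacher $f_n := T_n \cdot f_n^*$ for a scaling factor $T_n$ of polynomial order. Since the student is at least as large as the teacher in depth, width, sparsity, and weight magnitude, $f_n$ lies in $\cF_n$ up to rescaling the final layer (this is where the slack in $B_n, F_n$ is used; if $F_n$ must be polylogarithmic one instead clips the output). The hinge excess risk decomposes as $\cE_\phi(f_n, f^*_\phi) = \EE[(1 - Yf_n(X))_+ - (1 - Yf^*_\phi(X))_+]$, which, because $\mathrm{sign}(f_n) = \mathrm{sign}(f_n^*)$, is supported on $\{|f_n^*| \le 1/T_n\}$ and is thus bounded using (A3) by $\lesssim c_n/T_n$. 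Choosing $T_n$ polynomially in $n$ makes $a_n$ polynomially small. Condition (C2) is immediate from the definition of $\cF_n$ with $F_n = O(\log n)$.

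For (C3) I would use the variance-comparison inequality for hinge loss under a Tsybakov-type noise condition: if (N$_n$) holds with $\kappa = 1$ (as delivered by (A3), up to the $n$-dependent factors $c_n, T_n$), then a standard argument (Zhang, 2004; Bartlett–Jordan–McAuliffe, 2006) gives $\EE[(\phi(Yf) - \phi(Yf^*_\phi))^2] \lesssim F_n \cdot \cE_\phi(f, f^*_\phi)$, i.e., $\nu = 1/2$, with the constant absorbing a factor polylogarithmic in $n$ coming from $c_n \vee 1/T_n$. For (C4) I would invoke the bracketing/covering entropy bound for deep ReLU networks (of the form $H_B(\delta, \cF_n, \|\cdot\|_2) \lesssim S_n L_n \log(L_n N_n B_n / \delta)$, as in Schmidt-Hieber or Anthony–Bartlett pseudo-dimension bounds); since the theorem allows $L_n = O(\log n)$, $N_n = O(\log n)^m$, and $B_n = O(\log n)$, this is polylogarithmic in $n$ times $\log(1/\delta)$, which fits into the required form with $\delta_n$ of the right order.

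With $\nu = 1/2$ and $F_n$ polylogarithmic, Lemma \ref{lemma:kim} produces $\epsilon_n^2$ solving $\epsilon_n^2 \asymp a_n \vee \delta_n$ where $\delta_n$ comes from the entropy bound and the $(2-\nu) = 3/2$ scaling; balancing these yields $\epsilon_n^2 \asymp (\log n)^\gamma n^{-2/3}$, matching the stated rate. The main obstacle is the bookkeeping imposed by the $n$-dependent noise condition (A3): both $c_n$ and $1/T_n$ enter the variance inequality (C3) and the approximation bound (C1), so one must carefully track that these polylogarithmic factors accumulate only in the $\tilde O_d$ term and do not damage the exponent $2/3$. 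A secondary subtlety is reconciling (A2$_\phi$)'s allowance of $B_n^*, F_n^* = O(\sqrt n)$ with the student bound $B_n, F_n = O(\log n)$; this is handled either by clipping the scaled teacher to $[-F_n, F_n]$ (clipping only reduces hinge excess risk since $f^*_\phi = \pm 1$) or, when the teacher has polylogarithmic width and depth, by observing that its output is automatically polylogarithmic.
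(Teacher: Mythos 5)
Your proposal does not prove the statement it is asked to prove. The statement in question is Lemma~\ref{lemma:kim} itself, a general oracle inequality for the empirical $\phi$-risk minimizer under conditions (C1)--(C4). Your argument begins ``The plan is to invoke Lemma~\ref{lemma:kim}\ldots'' and then proceeds to verify (C1)--(C4) for the particular student network class $\cF_n$ of Theorem~\ref{thm:surrogate}; that is an outline of the proof of Theorem~\ref{thm:surrogate}, not of Lemma~\ref{lemma:kim}, and taken as a proof of the lemma it is circular. The paper itself offers no proof here -- the lemma is imported verbatim as Theorem~6 of \cite{kim2018fast} -- so the comparison you should be making is against the argument in that reference, or you should reconstruct it.

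A genuine proof of the lemma would run along the following lines, none of which appear in your proposal: (i) split $\cE_\phi(\hat{f}_{\phi,n},f^*_\phi)$ into an estimation term and the approximation term $\cE_\phi(f_n,f^*_\phi)\le a_n$ from (C1); (ii) exploit $R_{\phi,n}(\hat{f}_{\phi,n})\le R_{\phi,n}(f_n)$ together with a localized empirical-process bound (Talagrand/Bousquet, or the peeling device as in Lemma~\ref{lemma:geer}) driven by the Bernstein-type variance control (C3) and the bracketing-entropy bound (C4), which produces the $\delta_n$ term and the critical radius $\epsilon_n^2\asymp a_n\vee\delta_n$; and (iii) pass from the $\phi$-excess risk to the 0--1 excess risk via the calibration inequality for hinge loss under the noise condition (N), which for $\kappa\in[0,\infty]$ is where the exponent $(\kappa+1)/(\kappa+2)$ and the side condition $n^{1-\iota}(\epsilon_n^2/F_n)^{(\kappa+2)/(\kappa+1)}\gtrsim 1$ enter. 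If you intended instead to sketch the proof of Theorem~\ref{thm:surrogate}, your verification of (C1)--(C4) is broadly aligned with what the paper does, but you should restate your target accordingly.
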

In Lemma \ref{lemma:kim}, condition (C1) guarantees the approximation error of $f_n$ to $f_{\phi}^*$ to be sufficiently small. 
For condition (C3), we introduce the following lemma, which is reminiscent of Lemma \ref{lemma:distance_n} in the sense that it characterizes the relationship between the $\cE_\phi(f,f^*_\phi)$ and the some other distance measure between $f$ and $f_\phi^*$. 
\begin{lemma}[Lemma 6.1 of \cite{steinwart2007fast}]
\label{lemma:hinge}
Assume \textup{(N)} with the Tsybakov noise exponent $\kappa\in[0,\infty]$. Assume $\|f\|_\infty\le F$ for any $f\in \cF$. Under the hinge loss $\phi$, for any $f\in \cF$,
\begin{align*}
&\EE\left[\left(\phi(Yf(\bx))-\phi(Yf^*_\phi(\bx))\right)^2\right]\\
&\le C_{\eta, \kappa}(F+1)^{(\kappa+2)/(\kappa+1)}\left(\EE\left[\phi(Yf(\bx))-\phi(Yf^*_\phi(\bx))\right]\right)^{\kappa/\kappa+1},
\end{align*}
where $C_{\eta, \kappa}=\left(\|(2\eta-1)^{-1}\|_{\kappa,\infty}^\kappa+1\right)\II(\kappa>0)+1$ and $\|(2\eta-1)^{-1}\|_{\kappa,\infty}^\kappa$ is defined by 
$$\|(2\eta-1)^{-1}\|_{\kappa,\infty}^{\kappa}=\sup _{{t>0}}\left(t^\kappa\Pr\left(\{\bx:|(2\eta(\bx)-1)^{-1}|>t\}\right)\right).$$
\end{lemma}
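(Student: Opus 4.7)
The plan is to reduce the squared pointwise excess to a linear one via Lipschitzness, and then trade the resulting linear quantity against the true excess $\cE_\phi(f, f^*_\phi)$ using the Tsybakov noise exponent through a standard thresholding argument. Set $\Delta(\bx) := |2\eta(\bx) - 1|$ and recall that the pointwise Bayes minimizer of the conditional hinge risk is $f^*_\phi(\bx) = \mathrm{sign}(2\eta(\bx)-1) \in \{-1, +1\}$. Because $\phi$ is $1$-Lipschitz and $\|f\|_\infty \le F$, the elementary estimate $|\phi(Yf(\bx)) - \phi(Yf^*_\phi(\bx))| \le |f(\bx) - f^*_\phi(\bx)| \le F+1$ yields $(\phi(Yf) - \phi(Yf^*_\phi))^2 \le (F+1)\,|\phi(Yf) - \phi(Yf^*_\phi)|$. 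It thus suffices to bound $\EE[|\phi(Yf) - \phi(Yf^*_\phi)|]$ in terms of $\cE_\phi(f, f^*_\phi)$.

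The heart of the proof is the pointwise inequality
\[
\EE_{Y|\bx}[|\phi(Yf) - \phi(Yf^*_\phi)|] \;\le\; \frac{h_f(\bx)}{\Delta(\bx)}\quad \text{whenever}\quad \Delta(\bx) > 0,
\]
where $h_f(\bx) := \EE_{Y|\bx}[\phi(Yf) - \phi(Yf^*_\phi)] \ge 0$ is the conditional excess. I would verify this by direct case analysis; WLOG assume $\eta(\bx) > 1/2$ so $f^*_\phi(\bx) = 1$. For $f \in [-1, 1]$ both sides compute explicitly to $1-f$ and $\Delta(1-f)$, giving ratio exactly $1/\Delta$; for $f > 1$, both sides equal $(1-\eta)(f-1)$, giving ratio $1 \le 1/\Delta$; and for $f < -1$, $\EE_{Y|\bx}[|\cdot|] = \eta(1-f) + 2(1-\eta)$ while $h_f = \eta(1-f) - 2(1-\eta)$, so after clearing denominators the inequality collapses algebraically to $f \le -1$. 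Symmetry dispatches $\eta < 1/2$, and the Tsybakov condition forces $\PP(\Delta = 0) = 0$ whenever $\kappa > 0$, so the inequality holds almost everywhere.

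Finally, the hypothesis is equivalent to $\PP(\Delta \le s) \le c\, s^\kappa$ for every $s > 0$, with $c = \|(2\eta-1)^{-1}\|_{\kappa,\infty}^\kappa$. Using the trivial uniform bound $h_f(\bx)/\Delta(\bx) = \EE_{Y|\bx}[|\phi(Yf) - \phi(Yf^*_\phi)|] \le F + 1$ and splitting at a level $\epsilon > 0$,
\[
\EE\!\left[\frac{h_f}{\Delta}\right] \le (F+1)\,\PP(\Delta \le \epsilon) + \frac{1}{\epsilon}\EE[h_f] \le c(F+1)\epsilon^\kappa + \frac{\cE_\phi(f, f^*_\phi)}{\epsilon}.
\]
Optimizing at $\epsilon \asymp (\cE_\phi/(c(F+1)))^{1/(\kappa+1)}$ balances the two terms and yields $\EE[|\phi(Yf) - \phi(Yf^*_\phi)|] \lesssim (F+1)^{1/(\kappa+1)} \cE_\phi^{\kappa/(\kappa+1)}$; multiplying by the $(F+1)$ from the Lipschitz step produces the claimed exponent $(\kappa+2)/(\kappa+1)$ on $(F+1)$ and $\kappa/(\kappa+1)$ on $\cE_\phi$, with implicit constant controlled by $c$, hence by $C_{\eta,\kappa}$. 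The endpoints $\kappa = 0$ (where $(\phi(Yf) - \phi(Yf^*_\phi))^2 \le (F+1)^2$ is already the claim with $C_{\eta,0} = 1$) and $\kappa = \infty$ (where $\Delta$ is bounded below a.s., collapsing the bound to $\EE[(\phi)^2] \lesssim (F+1)\,\cE_\phi$) are absorbed into the $\II(\kappa>0)+1$ bookkeeping in $C_{\eta,\kappa}$.

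The main obstacle is the pointwise key inequality in the tail regions $|f| > 1$, where $h_f$ is no longer proportional to $|f - f^*_\phi|$ and the bound must be verified through case-wise algebra; the $f < -1$, $\eta > 1/2$ branch in particular succeeds only via a non-obvious cancellation that reduces the inequality to the defining hypothesis $f \le -1$ itself. Secondary bookkeeping concerns are handling the null set $\{\Delta = 0\}$ and the degenerate Tsybakov endpoints $\kappa \in \{0, \infty\}$.
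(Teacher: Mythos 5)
Your proof is correct. Note that the paper does not actually prove this statement; it is imported verbatim as Lemma~6.1 from Steinwart and Scovel (2007), so there is no internal proof to compare against. Your argument reproduces the standard variance-bound derivation used in that source: the Lipschitz reduction $(\phi(Yf)-\phi(Yf^*_\phi))^2\le (F+1)|\phi(Yf)-\phi(Yf^*_\phi)|$, the pointwise bound $\EE_{Y|\bx}|\phi(Yf)-\phi(Yf^*_\phi)|\le h_f(\bx)/\Delta(\bx)$ (your three-case verification is right, with equality on $[-1,1]$, ratio $1$ for $f>1$, and the cancellation for $f<-1$ reducing to $f\le -1$), and then a threshold split at level $\epsilon$ followed by optimization using $\PP(\Delta\le\epsilon)\le c\epsilon^\kappa$. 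One small bookkeeping note: if you choose $\epsilon=(\cE_\phi/(F+1))^{1/(\kappa+1)}$ rather than including $c$ inside, the two terms become $c(F+1)^{1/(\kappa+1)}\cE_\phi^{\kappa/(\kappa+1)}$ and $(F+1)^{1/(\kappa+1)}\cE_\phi^{\kappa/(\kappa+1)}$, giving the constant $c+1$ after multiplying by $(F+1)$, which sits below the stated $C_{\eta,\kappa}=c+2$ for $\kappa>0$ and matches $C_{\eta,0}=1$ at the trivial endpoint; your version with $2c^{1/(\kappa+1)}$ is also fine but slightly less transparent against the stated constant. The only minor slip is the assertion that $h_f/\Delta=\EE_{Y|\bx}|\phi(Yf)-\phi(Yf^*_\phi)|$ with equality; on the branch $f>1$ this is an inequality $h_f/\Delta\ge\EE_{Y|\bx}|\cdot|$, but since you only need the upper bound the argument is unaffected.
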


For condition (C4) in Lemma \ref{lemma:kim}, we present the following lemma.
\begin{lemma}
\label{covering}
[Lemma 3 in \citet{suzuki2018adaptivity}] For any $\delta>0$, the covering number of $\cF^{DNN}(L, N, S, B)$ (in
sup-norm) satisfies
\begin{align*}
    &\log \cN(\delta, \cF^{DNN}(L, N, S, B), ||\cdot||_{\infty})\\ 
    &\le 2L(S+1)\log(\delta^{-1}(L+1)(N+1)(B\vee 1)).
\end{align*}
\end{lemma}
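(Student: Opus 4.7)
The strategy is to reduce covering the function class in sup-norm to covering the underlying parameter space in $\ell_\infty$, then count sparsity patterns and discretize parameter values. Concretely, I would (i) establish a Lipschitz-in-parameters bound showing that small $\ell_\infty$ perturbations of $\Theta$ induce controlled sup-norm perturbations of $f(\cdot|\Theta)$; (ii) enumerate sparsity patterns that place the $S$ active parameters among all slots of a fully-connected $(L,N)$-architecture; (iii) place a uniform grid on $[-B,B]$ at a scale matched to the Lipschitz constant from (i); and (iv) combine by taking logarithms.

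For step (i), I would induct on the layer index. Writing $f_l(\bx|\Theta)$ for the pre-activation at layer $l$ and using the $1$-Lipschitzness of ReLU together with the uniform bounds $\|\bW^{(l)}\|_\infty \le B$, $\|\bb^{(l)}\|_\infty \le B$, layer width at most $N+1$ (counting bias), and input bound $\|\bx\|_\infty \le 1$, a standard recursion of the form
\begin{align*}
\|f_l(\cdot|\Theta)-f_l(\cdot|\Theta')\|_\infty &\le (N+1)B\,\|f_{l-1}(\cdot|\Theta)-f_{l-1}(\cdot|\Theta')\|_\infty \\
&\quad + \eta\bigl(1 + (N+1)\|f_{l-1}(\cdot|\Theta')\|_\infty\bigr)
\end{align*}
unrolls to
$$\|f(\cdot|\Theta)-f(\cdot|\Theta')\|_\infty \;\le\; \eta\cdot L\cdot \bigl((N+1)(B\vee 1)\bigr)^{L},\qquad \eta:=\|\Theta-\Theta'\|_\infty.$$
Thus choosing $\eta=\delta/\bigl[L\,((N+1)(B\vee 1))^{L}\bigr]$ converts an $\eta$-net in parameter space into a sup-norm $\delta$-net of $\cF^{DNN}(L,N,S,B)$.

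For steps (ii)--(iv), the total number of candidate parameter positions is at most $T:=(L+1)(N+1)^2$, so the sparsity patterns that choose the $S$ active slots number at most $\binom{T}{S}\le T^S$. For each pattern, an $\eta$-grid on $[-B,B]^S$ has at most $(2B/\eta+1)^S$ points. Multiplying and taking logs,
$$\log\cN(\delta,\cF^{DNN}(L,N,S,B),\|\cdot\|_\infty)\;\le\; S\log T + S\log(2B/\eta+1),$$
and substituting the value of $\eta$ collapses this to $O\!\bigl(SL\log[(L+1)(N+1)(B\vee 1)/\delta]\bigr)$, which can be absorbed into the stated bound $2L(S+1)\log\!\bigl(\delta^{-1}(L+1)(N+1)(B\vee 1)\bigr)$ by routine constant tracking, using $\log\binom{T}{S}\le S\log(eT/S)$ and $L\ge 1$.

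The main obstacle is verifying the constants: the $L$-th power in the Lipschitz constant from step (i) is precisely what produces the leading factor $L$ multiplying $(S+1)$ in the final bound, so one must check that the remaining multiplicative contributions (from $T^S$ and from the $2B$ numerator in the grid count) can be subsumed inside the logarithm with a single exponent and single argument $(L+1)(N+1)(B\vee 1)/\delta$ without producing a cross term. A secondary technicality is handling $B<1$, where the Lipschitz constant grows strictly slower than $((N+1)B)^{L}$; the convention $B\vee 1$ in the logarithm handles this case uniformly, and the induction in step (i) goes through unchanged because the $(N+1)B$ factor is only used as an upper bound.
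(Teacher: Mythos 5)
The paper does not give its own proof of this lemma; it cites it directly as Lemma~3 of \citet{suzuki2018adaptivity}. Your reconstruction follows exactly the canonical argument for such covering-number bounds (which is also the strategy used in the cited reference and in the closely related Lemma~5 of \citet{schmidt2017nonparametric}): (i) a Lipschitz-in-parameters estimate in sup-norm, obtained by a layer-wise recursion that simultaneously tracks the magnitude $\|f_{l-1}'\|_\infty$ and the perturbation $\|f_{l-1}-f_{l-1}'\|_\infty$; (ii) enumeration of sparsity patterns, $\binom{T}{S}\le T^S$ with $T\asymp(L+1)(N+1)^2$; (iii) an $\eta$-grid on $[-B,B]$ per active parameter; (iv) substitution and logarithm. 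The route is correct, and the constant bookkeeping you flag (the $L$th power in the Lipschitz constant, the $T^S$ factor, and the $(B\vee 1)$ convention) works out as you describe: after unrolling, the dominant contribution is $SL\log\bigl((N+1)(B\vee 1)/\delta\bigr)$ plus lower-order terms $S\log(L+1)$ and $S\log(N+1)$, all of which sit inside the stated $2L(S+1)\log\bigl(\delta^{-1}(L+1)(N+1)(B\vee 1)\bigr)$ once $L\ge 1$. So there is no genuine gap; your proposal is a faithful reconstruction of the proof that the paper delegates to the cited reference.
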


\begin{proof}[proof of Theorem \ref{thm:surrogate}]
The lower bound directly follows from Theorem \ref{thm:lb}, as the constructed ReLU neural network in the proof also satisfy assumption (A2$_\phi$). 

For the upper bound on the convergence rate, we utilize Lemma \ref{lemma:kim} and check the conditions (C1) through (C4). 
Since the student network is larger than the teacher, (C1) and (C2) trivially hold with arbitrarily small $a_n$ and $F_n=O(\log n)$ as assumed. To apply Lemma \ref{lemma:hinge}, notice that $C_{\eta, \kappa}= O(c_n)=O(\log n)^{m^*d^2 L_n^*}$ by assumption (A3) and $F=O(\log n)$, we have (C3) holds for $\nu=\kappa/(\kappa+1)+\epsilon_n$, where $\epsilon_n=(2+m^*d^2L_n^*)\log\log n/\log n$. The term $\epsilon_n$ is to deal with the fact that $C_{\eta, \kappa}$ can also diverge at an $O(\log n)^{m^*d^2L_n^*}$ rate.

For (C4), by Lemma \ref{covering}, 
    \begin{eqnarray*}
    &&\log \cN(\delta_n, \cF^{\textup{DNN}}(L_n, N_n, S_n, B_n, F_n),\|\cdot\|_\infty) \\
    &&\le 2L_n(S_n+1)\log\left(\delta_n^{-1}(L_n+1)(N_n+1)(B_n\vee1)\right) \\
    &&\lesssim (\log n)^{2m+2} \log\left(\delta_n^{-1}\vee \log^m(n)\right).
    \end{eqnarray*}
Therefore, (\ref{lemma:kim}) implies that
  (C3) is satisfied if we choose $\delta_n$ with
$$\delta_n^{\frac{\kappa+2}{\kappa+1}}\gtrsim \frac{(\log n)^{2m+2+(\kappa+2)/(\kappa+1)+2+m^*d^2L_n^*+1}}{n},$$
which can be satisfied by choosing
$$\delta_n = 
\left(\frac{(\log n)^{2m+m^*d^2L_n^* + 7} }{n}\right)^{\frac{ \kappa+1}{\kappa+2}}.$$
Similar to the proof of Theorem \ref{thm:ts}, the Tsybakov exponent $\kappa=1$. Thus, by Lemma \ref{lemma:kim} with $\epsilon_n^2=\delta_n$, the proof of Theorem \ref{thm:surrogate} is completed.
\end{proof}


\section{Discussion and Future Directions}
In this work, we obtain a sharp rate of convergence for the excess risk under both empirical 0-1 loss and hinge loss minimizer in the teacher-student setting. Our current results for   training under 0-1 loss only hold for student networks with $O(1)$ layers and the assumption that $f^*_n\in\cF_n$, i.e. zero approximation, is required. In the future, we aim to relax these two constraints and provide more comprehensive analysis of the teacher-student network. Additionally, we would like to 
\begin{itemize}
    \item
    explore other type of neural networks such as convolutional neural network and residual neural network, which are both very successful at image classification;
    \item 
    consider the implicit bias of training algorithms, e.g. stochastic gradient descent, to regularize the complexity of larger and deeper neural networks in the teacher-student setting; 

    \item
    consider the more general improper learning scenario where the Bayes classifier is not necessarily in the student neural network;
    
    \item
    consider other popular surrogate losses such as exponential loss or cross entropy loss.
    
\end{itemize}
Further investigation under the teacher-student network setting may facilitate a better understanding of how deep neural network works and
shed light on its empirical success especially in high-dimensional image classification.

\bibliography{references}

\newpage
\section{Appendix}
\subsection{Smooth Boundary Condition}
\label{app:boundary}
In this section we review some existing work under smooth boundary condition in details and point out its connection to our proposed teacher-student neural network.

\paragraph{Smooth Functions}
A function has H\"older smoothness index $\beta$ if all partial derivatives up to order $\lfloor \beta \rfloor$ exist and are bounded, and the partial derivatives of order $\lfloor \beta \rfloor$ are $\beta - \lfloor \beta \rfloor$ Lipschitz. The ball of $\beta$-H\"older functions with radius $R$ is then defined as
\begin{align*}
	\cH_r^\beta(R) = \Big\{ 
	&f:\mathbb{R}^r \rightarrow \mathbb{R} : \\
	&\sum_{\balpha : |\balpha| < \beta}\|\partial^{\balpha} f\|_\infty + \sum_{\balpha : |\balpha |= \lfloor \beta \rfloor } \, \sup_{\stackrel{\bx, \by \in D}{\bx \neq \by}}
	\frac{|\partial^{\balpha} f(\bx) - \partial^{\balpha} f(\by)|}{|\bx-\by|_\infty^{\beta-\lfloor \beta \rfloor}} \leq R
	\Big\},
\end{align*}
where $\partial^{\balpha}= \partial^{\alpha_1}\ldots \partial^{\alpha_r}$ with $\balpha =(\alpha_1, \ldots, \alpha_r)\in \mathbb{N}^r$ and $|\balpha| :=|\balpha|_1.$

\paragraph{Boundary Assumption} 
It is known that estimating the classifier directly instead of the conditional class probability helps achieve fast convergence rates 
\citep{mammen1999smooth, tsybakov2004optimal, tsybakov2005square, audibert2007fast}. Classification in this case can be thought of as nonparametric estimation of sets where we directly estimate the decision regions for different labels, e.g., $G$ for label 1. Then, the classifier is determined by attributing $\bx$ to label 1 if $\bx\in G$ and to label $-1$ otherwise, i.e.,
$$C(\bx) = 2\cdot \II_{G}(\bx)-1.$$
In this case, the Bayes risk can be written as
$$R(G) = 1/2\left(\int_{G^c}p(\bx)\QQ(d\bx)+\int_G q(\bx)\QQ(d\bx)\right).$$

Denote $G^* = \{\bx:p(\bx)\ge q(\bx)\}$ to be the Bayes risk minimizer, 
and the classification problem is equivalent to estimation of the optimal set $G^*$. 
Given $p,q$, let 
$$\cX_+ = \{\bx\in\cX: p(\bx)\ge q(\bx)\},\quad \cX_- = \{\bx\in\cX: p(\bx)<q(\bx)\}$$
The optimal decision rule is to assign label 1 to $\bx\in\cX_+$ and $-1$ to $\bx\in\cX_-$. The decision boundary in this case is $\{\bx\in\cX: p(\bx)=q(\bx)\}$. To characterize the smoothness of the boundary, it is usually assumed that $\cX_+$ consists of union and intersection of smooth hyper-surfaces \citep{kim2018fast, tsybakov2004optimal}. 
Specifically, the following assumption is widely adopted \citep{mammen1999smooth, tsybakov2004optimal, kim2018fast,imaizumi2018deep} and referred to as "boundary fragment".
Let $\cH$ be some smooth function space from $\RR^{d-1}\to \RR$. Define sets $\cG^{\cH}$ as 
\begin{equation}
\label{eqn:gh}
    \cG^{\cH} = \{\bx\in \cX: x_j>h(\bx_{-j}), h\in\cH, j \in\{1, 2,\cdots, d\}\}
\end{equation}
where $\bx_{-j} = (x_1, \cdots, x_{j-1}, x_{j+1}, \cdots, x_d)$. It is assumed that $\cX_+$ is composed of finite union and intersection of sets in $\cG^{\cH}$. The seemingly odd form (\ref{eqn:gh}) enforces special structures on the indicator function and reduces the complexity of the corresponding sets.

A more general assumption on the decision boundary is that the set, denoted as $\cG$, containing all possible $\cX_+$ cannot be too large. This is measured by the bracketing entropy $H_B$ of the metric space $(\cG, d_\triangle)$.
The more general assumption of the decision boundary is stated as
\begin{itemize}
\item[(B)] There exists $A>0$ and $\rho\in[0,\infty]$ such that 
$$H_B(\delta, \cG, d_\triangle)\le A \delta^{-\rho}.$$
\end{itemize}
Our proposed teacher-student network setting follows this more general assumption and makes more sense in high-dimensional classification.
\begin{remark}
\label{remark:beta}
If $\cH$ are $\beta$-smooth functions in (\ref{eqn:gh}), then (B) holds with $\rho = (d-1)/\beta$. On the other hand, Lemma \ref{lemma:nn_piece} gives $H_B\lesssim A_n\vee (B_n\log(1/\delta))$ where in terms of $\delta$, the order is $\log(1/\delta)$, which corresponds to $\rho\to 0$ and $\beta\to\infty$. However, this doesn't necessarily imply the boundary fragment set (\ref{eqn:gh}) is larger, since $A_n,B_n$ depend on $n$ and are not absolute constants as in (B).
\end{remark}

\subsection{Comments on Lemma \ref{lemma:nn_piece}}
\label{app:entropy}
Lemma \ref{lemma:nn_piece} is the main result for controlling the bracketing entropy of the estimation sets. Below we point out some key property of this result and compare to other entropy bounds of neural networks. 
\paragraph{Exponential Dependence on Depth}
The bracketing entropy of $\cG^{\cF}$ developed in Lemma \ref{lemma:nn_piece} is much larger than that of $\cF$ itself with respect to $\|\cdot\|_\infty$, as described in Lemma \ref{covering}. 
The main difference is the dependence on the number of layers $L$: the dependence is linear in Lemma \ref{covering} while exponential in Lemma \ref{lemma:nn_piece}. Thus, even though $\cG^{\cF}$ is a slice of the subgraph of $\cF$, $\cG^{\cF}$ is much more complicated than $\cF$ in term of entropy. We argue that this gap cannot be closed even in the special case $d=1$. 

\begin{figure}
    \centering
    \includegraphics[scale=0.5]{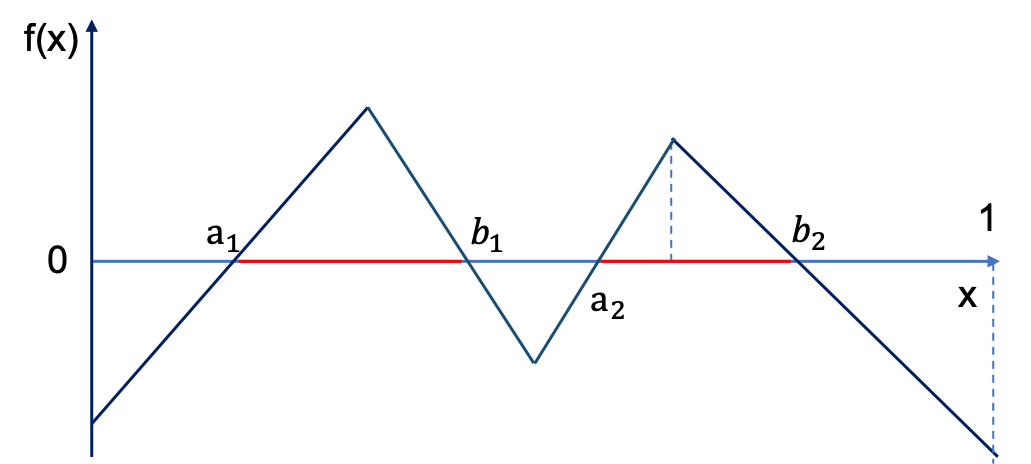}
    \caption{Example of a ReLU function in 1D. The induced set where $f>0$ is colored red and it's a union of two intervals $(a_1, b_1), (a_2, b_2)$. All pieces cross 0 so there are all active.}
    \label{fig:1d}
\end{figure}

\cite{montufar2014number} establish a lower bound on the maximum number of linear pieces for a ReLU neural network (Lemma \ref{lemma:lb}). Consider a 1-dimensional ReLU DNN function with $L$ layers and 2 nodes on each layer. Corollary 5 of  \cite{montufar2014number} show that there exists some $f$ with $s=\Omega(2^{L-1})$ pieces on $[0,1]$. With scaling and shifting, assume that on each piece the linear function crosses 0. Then, $G^f$ will be at least $\lfloor s/2\rfloor=\Omega(2^{L-2})$ intervals. Denote these disjoint intervals to be $\{(a_i, b_i)\}_{i=1}^{\lfloor s/2\rfloor}$.  
Since they are disjoint, to construct a $\delta$-bracket of all the intervals, we need to $\delta$-cover all the $a_i$'s and $b_i$'s. Similar to the grid argument from the proof of Lemma \ref{lemma:bracketing}, we need at least 
$$\binom{1/\delta}{s}=\Omega\left((1/\delta-s)^{s}\right)$$ different combinations of the $s$ grid points. Hence the bracketing entropy must be in the order of $$\log ((1/\delta-s)^{s})=2^{L-2}\log(1/\delta).$$
The exponential dependence of depth $L$ in the entropy stems from the fact that the number of linear regions of ReLU DNNs scales exponentially with $L$. 

\paragraph{Independent of Weights Magnitude}
We also want to point out that the entropy of $\cG^{\cF}$ is not concerned with the magnitude of the neural network weights, in contrast to the bound in Lemma \ref{covering}. This is because any scaling of the function doesn't change how it intercepts with zero. Hence, unlike $\cF$, the entropy of $\cG^{\cF}$ doesn't depend on the weight maximum $B$. 

\paragraph{The Use of ReLU Activation}
The reason why we can even bound the entropy of $\cG^{\cF}$ critically relies on the fact that we are considering the ReLU activation function. If we consider smooth nonlinear activation functions, e.g. hyperbolic tangent, sigmoid, instead of the order $\log(1/\delta)$, we can only get the entropy of a much larger order $$H_B(\delta, \cG^{\cF}, d_\triangle)\le A\delta^{-\alpha}$$ 
for some constant $A>0$ and $\alpha>0$. To see this, consider the case $d=2$. Instead of polygons, which can be controlled by the vertices, the regions have smooth boundary and will require $O(1/\delta)$ many grid points to cover. Thus the covering number is of order $$\binom{1/\delta^2}{1/\delta} = O\left(\left(\frac{1}{\delta}\right)^{2/\delta}\right).$$
Thus, the entropy is in a polynomial order of $1/\delta$.

\subsection{Illustration and Proof of Lemma \ref{lemma:a3}}
\label{app:a3}
In this section, Assumption (A3) will be examined in the setting that the teacher network $f_n^*$ has random weights. 
We will argue that with probability at least $1-\delta$, $f_n^*$ will satisfy assumption (A3) with $T_n=A(\delta)/(\log n)^{m^*d^2L_n^*}$ and $c_n=B(\delta)(\log n)^{m^*d^2L_n^*}$, where $A(\delta), B(\delta)$ are constants depending only on $\delta$ and the distribution of the random weights, e.g. normal, truncated normal, etc. Hence, the results which assume Assumption (A3)  will hold with high probability. 

\paragraph{A Toy Case}
To illustrate the intuition, consider the case where $d=1$ and $f_n^*$ is the following one hidden-layer ReLU neural network
\begin{equation}\label{toy:teacher}
f_n^*(x) = \sum_{j=1}^{N_n^*}w_{2j}\sigma(w_{1j}x+b_j)+b,\quad x\in[0,1],
\end{equation}
with $L_n^*=1$, $N_n^*=O(\log n)$ and $w_{1j},w_{2j},b_j,b$ are i.i.d. standard Gaussian. 
Since all the weights are almost surely nonzero, we omit the zero weight cases for the analysis. 
Let $p_i = (u_i,v_i)$, $i =1,2,\ldots, s$, denote the active pieces of (\ref{toy:teacher}).
By Lemma \ref{lemma:upper_bound_improved}, we know that $s=O(\log n)$. For each $p_i$, define the following quantities:
\begin{enumerate}
    \item 
    $k_i$ = the slope of $f_n^*(x)$ on $x\in p_i$;
    \item
    $t_i$ = $\max_{x\in p_i}f_n^*(x) \wedge \max_{\bx\in p_i}-f_n^*(x) $.
\end{enumerate}
See Figure \ref{fig:random} for an illustration.
Then, assumption (A3) is satisfied if  
\begin{equation}\label{toy:examinations}
\textrm{$\min_i\{|k_i|\}=\Omega(1/\log n)$ \ \ and \ \  $\min_i\{t_i\}=\Omega(1/\log n)$.} 
\end{equation}
Next we will rigorously examine (\ref{toy:examinations}). 

From (\ref{toy:teacher}), each $k_i$ can be expressed as $w_{1j}w_{2j}$ for some $j\in\{1,2,\cdots,N_n^*\}$. Therefore, 
$\min_{1\le i\le N_n^*}\{|k_i|\} = \min_{1\le j\le N_n^*}\{|w_{1j}w_{2j}|\}$.
Since $w_{1j},w_{2j}$ are i.i.d. standard Gaussian, we have
\begin{align*}
    &\PP(\min_{1\le i\le N_n^*}\{|k_i|\}< k) =  \PP(\min_{1\le j\le N_n^*}\{|w_{1j}w_{2j}|\} < {k})\\
   &\le \sum_{j=1}^{N_n^*}\PP(|w_{1j}w_{2j}|< {k})
    \le 2N_n^* \PP(|w_{11}| < \sqrt{k})\le 2N_n^*\sqrt{k}.
\end{align*}
By choosing 
$k=\left(\frac{\delta}{2N_n^*}\right)^2,$
we have $\min_{1\le i\le N_n^*}\{|k_i|\}=\Omega(1/\log n)$ with probability at least $1-\delta$.

\begin{figure}
    \centering
    \includegraphics[scale=0.5]{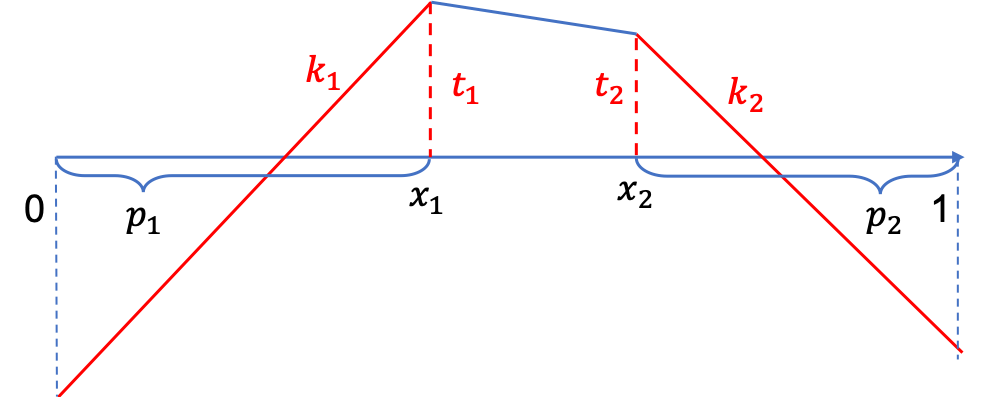}
    \caption{Example of a ReLU function in $[0,1]$. There are two active pieces $p_1, p_2$. On each active piece, $t_i. k_i$ are illustrated in color red. }
    \label{fig:random}
\end{figure}

On the other hand, 
for any $i=1,\ldots,s$, $t_i=|f_n^*(x_{h_i})|$ for some $h_i\in\{1,\cdots, N_n^*\}$,
where $x_{h_i}=-b_{h_i}/w_{1h_i}$. 
Hence
\[
\min_{1\le i\le s}\{t_i\}\ge\min_{1\le j\le N_n^*}\{|f_n^*(x_j)|\}.
\]
Let $W_1=\{w_{1j},b_j\}_{j=1}^{N_n^*}$. Then,
$
    f_n^*(x_i)\given W_1\sim N(0,\sigma_{x_i}^2),
$
where $\sigma_{x_i}^2$ has an expression of $\sum_{j=1}^{N_n^*} \sigma(w_{1j}x_i+b_j)^2 + 1$.
Hence, for any $t>0$, 
\begin{align*}
    \PP(\min_{i\le N_n^*}\{|f_n^*(x_i)|\}< t \given W_1) &\le \sum_{i=1}^{N_n^*}\PP(|f_n^*(x_i)|< t\given W_1)\\
    &= N_n^* \PP(|f_n^*(x_i)| < t \given W_1) \le N_n^*\left(\frac{t}{\sigma_{x_i}}\right).
\end{align*}
Since $\sigma_{x_i}\ge 1$, by taking $t = \delta/N_n^*$, we have that with probability at least $1-\delta$, $\min_{i}\{t_i\}\ge t$ and $t=\Omega(1/\log n).$
Therefore, (\ref{toy:examinations}) holds with high probability, so that assumption (A3) holds by setting $1/c_n = \min_i\{|k_i|\}$ and $T_n = \min_i\{t_i\}$, which are both in the order of $\Omega(1/\log n)$.

\paragraph{General Case}
Now we consider the general case $d>1$ and $L^*_n>1$. The teacher network has an expression
\[
f_n^*(\bx) = \bW^{(L_n^*+1)}\sigma_{(\bW^{(L_n^*)},\bb^{(L_n^*)})}\circ\cdots\circ \sigma_{(\bW^{(1)},\bb^{(1)})}(\bx) + \bb^{(L_n^*+1)}, \bx\in [0,1]^d.
\]
Let $N_n^* = O(\log n)^{m^*}$. By Lemma \ref{lemma:upper_bound_improved}, $f_n^*$ has 
linear pieces $p_1,\ldots,p_s$ for $s=O(\log n)^{m^*L_n^*d}$. 
Let $\{\bx_i,\bx_2, \ldots, \bx_{v_s}\}$ be the collection of vertices of  $\{p_1,\ldots,p_s\}$. We call such $\bx_i\in\RR^d$ a \textit{piece vertex} and it's not the same as the vertex of $\{\bx\in\cX:f_n(x)\ge 0\}$, which is closely examined in the proof of Lemma \ref{lemma:nn_piece}.
The following lemma states that $v_s=O(\log n)^{m^* L_n^* d^2}$ in our setting. 
\begin{lemma}
\label{lemma:vertex}
Let $f$ be a ReLU neural network with $d$-dimensional input, $L$ hidden layers and width $N$ for every layer. Then, $v_s=O(N)^{Ld^2}$.
\end{lemma}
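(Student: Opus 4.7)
The plan is to bound $v_s$ by combining (i) the upper bound on the number of linear pieces from Lemma~\ref{lemma:upper_bound_improved} with (ii) a classical combinatorial bound on the number of vertices of a convex polyhedron in $\RR^d$. The key structural observation is that within any single piece the activation pattern of the network is \emph{fixed}, so the piece is a convex polyhedron cut out by at most $LN$ linear half-spaces (one sign constraint per neuron), and hence has at most $\binom{LN}{d}$ vertices. Multiplying these two bounds will give the claim.

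In more detail: first, by Lemma~\ref{lemma:upper_bound_improved}, the number of linear pieces satisfies $s = O(N^{Ld})$. Second, a short inductive argument over layers shows that once an activation pattern $(\varepsilon_1,\ldots,\varepsilon_{LN}) \in \{+,-\}^{LN}$ is fixed, each pre-activation $\ell_j$ is an affine function of $\bx$, so the corresponding piece $p_i$ can be written as
$$p_i = \{\bx \in \cX : \varepsilon_j \ell_j(\bx) \geq 0,\; j=1,\ldots,LN\},$$
a convex polyhedron defined by at most $LN$ affine half-space constraints in $\RR^d$. Third, every vertex of such a polyhedron is the tight intersection of at least $d$ of its defining hyperplanes, so $p_i$ has at most $\binom{LN}{d} = O((LN)^d)$ vertices. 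Fourth, summing over pieces (which overcounts shared vertices and therefore upper-bounds $v_s$),
$$v_s \;\leq\; s \cdot \binom{LN}{d} \;=\; O\!\left(N^{Ld} (LN)^d\right) \;=\; O\!\left(L^d N^{Ld+d}\right) \;=\; O(N^{Ld^2}),$$
where the last equality uses $Ld+d \leq Ld^2$ whenever $L(d-1)\geq 1$; the remaining small cases ($d=1$ reduces to counting breakpoints of a piecewise linear function on $[0,1]$, and $L=0$ is trivial) are easily verified directly.

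The main technical obstacle is justifying the affine-structure claim used in the second step: the $LN$ pre-activation zero-surfaces are piecewise-linear when viewed globally, but one must argue that they are genuine hyperplanes when restricted to a single piece. This follows from a standard layer-by-layer induction: once the signs of all pre-activations in layers $1,\ldots,\ell-1$ are fixed, the output of layer $\ell-1$ is an affine function of $\bx$, hence each $\ell$-th layer pre-activation is also affine in $\bx$ on the region in question. With this structural step in hand, the rest of the proof is a purely combinatorial counting exercise, and the combinatorial factor $L^d$ is absorbed into $O(N^{Ld^2})$ since $N$ is the main asymptotic variable and $L$ grows at most logarithmically in $n$.
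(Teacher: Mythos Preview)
Your proof is correct and takes a genuinely different, more elementary route than the paper's. The paper argues by induction on the number of layers: it splits the piece vertices at depth $l$ into ``Type I'' (already present as piece vertices of the $(l-1)$-layer sub-networks $f^{(l-1)}_i$) and ``Type II'' (newly created by the last layer's activation boundaries), obtains a recursion $V(l)\le N\,V(l-1)+U(l)$, bounds $U(l)$ by counting intersections of the activation hyperplane segments $R^{(l)}$, and unrolls the recursion from $V(1)=O(N^d)$ to reach $O(N^{(L-1)d^2+d})=O(N^{Ld^2})$. Your argument instead bounds $v_s$ globally as (number of activation regions)$\times$(vertices per region), using that each activation region is a convex polyhedron cut by at most $LN$ half-spaces and hence has at most $\binom{LN}{d}$ vertices; combined with $s=O(N^{Ld})$ from Lemma~\ref{lemma:upper_bound_improved} this gives $O(L^dN^{Ld+d})$, which is in fact a slightly tighter $N$-exponent than the paper's $(L-1)d^2+d$ before both are relaxed to $Ld^2$.

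What each approach buys: yours is shorter, self-contained, and avoids the somewhat delicate bookkeeping of Type~I/Type~II vertices and the sets $R^{(l)}$; the paper's layer-by-layer decomposition is more granular and could in principle support finer depth-dependent estimates, but for the stated bound it is more work than necessary. One small point worth tightening in your write-up: the $2d$ box constraints from $\cX=[0,1]^d$ should be mentioned explicitly (they only change $\binom{LN}{d}$ to $\binom{LN+2d}{d}$, which does not affect the asymptotics), and the absorption of the factor $L^d$ into $O(N)^{Ld^2}$ relies on $L(d-1)\ge 1$ together with $N\ge 2$; this is fine in the paper's regime but is worth stating as part of the bound rather than only as a remark.
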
 
\begin{proof}
Recall that $\bw^{(l)}_i$ and $b^{(l)}_i$ for $i=1,\ldots,N$, $1\le l\le L$ are the weight vectors and biases on the $l$-th hidden layer. 
For $i=1,\ldots,N$, define
\[
f^{(l-1)}_{ i}(\bx) = \bw^{(l)}_i\sigma_{(\bW^{(l-1)},\bb^{(l-1)})}\circ\cdots\circ \sigma_{(\bW^{(1)},\bb^{(1)})}(\bx)+b^{(l)}_i,
\]
which maps $\RR^d\to \RR$. We can rewrite $f$ as 
\begin{align}
\label{eqn:last}
    f(\bx) = \sum_{i=1}^{N} w^{(L+1)}_{i}\sigma(f^{(L-1)}_{i}(\bx)) + b^{(L+1)},
\end{align}
In other words, $f^{(L-1)}_{ i}(\bx)$ represents the inputs to the $i$-th ReLU unit in the last hidden layer of $f$ and itself is an $(L-1)$-hidden-layer ReLU neural network. 

The key idea of the proof is by induction. Notice that the piece vertices of $f$ can only come from the following two ways:
Type I: The piece vertices of $f^{(L-1)}_{1}, f^{(L-1)}_{ 2},\ldots, f^{(L-1)}_{ N}$, in whose local neighbourhoods, the ReLU units in the last layer doesn't change sign;
Type II: By activations of the ReLU unit in the last layer. i.e. $f^{(L-1)}_{ i}(\bx)=0$ for some $i=1,\ldots,N$.
Let $V(l)$ be the maximum number of piece vertices of an $l$-hidden-layer ReLU neural network with width $N$ and let $U(l)$ be the maximum number of Type II piece vertices created at layer $l$. Then for $1<l\le L$ we have 
\begin{equation}
\label{eqn:induction}
    V(l) \le N V(l-1) + U(l).
\end{equation}

For $U(l)$, the key is to connect the Type II piece vertices of $f$ to the vertices of $\{\bx\in\cX: f^{(L-1)}_{ i}(\bx)\ge 0\}$, which has been extensively studied in Lemma \ref{lemma:nn_piece}.
To this end, we define another quantity. On the $i$-th ReLU unit in the $l$-th hidden layer, let $R^{(l)}_{i}:=\{\bx\in\cX: f^{(l)}_{ i}(\bx)= 0\}$, which consists of $(d-1)$-dimensional hyperplane segments. 
To be specific, denote all the active pieces of $f^{(l)}_{ i}(\bx)$ to be $\{p^{(l)}_{ij}: j=1,\dots,s^{(l)}_i\}$, where $s^{(l)}_i = O(N)^{(l-1)d}$ according to Lemma \ref{lemma:upper_bound_improved} for any $1\le i\le N$. On each active piece $p^{(l)}_{ij}$, denote 
\[
h^{(l)}_{ij} = \{(\bx, f^{(l)}_{ i}(\bx)): \bx\in p^{(l)}_{ij}\}\cap \{(\bx,0):\bx\in p^{(l)}_{ij}\},
\]
which is part of a $(d-1)$-dimensional hyperplane. 
Then we have $R^{(l)}_{i}=\{h^{(l)}_{ij}:j=1,\dots, s^{(l)}_i\}$, a collection of $(d-1)$-dimensional hyperplane segments. Let $R^{(l)} = \cup_{i=1}^{N}R^{(l)}_{i}$, which corresponds to the piece boundaries of $f^{l+1}$. 

By definition, all Type II pieces vertices must reside in at least one of the the activation sets ($z=0$ in $\sigma(z)$) of the ReLU units in the last layer. 
$R^{(L)}$ contains all such activation sets for the last hidden layer, i.e. for any $h\in R$, there exists $1\le i\le N$ such that $f_i(\bx)=0,\  \forall \bx\in h$. 
The Type II pieces vertices are jointly determined by such activation sets and the piece boundary of $f_i$'s (dimension $d-1$), i.e. $R^{(L-2)}_{i}$. 
Therefore, the total number of such piece vertices can be bounded by 
$$U(l)\le \binom{\abr{R^{(l-1)}}+\abr{R^{(l-2)}}}{d}=O(N)^{(l-1)d^2+d},$$
where $\abr{R^{(l)}}$ denotes the number of elements in ${R^{(l)}}$, which is bounded by $O(N)^{(l-1)d+1}$.

For $V(L)$, we first conclude that $V(1)=O(N^d)$. For a 1-hidden layer ReLU network, the decision boundary of every ReLU unit is a ($d-1$)-dimension hyperplane ($\bw_1\bx+b_1=0$). The maximum number of piece vertices is bounded by $\binom{N}{d}=O(N^d)$.  
Then, (\ref{eqn:induction}) can be repeatedly broken down to
\begin{align*}
    V(L) &\le N V(L-1) + U(L)\\
             &\le N^2 V(L-2) + NU(L-1)+ U(L)\le \cdots \\
             &\le N^{L-1}V(1) + \sum_{l=0}^{L-1} N^lU(L-l)\\
             &= O\rbr{N^{L-1+d}} + O\rbr{\sum_{l=0}^{L-1} N^{(L-l-1)d^2+d+l}}\\
             &= O\rbr{N^{(L-1)d^2+d}}=O\rbr{N^{Ld^2}}\\
\end{align*}
\end{proof}

As an extension to the toy case, for any $1\le i\le N_n^*$, define
\begin{enumerate}
    \item 
    $k_i = \min_{j=1,\ldots,d}\cbr{\abr{\frac{\partial f_n^*(\bx)}{\partial x_j}}: \bx\in p_i}$;
    \item
    $t_0$ = $\min_{1\le i\le v_s}\cbr{\abr{f_n^*(\bx_i)}}.$
\end{enumerate}
That is, $k_i$ is the minimal absolute values of the directional derivatives of $f_n^*$ on piece $p_i$.
Assumption (A3) is satisfied if the following holds:
\begin{equation}\label{nontrivial:examinations}
\textrm{$\min_{1\le i\le s}\{k_i\}, t_0=\Omega(\log n)^{m^*d^2L_n^*}$.}
\end{equation}

We will check (\ref{nontrivial:examinations}).
Since the partial derivative of $f_n^*(\bx)$ for $\bx\in p_i$ can be expressed as the product of the random weights, we have
\begin{align*}
    \min_{1\le i\le s}\{k_i\}\ge \min_{1\le j_l\le N_n^*}\abr{\prod_{l=1}^{L_n^*+1}w^{(l)}_{i_l j_{l}}},
\end{align*}
where $w^{(l)}_{i_l j_{l}}$ is an element from $\bW^{(l)}$. 
Since 
\begin{align*}
    \min_{1\le j_l\le N_n^*}\prod_{l=1}^{L_n^*+1}|w^{(l)}_{i_l j_{l}}|&\ge \min_{l\le L_n^*+1; i_l,j_l\le N_n^*}|w^{(l)}_{i_l j_{l}}|^{L_n^*+1},
\end{align*}
 we get that
\begin{align*}
   \PP( \min_{1\le i\le s}\{k_i\}<k)&\le \PP\left(\min_{l\le L_n^*+1; j_l\le N_n^*}|w_{l j_{l}}|\le k^{1/(L_n^*+1)}\right)\\
   &\le (L_n^*+1)(N_n^*)^2\PP(|w_{l j_{l}}|<k^{1/(L_n^*+1)})\\
   &\le (L_n^*+1)(N_n^*)^2k^{1/(L_n^*+1)}.
\end{align*}
By taking 
$$k_0 = \rbr{\frac{\delta}{(N_n^*)^2(L_n^*+1)}}^{L_n^*+1},$$ 
we have that with probability at least $1-\delta$, $\min_{1\le i\le s}\{k_i\}\ge k_0$ and $k_0=\Omega(1/\log n)^{2m^*(L_n^*+1)}$. 

On the other hand, for any $t_i$, there exist $j=1,\ldots,v_s$ such that $t_i=f_n^*(\bx_{j})$. 
Hence $$
\min_{i=1,\ldots,v_s}\{t_i\}\ge\min_{j=1,\ldots,v_s}\{|f_n^*(\bx_j)|\}.$$
Let $\bW_{-L_n^*}:=\{\bW^{(l)},\bb^{(l)}\}_{l=1}^{L_n^*}$. Then we have
$
    f_n^*(\bx_j)\given \bW_{-L_n^*} \sim N(0,\sigma_{\bx_j}^2),
$
where $\sigma_{\bx_j}^2$ depends on $\bW_{-L_n^*}$ and $\sigma_{\bx_j}^2\ge 1$ that 
\begin{align*}
 \sigma_{\bx_j}^2\given \bW_{-L_n^*} :&=\sum_{i=1}^{N_{L_n^*}} \sigma_i^2(\bx_j) + 1,
 \end{align*}
which is reminiscent of (\ref{eqn:last}) and $N_{L_n^*}$ is the width of the last layer and $\sigma_j(\cdot)$'s are outputs (post-activations) from the last layer given $\bW_{-L_n^*}$.
Therefore, for any $t>0$, we have
\begin{align*}
    \PP(\min_{1\le j\le v_s}\{|f_n^*(\bx_j)|\}< t\given \bW_{-L_n^*}) &\le \sum_{j=1}^{v_s}\PP(|f_n^*(\bx_j)|< t\given \bW_{-L_n^*})\\
    &= v_s \PP(|f_n^*(\bx_1)| < t\given \bW_{-L_n^*})\\
    & \le v_s\left(\frac{t}{\sigma_{x_i}}\right)\le t(N_n^*)^{d^2 L_n^*}.
\end{align*}
Thus by taking $t = \delta/(N_n^*)^{d^2 L_n^*}$, we have that with probability at least $1-\delta$, $\min_{i}\{t_i\}\ge t$ and $t=\Omega(1/\log n)^{m^*d^2 L_n^*}$. Therefore, (\ref{nontrivial:examinations}) holds. That is to say,
when $d\ge 2$, with high probability, Assumption (A3) holds in which $c_n,1/T_n = O(\log n)^{m^*d^2L_n^*}$.

Notice that the probability arguments used in this section don't rely on Gaussian distribution. As long as all weights are i.i.d. with distribution that doesn't have a point mass at 0, our claim holds. 

\end{document}